\newtheorem{theorem}{Theorem}
\newtheorem{lemma}[theorem]{Lemma}
\newtheorem{remark}[theorem]{Remark}
\newtheorem{proposition}[theorem]{Proposition}
\newtheorem{fact}[theorem]{Fact}
\theoremstyle{definition}
\newtheorem{definition}[theorem]{Definition}
\newtheorem{example}[theorem]{Example}
\newcommand{\defeq}{\mathrel{\mathop:}=}
\newcommand{\vect}[1]{\ensuremath{\mathbf{#1}}}
\newcommand{\mat}[1]{\ensuremath{\mathbf{#1}}}
\newcommand{\dd}{\mathrm{d}}
\newcommand{\grad}{\nabla}
\newcommand{\hess}{\nabla^2}
\newcommand{\pgrad}{\nabla}
\newcommand{\phess}{\nabla^2}
\newcommand{\argmin}{\mathop{\mathrm{argmin}}}
\newcommand{\argmax}{\mathop{\mathrm{argmax}}}
\newcommand{\iprod}[2]{\langle #1, #2 \rangle}
\newcommand{\norm}[1]{\|{#1} \|}
\newcommand{\diag}{\mathrm{diag}}
\renewcommand{\det}{\mathrm{det}}
\newcommand{\trans}{^{\top}}
\renewcommand{\Re}{\mathrm{Re}}
\newcommand{\cP}{\mathcal{P}}
\newcommand{\Z}{\mathbb{Z}}
\newcommand{\R}{\mathbb{R}}
\newcommand{\E}{\mathbb{E}}
\renewcommand{\Pr}{\mathbb{P}}
\newcommand{\A}{\mat{A}}
\newcommand{\B}{\mat{B}}
\newcommand{\C}{\mat{C}}
\renewcommand{\P}{\mat{P}}
\newcommand{\Q}{\mat{Q}}
\newcommand{\I}{\mat{I}}
\newcommand{\J}{\mat{J}}
\newcommand{\U}{\mat{U}}
\newcommand{\X}{\mat{X}}
\newcommand{\w}{\vect{w}}
\newcommand{\x}{\vect{x}}
\newcommand{\xtilde}{\vect{\widetilde{x}}}
\newcommand{\xhat}{\vect{\widehat{x}}}
\newcommand{\yhat}{\vect{\widehat{y}}}
\newcommand{\y}{\vect{y}}
\newcommand{\z}{\vect{z}}
\newcommand{\g}{\vect{g}}
\newcommand{\zero}{\vect{0}}
\newcommand{\cD}{\mathcal{D}}
\newcommand{\nn}{\nonumber}
\newcommand{\cX}{\mathcal{X}}
\newcommand{\cY}{\mathcal{Y}}
\newcommand{\phiell}{\phi_{{1}/{2\ell}}}
\DeclareMathAlphabet{\mathpzc}{OT1}{pzc}{m}{it}
\newcommand{\psne}{pure strategy Nash equilibrium\xspace}
\newcommand{\psnes}{pure strategy Nash equilibria\xspace}
\newcommand{\minimax}{minimax point\xspace}
\newcommand{\minimaxs}{minimax points\xspace}
\newcommand{\ttt}[1]{\mathpzc{#1}}
\newcommand{\ett}[1]{\mathpzc{#1}}
\newcommand{\bw}{\bar{\w}}
\newcommand{\dx}{\delta_\x}
\newcommand{\dy}{\delta_\y}
\newcommand{\cW}{\mathcal{W}}
\title{\textbf{What is Local Optimality in Nonconvex-Nonconcave Minimax Optimization?}}
\author{Chi Jin \\ University of California, Berkeley \\ \texttt{chijin@cs.berkeley.edu}
\and 
Praneeth Netrapalli \\ Microsoft Research, India \\ \texttt{praneeth@microsoft.com}
\and
Michael I. Jordan \\ University of California, Berkeley \\ \texttt{jordan@cs.berkeley.edu}}
\begin{document}

\maketitle

\begin{abstract}
Minimax optimization has found extensive application in modern machine learning, in 
settings such as generative adversarial networks (GANs), adversarial training and 
multi-agent reinforcement learning. As most of these applications involve continuous 
nonconvex-nonconcave formulations, a very basic question arises---``what is a proper 
definition of local optima?''

Most previous work answers this question using classical notions of equilibria 
from \emph{simultaneous} games, where the min-player and the max-player act 
simultaneously. In contrast, most applications in machine learning, including 
GANs and adversarial training, correspond to \emph{sequential} games, where the order of which player acts first is crucial (since minimax is in general not equal to maximin due to the
nonconvex-nonconcave nature of the problems). The main contribution of this paper 
is to propose a proper mathematical definition of local optimality for this sequential 
setting---\emph{local minimax}---as well as to present its properties and existence 
results. Finally, we establish a strong connection to a basic local search algorithm---gradient descent ascent (GDA)---under mild conditions, all stable limit points of GDA are exactly local minimax points up to some degenerate points.

\end{abstract}

\vspace{3ex}
% \clearpage

%!TEX root = main.tex

\section{Introduction}

Minimax optimization refers to problems of two agents---one agent tries to minimize the payoff function $f:\cX\times\cY \rightarrow \R$ while the other agent tries to maximize it. Such problems arise in a number of fields, including mathematics, biology, social science, and particularly economics~\citep{myerson2013game}. Due to the wide range of applications of these problems and their rich mathematical structure, they have been studied for several decades in the setting of zero-sum games~\citep{morgenstern1953theory}. In the last few years, minimax optimization has also found significant applications in machine learning, in settings such as generative adversarial networks (GAN)~\citep{goodfellow2014generative}, adversarial training~\citep{madry2017towards} and multi-agent reinforcement learning~\citep{omidshafiei2017deep}. In practice, these minimax problems are often solved using gradient-based algorithms, especially gradient descent ascent (GDA), an algorithm that alternates between a gradient descent step for $\x$ and some number of gradient ascent steps for $\y$. 
%\cnote{add some citation}

A well-known notion of optimality in this setting is that of a \emph{Nash equilibrium}---no player can benefit by changing strategy while the other player keeps hers unchanged. That is, a Nash equilibrium is a point $(\x^\star, \y^\star)$ where $\x^\star$ is a global minimum of $f(\cdot,\y^\star)$ and $\y^\star$ is a global maximum of $f(\x^\star, \cdot)$. In the convex-concave setting, it can be shown that an approximate Nash equilibrium can be  found efficiently by variants of GDA~\citep{bubeck2015convex,hazan2016introduction}. However, most of the minimax problems arising in modern machine learning applications do not have this simple convex-concave structure. Meanwhile, in the general nonconvex-nonconcave setting, one cannot expect to find Nash equilibria efficiently as the special case of nonconvex optimization is already NP-hard. This motivates the quest to find a local surrogate instead of a global optimal point. Most previous work \citep[e.g.,][]{daskalakis2018limit,mazumdar2018convergence,adolphs2018local} studied a notion of \emph{local Nash equilibrium} which replaces all the global minima or maxima in the definition of Nash equilibrium by their local counterparts.

The starting point of this paper is the observation that the notion of local Nash equilibrium is \emph{not} suitable for most machine learning applications of minimax optimization. In fact, the notion of Nash equilibrium (on which local Nash equilibrium is based),
%This notion of local Nash equilibria suffer from two major drawbacks. First, it is based on equilibrium notions in 
was developed in the context of \emph{simultaneous} games, and so it does not reflect the order between the min-player and the max-player. In contrast, most applications in machine learning, including GANs and adversarial training, correspond to \emph{sequential} games, where one player acts first and the other acts second. When $f$ is nonconvex-nonconcave, $\min_\x\max_\y f(\x, \y)$ is in general not equal to $\max_\y\min_\x f(\x, \y)$; the order of which player acts first is crucial for the problem.
%Second, it fails to characterize the asymptotic behavior of GDA. In fact, even though local Nash equilibria must be the stable limit points of GDA \citep{daskalakis2018limit}, the stable limit points of GDA could contain other non-trivial points whose game-theoretical meaning remain unclear. 
This motivates the question:

\begin{center}
\textbf{What is a good notion of local optimality in nonconvex-nonconcave minimax optimization?}
\end{center}

To answer this question, we start from the optimal solution $(\x^\star, \y^\star)$ for two-player sequential games $\min_{\x}\max_{\y}f(\x, \y)$, where $\y^\star$ is again the global maximum of $f(\x^\star, \cdot)$, but $\x^\star$ is now the global minimum of $\phi(\cdot)$, where $\phi(\x) \defeq \max_{\y \in \cY} f(\x, \y)$. We call these optimal solutions \emph{global minimax points}. The main contribution of this paper is to propose a proper mathematical definition of local optimality for this sequential setting---\emph{local minimax}---a local surrogate for the global minimax points. This paper also presents existence results for this notion of optimality, and establishes several important properties of local minimax points.  These properties naturally reflect the order of which player acts first, and alleviate many of the problematic issues of local Nash equilibria. Finally, the notion of local minimax provides a much stronger characterization of the asymptotic behavior of GDA---under certain idealized parameter settings, all stable limit points of GDA are exactly local minimax points up to some degenerate points. This provides, for the first time, a game-theoretic meaning for all of the stable limit points of GDA.

\subsection{Our contributions} 
To summarize, this paper makes the following contributions.
\begin{itemize}
\item We clarify the difference between several notions of global and local optimality in the minimax optimization literature, in terms of definitions, settings, and properties (see Section \ref{sec:prelims} and Appendix \ref{sec:mixed}).

\item We propose a new notion of local optimality---\emph{local minimax}---a proper mathematical definition of local optimality for the two-player sequential setting. We also present properties of local minimax points and establish existence results (see Section \ref{sec:minimax} and \ref{sec:prop}).

\item We establish a strong connection between local minimax points and the asymptotic behavior of GDA, and provide the first game-theoretic explanation of all stable limit points of GDA, up to some degenerate points (see Section \ref{sec:gda}).

\item We provide a general framework and an efficiency guarantee for a special case where the maximization $\max_\y f(\x, \y)$ can be solved efficiently for any fixed $\x$, or in general when an approximate max-oracle is present (see Appendix \ref{sec:oracle}). 
\end{itemize}

\subsection{Related work}\label{sec:related}
\textbf{Minimax optimization}:
% \praneeth{Related work on equilibrium notions beyond convex concave setting?}
%Motivated by game theory applications, von Neumann introduced minimax optimization in a seminal paper~\cite{neumann1928theorie}, where he proved the minimax theorem for bilinear setting.
Since the seminal paper of~\cite{neumann1928theorie}, notions of equilibria in games and their algorithmic computation have received wide attention. In terms of algorithmic computation, the vast majority of results focus on
%has been widely studied and applied in several fields. While initial results on designing algorithms focused on the bilinear setting~\cite{robinson1951iterative}, there has been substantial amount of work generalizing it 
the convex-concave setting~\citep{korpelevich1976extragradient,nemirovsky1978,nemirovski2004prox}. In the context of optimization, these problems have generally been studied in the setting of constrained convex optimization~\citep{bertsekas2014constrained}.
%Most related, Eric, and Daskalakis.
%\textbf{General minimax optimization in machine learning}:
% {\color{blue}Work on minimax problems beyond the convex-concave setting has only recently begun}.
Results beyond convex-concave setting are much more recent. \citet{rafique2018non} and \citet{nouiehed2019solving} consider nonconvex-but-concave minimax problems where for any $\x$, $f(\x,\cdot)$ is a concave function. In this case, they propose algorithms combining approximate maximization over $\y$ and a proximal gradient method for $\x$ to show convergence to stationary points.~\citet{lin2018solving} consider a special case of the nonconvex-nonconcave minimax problem, where the function $f(\cdot,\cdot)$ satisfies a variational inequality. In this setting, they consider a proximal algorithm that requires the solving of certain strong variational inequality problems in each step and show its convergence to stationary points.~\citet{hsieh2018finding} propose proximal methods that asymptotically converge to a \emph{mixed} Nash equilibrium; i.e., a distribution rather than a point.

The most closely related prior work is that of \citet{evtushenko1974some},
%\footnote{\color{blue}We thank Oleg Burdakov for bringing this paper to our attention.},
who proposed a concept of ``local'' solution that is similar to the local minimax points proposed in this paper. Note, however, that Evtushenko's ``local'' notion is not a truly local property (i.e., cannot be determined just based on the function values in a small neighborhood of the given point). As a consequence, Evtushenko's definition does not satisfy the first-order and second-order necessary conditions of local minimax points (Proposition \ref{prop:first_nece} and Proposition \ref{prop:second_nece}). We defer detailed comparison to Appendix~\ref{sec:relation_earlier}. Concurrent to our work,~\citet{fiez2019convergence} also recognizes the important difference between simultaneous games and sequential games in the machine learning context, and proposes a local notion referred to as \emph{Differential Stackelberg Equilibrium}, which implicitly assumes the Hessian for the second player to be nondegenerate,\footnote{The definition in \citet{fiez2019convergence} implicitly assumes that the best response $r: \cX \rightarrow \cY$ is well-defined by implicit equation $\grad_\y f(x, r(x)) = 0$, and $g(x) := f(x, r(x))$ is differentiable with respect to $x$, conditions which do not always hold even if $f$ is infinitely differentiable. These conditions are typically ensured by assuming $\hess_{\y\y} f \prec \zero$.} in which case it is equivalent to a \emph{strict} local minimax point (defined in Proposition \ref{prop:first_suff}). In contrast, we define a notion of local minimax point in a general setting, including the case in which Hessian matrices are degenerate. Finally, we consider GDA dynamics, which differ from the Stackelberg dynamics considered in~\citet{fiez2019convergence}.

\textbf{GDA dynamics}: There have been several lines of work studying GDA dynamics for minimax optimization. \citet{cherukuri2017saddle} investigate GDA dynamics under some strong conditions and show that the algorithm converges locally to Nash equilibria. \citet{heusel2017gans} and \citet{nagarajan2017gradient} similarly impose strong assumptions in the setting of the training of GANs and show that under these conditions Nash equilibria are stable fixed points of GDA. \citet{gidel2018negative} investigate the effect of simultaneous versus alternating gradient updates as well as the effect of momentum on the convergence in bilinear games.
The analyses most closely related to ours are~\citet{mazumdar2018convergence}
and \citet{daskalakis2018limit}. While~\citet{daskalakis2018limit} study minimax optimization (or zero-sum games),~\citet{mazumdar2018convergence} study a much more general setting of~non-zero-sum games and multi-player games. Both of these papers show that the stable limit points of GDA are not necessarily Nash equilibria.~\citet{adolphs2018local} and \citet{mazumdar2019finding} propose Hessian-based algorithms whose stable fixed points are exactly Nash equilibria. We note that all the papers in this setting use Nash equilibrium as the notion of goodness.

\textbf{Variational inequalities}: Variational inequalities are generalizations of minimax optimization problems. The appropriate generalization of convex-concave minimax problems are known as monotone variational inequalities which have applications in the study of differential equations~\citep{kinderlehrer1980introduction}. There is a large literature on the design of efficient algorithms for finding solutions to monotone variational inequalities~\citep{bruck1977weak,nemirovsky1981,nemirovski2004prox}.

\section{Preliminaries}\label{sec:prelims}

In this section, we first introduce our notation and then present definitions and basic results for simultaneous games, sequential games, and general game-theoretic dynamics that are relevant to our work. This paper will focus on two-player zero-sum games. For clarity, we restrict our attention to \emph{pure strategy} games in the main paper, that is, each player is restricted to play a single action as her strategy. We will discuss the relationships of these results to \emph{mixed strategy} games in Appendix \ref{sec:mixed}.

% \cnote{Two players, for clarity purpose, we restrict our attention to pure strategy games in the main paper, and discuss the mixed strategy in appendix.}

\textbf{Notation.} We use bold upper-case letters $\A, \B$ to denote matrices and bold lower-case letters $\x, \y$ to denote vectors. For vectors we use $\norm{\cdot}$ to denote the $\ell_2$-norm, and for matrices we use $\norm{\cdot}$ and $\rho(\cdot)$ to denote spectral (or operator) norm and spectral radius (largest absolute value of eigenvalues) respectively. Note that these two are in general different for asymmetric matrices.
For a function $f: \R^{d} \rightarrow \R$, we use $\grad f$ and $\hess f$ to denote its gradient and Hessian. For functions of two vector arguments, $f: \R^{d_1}\times \R^{d_2} \rightarrow \R$ , we use $\pgrad_\x f$, $\pgrad_\y f$ and $\phess_{\x\x} f$, $\phess_{\x\y} f$, $\phess_{\y\y} f$ to denote its partial gradient and partial Hessian. We also use $O(\cdot)$ and $o(\cdot)$ notation as follows: $f(\delta) = O(\delta)$ means $\limsup_{\delta \rightarrow 0} |f(\delta)/\delta| \le C $ for some large absolute constant $C$, and $g(\delta) = o(\delta)$ means $\lim_{\delta \rightarrow 0} |g(\delta)/\delta| =0$. For complex numbers, we use $\Re(\cdot)$ to denote its real part, and $|\cdot|$ to denote its modulus. We also use $\cP(\cdot)$, operating over a set, to denote the collection of all probability measures over the set.

\subsection{Simultaneous games}
A \emph{two-player zero-sum} game is a game of two players with a common payoff function $f:\cX\times\cY \rightarrow \R$.
The function $f$ maps the actions taken by both players $(\x, \y) \in \cX \times \cY$ to a real value, which represents the gain of $\y$-player as well as the loss of $\x$-player. We call $\y$ player, who tries to maximize the payoff function $f$, the \emph{max-player}, and $\x$-player the \emph{min-player}. In this paper, we focus on continuous payoff functions $f$, and assume $\cX \subset \R^{d_1}$ and $\cY \subset \R^{d_2}$.

In simultaneous games, the players act simultaneously. That is, each player chooses her action without knowledge of the action chosen by other player. A well-known notion of optimality in simultaneous games is the Nash equilibrium, where no player can benefit by changing strategy while the other player keeps hers unchanged. We formalize this as follows.

% The minimax problem~\eqref{eq:minimaxprob} has been extensively studied in the game theory literature under the name of ``zero-sum game.'' Here, two players play a competitive game with the first player playing $\x \in \cX$, and then the second player playing $\y \in \cY$. $f(\x, \y)$ is the payoff function which represents the value lost by the first player (which is in turn gained by the second player). In this setting the standard notion of equilibrium is the following:
\begin{definition}\label{def:pureNash}
Point $(\x^\star, \y^\star)$ is a \textbf{Nash equilibrium} of function $f$, if
for any $(\x, \y)$ in $\cX \times \cY$:
\begin{equation*}
f(\x^\star, \y) \le f(\x^\star, \y^\star) \le f(\x, \y^\star).
\end{equation*}
\end{definition}
That is, $\x^\star$ is a global minimum of $f(\cdot, \y^\star)$ which keeps the action of the $\y$-player unchanged, and $\y^\star$ is a global maximum of $f(\x^\star, \cdot)$ which keeps the action of the $\x$-player unchanged.

Classical work typically focuses on finding Nash equilibria in the setting in which the payoff function $f$ is \emph{convex-concave} (i.e., $f(\cdot, \y)$ is convex for all $\y\in \cY$, and $f(\x, \cdot)$ is concave for all $\x\in \cX$) \citep{bubeck2015convex}. However, in most modern applications in machine learning the payoff $f$ is a \emph{nonconvex-nonconcave} function, and the problem of finding global Nash equilibrium is NP-hard in general. This has led to the study of the following local alternative~\cite[see, e.g.,][]{mazumdar2018convergence,daskalakis2018limit}:
\begin{definition}\label{def:localNash}
Point $(\x^\star, \y^\star)$ is a \textbf{local Nash equilirium} of $f$ if there exists $\delta>0$ such that
for any $(\x, \y)$ satisfying $\norm{\x - \x^\star} \le \delta$ and $\norm{\y - \y^\star} \le \delta$ we have:
\begin{equation*}
f(\x^\star, \y) \le f(\x^\star, \y^\star) \le f(\x, \y^\star).
\end{equation*}
\end{definition}

Local Nash equilibria can be characterized in terms of first-order and second-order conditions.

\begin{proposition}[First-order Necessary Condition]\label{prop:firstNash}
Assuming $f$ is differentiable, 
any local Nash equilibrium satisfies $\grad_\x f(\x, \y) = 0$ and $\grad_\y f(\x, \y) = 0$.
\end{proposition}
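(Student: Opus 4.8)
The plan is to reduce the two-variable local Nash condition in Definition~\ref{def:localNash} to two separate single-variable local optimality statements, each of which is then dispatched by the classical first-order (Fermat) condition for an unconstrained extremum. Throughout I would assume $(\x^\star, \y^\star)$ lies in the interior of $\cX \times \cY$, as is implicit in the definition: one is free to perturb in every direction within the $\delta$-ball $\norm{\x - \x^\star} \le \delta$, $\norm{\y - \y^\star} \le \delta$, so the relevant extrema are genuinely unconstrained.

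First I would freeze the max-player's action at $\y^\star$ and define $g(\x) \defeq f(\x, \y^\star)$. The right-hand inequality $f(\x^\star, \y^\star) \le f(\x, \y^\star)$, holding for all $\x$ with $\norm{\x - \x^\star} \le \delta$, says precisely that $\x^\star$ is a local minimizer of $g$. Since $f$ is differentiable, so is $g$, and Fermat's theorem yields $\grad g(\x^\star) = \grad_\x f(\x^\star, \y^\star) = 0$. Symmetrically, I would freeze the min-player's action at $\x^\star$ and set $h(\y) \defeq f(\x^\star, \y)$. The left-hand inequality $f(\x^\star, \y) \le f(\x^\star, \y^\star)$ for all $\y$ with $\norm{\y - \y^\star} \le \delta$ makes $\y^\star$ a local maximizer of $h$, equivalently a local minimizer of $-h$; applying Fermat's theorem again gives $\grad h(\y^\star) = \grad_\y f(\x^\star, \y^\star) = 0$. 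Together these are exactly the two claimed stationarity conditions.

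If a self-contained justification of the Fermat step is desired rather than an appeal to the named theorem, I would argue directly through directional derivatives. For any unit vector $\u$ and small $t > 0$, differentiability of $g$ gives $g(\x^\star + t\u) = g(\x^\star) + t\,\iprod{\grad g(\x^\star)}{\u} + o(t)$; local minimality forces the right-hand side to be at least $g(\x^\star)$ for all small $t$, hence $\iprod{\grad g(\x^\star)}{\u} \ge 0$ for every direction $\u$. Taking both $\u$ and $-\u$ then forces $\iprod{\grad g(\x^\star)}{\u} = 0$ for all $\u$, i.e. $\grad g(\x^\star) = 0$. The identical computation, with the inequality reversed, handles $h$ at its local maximum.

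I do not expect any genuine obstacle here, as the result is an elementary consequence of differentiability plus one-sided optimality in each block variable. The only point requiring care is the interiority/unconstrained assumption: at a boundary point of $\cX$ or $\cY$ the correct first-order statement would be a variational inequality (the gradient lying in the normal cone) rather than a vanishing gradient, so the proposition as stated should be read for interior local Nash equilibria, which is the setting relevant to the rest of the paper.
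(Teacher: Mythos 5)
Your argument is correct and is exactly the standard one: the two inequalities in Definition~\ref{def:localNash} decouple into $\x^\star$ being an unconstrained local minimizer of $f(\cdot,\y^\star)$ and $\y^\star$ an unconstrained local maximizer of $f(\x^\star,\cdot)$, and Fermat's condition (or the directional-derivative computation you give) yields the vanishing of both partial gradients. The paper states this proposition without proof, treating it as classical, so your write-up supplies precisely the omitted argument; your remark that the statement tacitly assumes interiority (otherwise only a normal-cone/variational-inequality condition holds) is an accurate and worthwhile caveat.
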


\begin{proposition}[Second-order Necessary Condition] \label{prop:secondNash}
Assuming $f$ is twice-differentiable, 
any local Nash equilibrium satisfies $\phess_{\y\y} f(\x, \y) \preceq \zero, \text{~and~} \phess_{\x\x} f(\x, \y) \succeq \zero$.
\end{proposition}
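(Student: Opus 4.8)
The plan is to reduce the claim to the classical second-order necessary condition for an unconstrained local extremum of a twice-differentiable function, applied separately to each player's sub-problem. The essential point is that Definition \ref{def:localNash} lets us perturb one player's action while holding the other's fixed, which decouples the two inequalities entirely and explains why only the diagonal Hessian blocks appear.

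First I would fix $\y = \y^\star$ and define $g(\x) \defeq f(\x, \y^\star)$. The right-hand inequality in Definition \ref{def:localNash} states that $g(\x^\star) \le g(\x)$ for all $\x$ with $\norm{\x - \x^\star} \le \delta$; that is, $\x^\star$ is a local minimizer of $g$, and $\hess g = \phess_{\x\x} f(\cdot, \y^\star)$. Symmetrically, fixing $\x = \x^\star$ and setting $h(\y) \defeq f(\x^\star, \y)$, the left-hand inequality says $\y^\star$ is a local maximizer of $h$, with $\hess h = \phess_{\y\y} f(\x^\star, \cdot)$. It therefore suffices to establish the standard fact: a twice-differentiable function with a local minimum at an interior point has a positive-semidefinite Hessian there (and, applying this to $-h$, a local maximum forces a negative-semidefinite Hessian).

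To prove that fact for $g$, fix an arbitrary direction $\u$ and consider $t \mapsto g(\x^\star + t\u)$ for $|t|$ small enough that $\x^\star + t\u$ stays in the $\delta$-ball. A second-order Taylor expansion gives
\begin{equation*}
g(\x^\star + t\u) = g(\x^\star) + t\, \grad g(\x^\star)\trans \u + \frac{t^2}{2}\, \u\trans \hess g(\x^\star)\, \u + o(t^2).
\end{equation*}
By Proposition \ref{prop:firstNash} the first-order term vanishes, $\grad g(\x^\star) = \zero$, so local minimality $g(\x^\star + t\u) \ge g(\x^\star)$ yields $\frac{t^2}{2}\,\u\trans \hess g(\x^\star)\,\u + o(t^2) \ge 0$; dividing by $t^2$ and letting $t \to 0$ gives $\u\trans \hess g(\x^\star)\,\u \ge 0$. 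Since $\u$ is arbitrary, $\phess_{\x\x} f(\x^\star, \y^\star) = \hess g(\x^\star) \succeq \zero$, and the same argument applied to $-h$ gives $\phess_{\y\y} f(\x^\star, \y^\star) \preceq \zero$.

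I expect no genuine obstacle here beyond bookkeeping: the content is entirely the classical calculus characterization of local extrema. The only subtlety worth flagging is that the block structure—varying $\x$ and $\y$ one at a time—means the cross term $\phess_{\x\y} f$ never enters, which is precisely why the necessary condition constrains only the diagonal Hessian blocks and says nothing about the off-diagonal coupling.
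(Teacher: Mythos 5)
Your proof is correct and is exactly the classical argument the paper implicitly relies on: the paper states this proposition without proof, treating it as the standard second-order necessary condition for the two decoupled single-player subproblems obtained by freezing the other player's action. Your reduction to a local minimum of $g(\x)=f(\x,\y^\star)$ and a local maximum of $h(\y)=f(\x^\star,\y)$, followed by the one-dimensional Taylor expansion along an arbitrary direction, is precisely the intended (and complete) justification.
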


\begin{proposition}[Second-order Sufficient Condition] Assuming $f$ is twice-differentiable, 
any stationary point (i.e., $\nabla f = 0$) satisfying the following condition is a local Nash equilibrium:
\begin{equation} \label{eq:strict_Nash}
\phess_{\y\y} f(\x, \y) \prec \zero, \text{~and~} \phess_{\x\x} f(\x, \y) \succ \zero.
\end{equation}
We also call a stationary point satisfying~\eqref{eq:strict_Nash} a \textbf{strict local Nash equilibrium}.
\end{proposition}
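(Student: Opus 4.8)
The plan is to exploit the fact that the defining inequalities of a local Nash equilibrium only constrain $f$ along the two coordinate ``slices'' through $(\x^\star,\y^\star)$. The condition $f(\x^\star,\y)\le f(\x^\star,\y^\star)$ asks exactly that $\y^\star$ be a local maximizer of the single function $\y\mapsto f(\x^\star,\y)$ (with the first argument frozen at $\x^\star$), while $f(\x^\star,\y^\star)\le f(\x,\y^\star)$ asks that $\x^\star$ be a local minimizer of $\x\mapsto f(\x,\y^\star)$ (with the second argument frozen at $\y^\star$). Crucially, these two requirements are completely decoupled---neither involves a simultaneous perturbation of both arguments---so the proposition reduces to two independent instances of the classical second-order sufficient condition for an unconstrained strict local extremum.

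For the min-half, I would set $h(\x)\defeq f(\x,\y^\star)$. Stationarity gives $\grad h(\x^\star)=\grad_\x f(\x^\star,\y^\star)=\zero$, and by hypothesis $\hess h(\x^\star)=\phess_{\x\x}f(\x^\star,\y^\star)\succ\zero$; let $\lambda>0$ denote its smallest eigenvalue. Expanding $h$ to second order about $\x^\star$ yields $h(\x)=h(\x^\star)+\tfrac12(\x-\x^\star)\trans\hess h(\x^\star)(\x-\x^\star)+o(\norm{\x-\x^\star}^2)\ge h(\x^\star)+\tfrac{\lambda}{2}\norm{\x-\x^\star}^2+o(\norm{\x-\x^\star}^2)$. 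Hence there exists $\delta_1>0$ such that the strictly positive quadratic term dominates the remainder whenever $\norm{\x-\x^\star}\le\delta_1$, giving $f(\x,\y^\star)\ge f(\x^\star,\y^\star)$ on that ball. The max-half is entirely symmetric: with $g(\y)\defeq f(\x^\star,\y)$ one has $\grad g(\y^\star)=\zero$ and $\hess g(\y^\star)=\phess_{\y\y}f(\x^\star,\y^\star)\prec\zero$, so the analogous expansion produces $\delta_2>0$ with $f(\x^\star,\y)\le f(\x^\star,\y^\star)$ whenever $\norm{\y-\y^\star}\le\delta_2$. Taking $\delta=\min(\delta_1,\delta_2)$ then verifies Definition~\ref{def:localNash}.

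There is essentially no serious obstacle here; the only point requiring care is that $f$ is assumed merely twice-differentiable rather than $C^2$, so I would invoke the Peano (infinitesimal) form of Taylor's theorem rather than the Lagrange remainder. This is precisely why the argument uses sign-definiteness of the Hessian only at the base point $(\x^\star,\y^\star)$: the $o(\norm{\cdot}^2)$ remainder is absorbed by the strictly definite quadratic form, and no appeal to continuity of $\phess_{\x\x}f$ or $\phess_{\y\y}f$ throughout a neighborhood is needed. It is worth flagging---to contrast with the harder local-minimax results to come---that the strictness of the Hessian conditions is exactly what upgrades the weak inequalities in the definition to strict ones, and that the convenient decoupling used here is special to the simultaneous (Nash) setting; it will no longer be available once the sequential min-max order couples the two players.
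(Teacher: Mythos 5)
Your proof is correct. The paper states this proposition without proof (it is recorded as a classical fact in the preliminaries), and your argument---decoupling the two defining inequalities of Definition~\ref{def:localNash} into two independent unconstrained second-order sufficient conditions along the coordinate slices, and absorbing the Peano remainder into the strictly definite quadratic form---is exactly the standard argument the paper implicitly relies on.
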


One significant drawback of considering local or global Nash equilibria in nonconvex-nonconcave settings is that they may not exist even for simple well-behaved functions.
\begin{restatable}{proposition}{PROPnoexist}
There exists a twice-differentiable function $f$, where \psnes (either local or global) do not exist.
\end{restatable}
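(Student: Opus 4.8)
The plan is to exhibit a single explicit, infinitely differentiable, bounded function on $\R^2$ whose stationary points \emph{all} violate the second-order necessary condition for a local Nash equilibrium; since every global Nash equilibrium is in particular a local one, ruling out local Nash equilibria rules out global ones as well. A clean candidate is $f(x,y)=\sin(x+y)$ with $\cX=\cY=\R$ and $d_1=d_2=1$ (one may instead restrict to a large open box if a bounded domain is preferred, with $x+y$ ranging over an interval long enough to contain a stationary value).

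First I would invoke the necessary conditions already proved. By Proposition~\ref{prop:firstNash} any local Nash equilibrium is a stationary point, and by Proposition~\ref{prop:secondNash} it must satisfy $\phess_{\x\x}f\succeq\zero$ and $\phess_{\y\y}f\preceq\zero$, which in this one-dimensional case read $f_{xx}\ge 0$ and $f_{yy}\le 0$. Hence it suffices to show that no stationary point of $f$ satisfies both sign conditions simultaneously. Next I would compute derivatives: $\pgrad_x f=\pgrad_y f=\cos(x+y)$, so the stationary points are exactly those with $\cos(x+y)=0$, i.e.\ $x+y\in\{\pi/2+k\pi:k\in\Z\}$. At every such point all second partials coincide, $f_{xx}=f_{yy}=f_{xy}=-\sin(x+y)$, and in particular $f_{xx}=f_{yy}=-\sin(x+y)=\pm 1\neq 0$. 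Because $f_{xx}$ and $f_{yy}$ are equal and nonzero, they cannot carry the opposite signs demanded above: at least one of $f_{xx}\ge 0$, $f_{yy}\le 0$ must fail. Thus no stationary point is a local Nash equilibrium, so $f$ admits neither a local nor a global Nash equilibrium.

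The one genuine design choice---and the closest thing to an obstacle---is to pick a function whose stationary set is nonempty yet entirely disqualified by the sign test, rather than a function with no stationary points at all; the latter would make the claim hold trivially but could be dismissed as pathological or unbounded, weakening the motivating message that even ``simple well-behaved'' functions lack Nash equilibria. The coupling through $x+y$ is exactly what forces $f_{xx}$ and $f_{yy}$ to coincide, which is what makes the opposing-sign requirement impossible to meet, while periodicity keeps the example smooth and bounded and thus immune to any ``ill-behaved'' objection. If one wanted to avoid appealing to the second-order proposition, an alternative would be to verify directly that for each stationary point $(x^\star,y^\star)$ the map $y\mapsto f(x^\star,y)$ fails to have a local maximum there (or $x\mapsto f(x,y^\star)$ fails to have a local minimum), but routing through Proposition~\ref{prop:secondNash} is shorter.
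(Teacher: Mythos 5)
Your proposal is correct and uses exactly the same example and argument as the paper: the function $f(x,y)=\sin(x+y)$, whose stationary points all have $f_{xx}=f_{yy}=\pm 1$, so the opposing-sign requirement of Proposition~\ref{prop:secondNash} can never be met. The reasoning matches the paper's proof essentially line for line.
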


\begin{proof}
Consider a two-dimensional function $f(x, y) = \sin (x+y)$. We have 
$\grad f(x, y) = (\cos(x+y), \cos(x+y))$.
Assuming $(x, y)$ is a local \psne, by Proposition \ref{prop:firstNash} it must also be a stationary point; that is, $x+y = (k+1/2)\pi$ for $k\in\Z$.
It is easy to verify, for odd $k$, $\phess_{xx} f (x, y)= \phess_{yy} f (x, y) = 1 > 0$; for even $k$, 
$\phess_{xx} f (x, y)= \phess_{yy} f (x, y) = -1 < 0$. By Proposition \ref{prop:secondNash}, none of the stationary points is a local \psne.
\end{proof}

% Apart from the existence issue, the property that $\x^\star$ is optimal for $f(\cdot, \y^\star)$ is not meaningful in applications such as adversarial training, which translates to the property that the classifier needs to be optimal with respect to a fixed corruption.

\subsection{Sequential games}

In sequential games, there is an intrinsic order such that one player chooses her action before the other one chooses hers. Importantly, the second player can observe the action taken by the first player, and adjust her action accordingly. We would like to emphasize that although many recent works have focused on the simultaneous setting, GAN and adversarial training are in fact sequential games in their standard formulations.

\begin{example}[Adversarial Training]
The target is to train a robust classifier that is robust to adversarial noise. 
The first player picks a classifier, and the second player then chooses adversarial noise to undermine the performance of the chosen classifier.
\end{example}

\begin{example}[Generative Adversarial Network (GAN)]
The target is to train a generator which can generate samples that are similar to real samples in the world.
The first player picks a generator, and the second player then picks a discriminator that is capable of telling the difference between real samples and the samples generated by the chosen generator.
\end{example}

Without loss of generality, in this paper we assume that the min-player is the first player, and the max-player is second. The objective of this game corresponds to following minimax optimization problem:
\begin{equation}\label{eq:minimaxprob}
\min_{\x\in \cX}\max_{\y\in \cY} f(\x, \y).
\end{equation}
where $\min_{\x}\max_{\y}$ already reflects the intrinsic order of the sequential game. While this order does not matter for convex-concave $f(\cdot,\cdot)$, as the minimax theorem~\citep{sion1958general} guarantees that $\min_{\x\in \cX}\max_{\y\in \cY} f(\x, \y) = \max_{\y\in \cY} \min_{\x\in \cX} f(\x, \y)$, for a general nonconvex-nonconcave $f(\cdot,\cdot)$, we have:
\begin{equation*}
\min_{\x\in \cX}\max_{\y\in \cY} f(\x, \y) \neq \max_{\y\in \cY}\min_{\x\in \cX} f(\x, \y).
\end{equation*}
This means that the choice of which player goes first plays an important role in the solution. 

The global solution for Eq.~\eqref{eq:minimaxprob}---a \emph{Stackelberg equilibrium}---is for the second player to always play the maximizer of $f(\x, \cdot)$ given the action $\x$ taken by the first player, and achieve the maximum value $\phi(\x) \defeq \max_{y \in \cY} f(\x, \y)$. Then, the optimal strategy for the first player is to minimize $\phi(\x)$, which gives following definition of global optimality. In this paper, we also call it a global minimax point.

\begin{definition}\label{def:globalminimax}
$(\x^\star, \y^\star)$ is a \textbf{global \minimax}, if for any $(\x, \y)$ in $\cX \times \cY$ we have:
\begin{equation*}
f(\x^\star, \y) \le f(\x^\star, \y^\star) \le \max_{\y'\in \cY} f(\x, \y').
\end{equation*}
\end{definition}

\begin{remark}\label{rm:global}
Equivalently, $(\x^\star, \y^\star)$ is a global \minimax if and only if $\y^\star$ is a global maximum of $f(\x^\star, \cdot)$, and $\x^\star$ is a global minimum of $\phi(\cdot)$ where $\phi(\x) \defeq \max_{y \in \cY} f(\x, \y)$.
\end{remark}
Unlike Nash equilibria, a global \minimax always exists even if $f$ is nonconvex-nonconcave, due to the extreme-value theorem. 
% \paragraph{Global minimax point} On the other hand, global \minimaxs, a simple but less mentioned notion of optimality, always exist.
\begin{proposition}\label{prop:minimaxexist}
Assume that function $f: \cX \times \cY \rightarrow \R$ is continuous, and assume that $\cX \subset \R^{d_1}$, $\cY \subset \R^{d_2}$ are compact. Then the global \minimax of $f$ always exists.
\end{proposition}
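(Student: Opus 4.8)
The plan is to reduce the claim to two applications of the extreme-value theorem by working through the characterization in Remark~\ref{rm:global}, which says $(\x^\star,\y^\star)$ is a global \minimax exactly when $\x^\star$ minimizes $\phi(\x)\defeq \max_{\y\in\cY}f(\x,\y)$ over $\cX$ and $\y^\star$ maximizes $f(\x^\star,\cdot)$ over $\cY$. First I would check that $\phi$ is well-defined: for each fixed $\x$, the map $\y \mapsto f(\x,\y)$ is continuous on the compact set $\cY$, so by the extreme-value theorem the maximum is attained and $\phi(\x)$ is a finite real number.

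The main work---and the step I expect to be the only real obstacle---is showing that $\phi$ is continuous on $\cX$. Since $\cX\times\cY$ is compact and $f$ is continuous, $f$ is uniformly continuous there. I would exploit this through the standard envelope estimate
\begin{equation*}
|\phi(\x_1) - \phi(\x_2)| \le \sup_{\y\in\cY}|f(\x_1,\y) - f(\x_2,\y)|,
\end{equation*}
which is obtained by taking $\y_1 \in \argmax_{\y} f(\x_1,\y)$, so that $\phi(\x_1) - \phi(\x_2) = f(\x_1,\y_1) - \phi(\x_2) \le f(\x_1,\y_1) - f(\x_2,\y_1)$, and then arguing symmetrically with the roles of $\x_1,\x_2$ reversed. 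Uniform continuity of $f$ forces the right-hand side to vanish as $\norm{\x_1 - \x_2}\rightarrow 0$, yielding (uniform) continuity of $\phi$.

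With continuity of $\phi$ in hand, the rest is immediate. Because $\cX$ is compact and $\phi$ is continuous, a second application of the extreme-value theorem gives a global minimizer $\x^\star \in \argmin_{\x\in\cX}\phi(\x)$. Then I would choose any $\y^\star \in \argmax_{\y\in\cY} f(\x^\star,\y)$, which exists by the same reasoning as in the first step, so that $f(\x^\star,\y^\star)=\phi(\x^\star)$. Finally I would verify the defining inequality of Definition~\ref{def:globalminimax}: for arbitrary $(\x,\y)\in\cX\times\cY$,
\begin{equation*}
f(\x^\star,\y) \le \phi(\x^\star) = f(\x^\star,\y^\star) = \phi(\x^\star) \le \phi(\x) = \max_{\y'\in\cY}f(\x,\y'),
\end{equation*}
where the first inequality uses that $\y^\star$ maximizes $f(\x^\star,\cdot)$ and the last uses that $\x^\star$ minimizes $\phi$. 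This exhibits a global \minimax and completes the argument.
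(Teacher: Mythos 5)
Your proof is correct and follows exactly the route the paper intends: the paper offers no written proof beyond the remark that existence follows ``due to the extreme-value theorem,'' and your argument is the standard fleshed-out version of that appeal, with the continuity of $\phi$ (via the envelope estimate and uniform continuity of $f$ on the compact product) correctly identified and handled as the only nontrivial step. No gaps.
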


Finding global minimax points of nonconvex-nonconcave functions is also NP-hard in general. A practical solution is to find a local surrogate. Unfortunately, to the best of our knowledge, there is no formal definition of a local notion of global minimaxity in the literature.

Finally, we note that there is one easier case where the approximate maximization, $\max_{y \in \cY} f(\x, \y)$, can be solved efficiently for any $\x \in \cX$. Then, Eq.~\eqref{eq:minimaxprob} reduces to optimizing $\phi(\cdot)$---a nonsmooth nonconvex function, where efficient guarantees can be obtained (see Appendix \ref{sec:oracle}).

% For general case where maximization can not be solved efficiently, a common solution is to relax the global optimal notion and find a 

% Proposition \ref{prop:minimaxexist} is a simple consequence of the extreme-value theorem. Compared to pure strategy Nash equilibria, the notion of global minimax is typically important in the setting where our goal is to find the best $\x^\star$ subject to adversarial perturbation of $\y$ rather than an $\x$ which is optimal for a fixed $\y^\star$. Indeed, both GANs and adversarial training actually fall in this category, where our primary goal is to find the best generator subject to an adversarial discriminator, and to find the best robust classifier subject to adversarial corruption.

% \cnote{Max oracle}

\subsection{Dynamical systems}
One of the most popular algorithms for solving minimax problems is Gradient Descent Ascent (GDA). We outline the algorithm in Algorithm~\ref{algo:GDA}, with updates written in a general form, $\z_{t+1} = \w(\z_t)$, where $\w: \R^d \rightarrow \R^d$ is a vector function. One notable distinction from standard gradient descent is that $\w(\cdot)$ may not be a gradient field (i.e., the gradient of a scalar function $\phi(\cdot)$), and so the Jacobian matrix $\J \defeq \partial \w /\partial \z$ may be asymmetric. This results in the possibility of the dynamics $\z_{t+1} = \w(\z_t)$ converging to a limit cycle instead of a single point. Nevertheless, we can still define fixed points and stability for general dynamics.

\begin{definition}
Point $\z^\star$ is a \textbf{fixed point} for dynamical system $\w$ if $\z^\star = \w(\z^\star)$.
\end{definition}

\begin{definition}[Linear Stability]
For a differentiable dynamical system $\w$, a fixed point $\z^\star$ is a \textbf{linearly stable point} of $\w$ if its Jacobian matrix $\J(\z^\star) \defeq (\partial \w /\partial \z)(\z^\star)$ has spectral radius $\rho(\J(\z^\star)) \le 1$. We also say that a fixed point $\z^\star$ is a \textbf{strict linearly stable point} if $\rho(\J(\z^\star)) < 1$ and a \textbf{strict linearly unstable point} if $\rho(\J(\z^\star)) > 1$.
\end{definition}
Intuitively, linear stability captures the idea that under the dynamics $\z_{t+1} = \w(\z_t)$ a flow that starts at point that is infinitesimally close to $\z^\star$ will remain in a small neighborhood around $\z^\star$.

% \input{optimality}
%!TEX root = main.tex

\section{Main Results}
\label{sec:results}
In the previous section, we pointed out that while many modern applications are in fact sequential games, the problem of finding their optima---global minimax points---is NP-hard in general.
We now turn to our main results, which provide ways to circumvent this NP-hardness challenge. In Section~\ref{sec:minimax}, we develop a formal notion of local surrogacy for global minimax points which we refer to as \emph{local minimax points}.  In Section~\ref{sec:prop} we study their properties and existence. Finally, in Section~\ref{sec:gda}, we establish a close relationship between stable fixed points of GDA and local minimax points.
% In Section~\ref{sec:oracle}, we study the behavior of gradient descent with an approximate maximization oracle for $\y$ and show that it converges to approximately stationary points of $\max_{\y} f(\cdot,\y)$.

\subsection{Local minimax points}
\label{sec:minimax}
%Finding global minimum for general nonconvex function is NP-hard, and local search algorithms such as gradient descent is widely-used in practice to find the local surrogate---local minimum---in trade for computation efficiency. Finding global minimax solution is clearly harder than nonconvex optimization, therefore it is crucial to develop the notions of local surrogates for minimax problems, and studies the efficiency of local search algorithms to find them.

While most previous work~\citep{daskalakis2018limit,mazumdar2018convergence} has focused on local Nash equilibria (Definition~\ref{def:localNash}), which are local surrogates for pure strategy Nash equilibria for simultaneous games, we propose a new notion---\emph{local minimax}---as a natural local surrogate for global minimaxity (Definition \ref{def:globalminimax}) for sequential games. To the best of our knowledge, this is the first proper mathematical definition of local optimality for the two-player sequential setting.

% In this section, we propose a new notion called \emph{local minimax}, a natural local surrogate for global minimax. To the best of our knowledge, this notion has not been considered before. Most of previous work~\cite{daskalakis2018limit,mazumdar2018convergence} instead focuses on local Nash equilibrium (Definition~\ref{def:localNash}) which is a local surrogate for pure strategy Nash equilibrium.

\begin{figure}[t]
    \centering
    \includegraphics[width=0.45\textwidth, trim=0 0 0 0, clip]{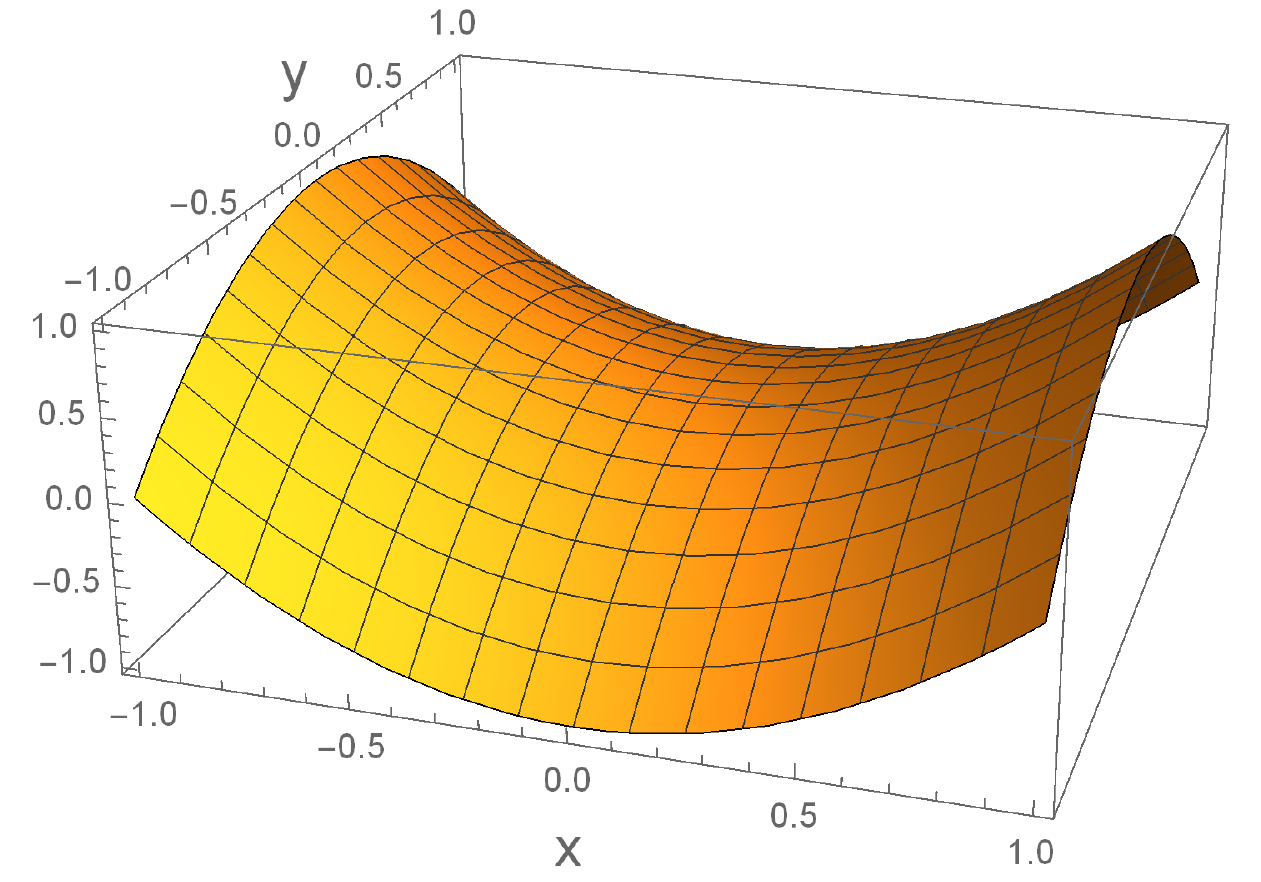}
    \includegraphics[width=0.45\textwidth, trim=0 0 0 0, clip]{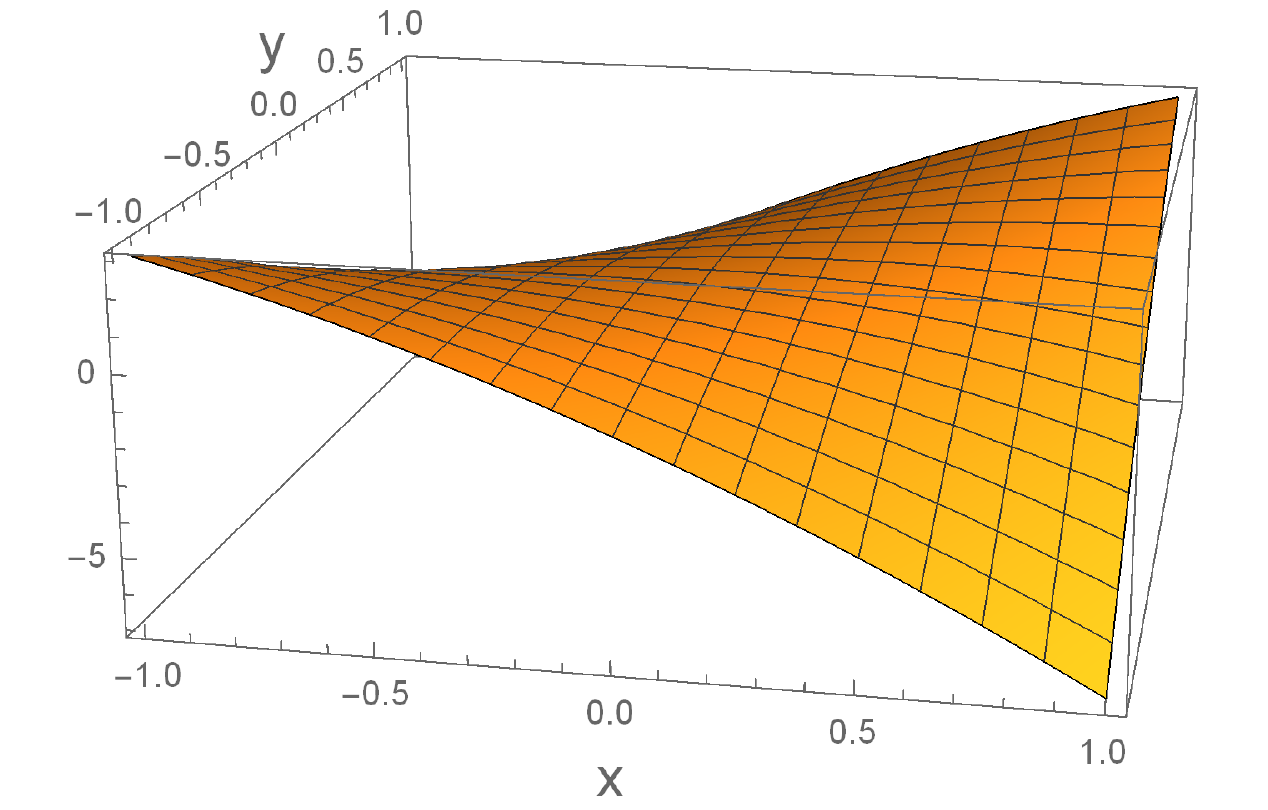}
    \caption{\textbf{Left:} $f(x, y) = x^2 - y^2$ where $(0,0)$ is both local Nash and local minimax. \textbf{Right:} $f(x, y) = -x^2 + 5xy - y^2$ where $(0,0)$ is not local Nash but local minimax with $h(\delta)=\delta$.}
    \label{fig:NashvsMinimax}
\end{figure}

\begin{restatable}{definition}{DEFlocal}\label{def:localminimax}
A point $(\x^\star, \y^\star)$ is said to be a \textbf{local \minimax} of $f$, if there exists $\delta_0>0$ and a function $h$ satisfying $h(\delta) \rightarrow 0$ as $\delta\rightarrow 0$, such that for any $\delta \in (0, \delta_0]$, and any $(\x, \y)$ satisfying $\norm{\x - \x^\star} \le \delta$ and $\norm{\y - \y^\star} \le \delta$, we have
\begin{equation}\label{eq:deflocalminimax}
f(\x^\star, \y) \le f(\x^\star, \y^\star) \le \max_{\y': \norm{\y' - \y^\star} \le h(\delta)} f(\x, \y').
\end{equation}
\end{restatable}

\begin{restatable}{remark}{RMdef}
Definition \ref{def:localminimax} remains equivalent even if we further restrict function $h$ in Definition \ref{def:localminimax} to be monotonic or continuous. See Appendix \ref{sec:localminimax} for more details.
\end{restatable}

Intuitively, local minimaxity captures the optimal strategies in a two-player sequential game if both players are only allowed to change their strategies locally.

Definition \ref{def:localminimax} localize the notion of global minimax points (Definition \ref{def:globalminimax}) by replacing all global optimality over $\x$ and $\y$ by local optimality. However, since this is a sequential setting, the radius of the local neighborhoods where the maximization or minimization takes over can be different. Definition \ref{def:localminimax} allows one radius to be $\delta$ while the other is $h(\delta)$. The introduction of an arbitrary function $h$ allows the ratio of these two radii to also be arbitrary. The limiting behavior $h(\delta) \rightarrow 0$ as $\delta\rightarrow 0$ makes this definition a truly local notion. That is, it only depends on the property of function $f$ in an infinitesimal neighborhood around $(\x^\star, \y^\star)$.

Definition \ref{def:localminimax} is a natural local surrogate for global minimax points. We can alternatively define local minimax points as localized versions of the equivalent characterization of global minimax points as in Remark \ref{rm:global}. It turns out that two definitions are equivalent.

\begin{restatable}{lemma}{LEMeqdef}\label{lem:eqdef}
For a continuous function $f$, a point $(\x^\star, \y^\star)$ is a local \minimax of $f$ if and only if $\y^\star$ is a local maximum of function $f(\cdot, \x^\star)$, and there exists an $\epsilon_0 > 0$ such that $\x^\star$ is a local minimum of function $g_{\epsilon}$ for all $\epsilon \in (0, \epsilon_0]$ where function $g_{\epsilon}$ is defined as $g_{\epsilon}(\x) \defeq \max_{\y:\norm{\y -\y^\star}\le \epsilon}f(\x, \y)$.
\end{restatable}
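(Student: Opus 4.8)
The plan is to decouple the two inequalities in Definition~\ref{def:localminimax} and match them to the two conditions in the lemma. The left inequality, $f(\x^\star, \y) \le f(\x^\star, \y^\star)$ asked to hold for all $\y$ with $\norm{\y - \y^\star} \le \delta$ and all small $\delta$, is literally the statement that $\y^\star$ is a local maximum of $f(\x^\star, \cdot)$, so this half is immediate and symmetric across both directions. All the work therefore concentrates on showing that the right inequality is equivalent to the existence of an $\epsilon_0 > 0$ such that $\x^\star$ is a local minimum of $g_{\epsilon}$ for every $\epsilon \in (0, \epsilon_0]$.

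The key structural fact I would record first is that $g_{\epsilon}(\x) = \max_{\norm{\y' - \y^\star} \le \epsilon} f(\x, \y')$ is nondecreasing in $\epsilon$ (a maximum over a larger ball), and that, after shrinking $\epsilon_0$ so that $\y^\star$ maximizes $f(\x^\star, \cdot)$ over the ball of radius $\epsilon_0$, we have $g_{\epsilon}(\x^\star) = f(\x^\star, \y^\star)$ for all $\epsilon \le \epsilon_0$; write $c \defeq f(\x^\star, \y^\star)$. With this, the right inequality $c \le \max_{\norm{\y' - \y^\star} \le h(\delta)} f(\x, \y')$ reads as $g_{h(\delta)}(\x) \ge c = g_{\epsilon}(\x^\star)$, while the local-minimum condition reads as $g_{\epsilon}(\x) \ge c$ on a neighborhood of $\x^\star$. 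For the forward direction (local minimax $\Rightarrow$ conditions), I would fix $\epsilon \in (0, \epsilon_0]$ and use $h(\delta) \to 0$ to pick a single $\delta \in (0, \delta_0]$ with $h(\delta) \le \epsilon$; then monotonicity in the radius gives, for all $\norm{\x - \x^\star} \le \delta$, $g_{\epsilon}(\x) \ge g_{h(\delta)}(\x) \ge c = g_{\epsilon}(\x^\star)$, so $\x^\star$ is a local minimum of $g_{\epsilon}$, and since $\epsilon$ was arbitrary in $(0, \epsilon_0]$ this is exactly the claimed condition.

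The reverse direction is the crux and the step I expect to be the main obstacle, because the hypotheses supply, for each $\epsilon$, only a private radius $r(\epsilon) > 0$ on which $g_{\epsilon} \ge c$, and these radii need not depend monotonically or continuously on $\epsilon$, whereas Definition~\ref{def:localminimax} demands a single function $h$ with $h(\delta) \to 0$. I would overcome this by exploiting the monotonicity of $g$: take $\epsilon_n = \epsilon_0/2^n \downarrow 0$, choose radii $r_n > 0$ with $g_{\epsilon_n} \ge c$ on $\norm{\x - \x^\star} \le r_n$, and replace them by $\bar r_n \defeq \min(r_1, \ldots, r_n, 1/n)$, which is positive, nonincreasing, and tends to $0$. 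Defining $h(\delta) \defeq \epsilon_n$ for $\delta \in (\bar r_{n+1}, \bar r_n]$ (and $\delta_0 \defeq \bar r_1$) yields a function with $h(\delta) \to 0$ as $\delta \to 0$, and for $\delta \le \bar r_n$ the inclusion $\norm{\x - \x^\star} \le \delta$ forces $g_{h(\delta)}(\x) = g_{\epsilon_n}(\x) \ge c$, which is precisely the right inequality.

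Throughout, continuity of $f$ enters only to guarantee that the maxima defining $g_{\epsilon}$ are attained (the balls are compact), so that $g_{\epsilon}$ is well-defined. I would also note in passing that a cleaner, non-discrete derivation of $h$ is available as the generalized inverse of the nondecreasing ``radius of validity'' $R(\epsilon) \defeq \sup\{\delta > 0 : g_{\epsilon} \ge c \text{ on } \norm{\x - \x^\star} \le \delta\}$, whose monotonicity in $\epsilon$ is exactly what makes the inversion possible; the telescoping construction above is simply a way to carry this out while sidestepping any right-continuity bookkeeping for $R$.
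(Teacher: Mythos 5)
Your proof is correct. The forward direction coincides with the paper's argument: pick, for each $\epsilon\le\epsilon_0$, a single $\delta$ with $h(\delta)\le\epsilon$ and use monotonicity of $g_\epsilon$ in the radius. The reverse direction, which is indeed the crux, is where you genuinely diverge. The paper works with the ``radius of validity'' function $q(\epsilon)$ directly and defines $h(\delta)=\inf\{\epsilon : q(\epsilon)\ge\delta\}$, i.e., exactly the generalized inverse you mention in passing; the price is that $h(\delta)$ is then only an infimum of good values of $\epsilon$, so the paper must take a sequence $\epsilon_i\downarrow h(\delta)$ and invoke continuity of $f$ (hence of $\epsilon\mapsto g_\epsilon(\x)$) to pass the inequality $g_{\epsilon_i}(\x)\ge f(\x^\star,\y^\star)$ to the limit. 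Your dyadic discretization $\epsilon_n=\epsilon_0/2^n$ with radii $\bar r_n=\min(r_1,\dots,r_n,1/n)$ and the step function $h(\delta)=\epsilon_n$ on $(\bar r_{n+1},\bar r_n]$ sidesteps that closure step entirely, since $h$ only ever takes values at which the local-minimum hypothesis is applied directly; continuity of $f$ is then needed only for the maxima defining $g_\epsilon$ to be attained. This makes your argument slightly more elementary, at the cost of producing a step-function $h$ rather than the paper's cleaner monotone one --- which is immaterial, since Definition~\ref{def:localminimax} places no regularity requirement on $h$ beyond $h(\delta)\to 0$. The only bookkeeping you gloss over is that $\delta_0$ in the definition must also be taken no larger than the radius on which $\y^\star$ is a local maximum of $f(\x^\star,\cdot)$, so that the left inequality holds for all $\delta\le\delta_0$; shrinking $\delta_0$ to $\min(\bar r_1,\rho)$ for that radius $\rho$ fixes this trivially, and the paper performs the analogous shrinkage via $\tilde\epsilon_0=\min\{\epsilon_0,\delta_y\}$.
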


Lemma \ref{lem:eqdef} states that local minimaxity can be viewed from a game-theoretic perspective: the second player always plays the action to achieve a local maximum value $g_{\epsilon}(\x) \defeq \max_{\y:\norm{\y-\y^\star} \le \epsilon} f(\x, \y)$, for infinitesimal $\epsilon$, given the action $\x$ taken the first player, and the first player minimizes $g_{\epsilon}(\x)$ locally.

Finally, it can be shown that local minimaxity is a weakening of the notion of local Nash equilibrium defined as in Definition \ref{def:localNash}.  It alleviates the non-existence issues of the latter.

% A notion of local maximin point can be defined similarly. Local minimax points are different from local Nash equilibria since local minimax points only require $\x^\star$ to be the minimum of a local max function $\max_{\y': \norm{\y' - \y^\star} \le h(\delta)} f(\cdot, \y')$, while local Nash equilibria require $\x^*$ to be the local minimum after fixing $\y^*$ (see Figure \ref{fig:NashvsMinimax}). The local radius $h(\delta)$ over which the maximum is taken needs to decrease to zero as $\delta$ approaches zero. We note that our definition does not control the relative rate at which $h(\delta)$ and $\delta$ go to zero; indeed, it is allowed that $\lim_{\delta\rightarrow 0} h(\delta)/\delta = \infty$. 

\begin{restatable}{proposition}{PROPnashisminimax}
Any local Nash equilibrium (Definition \ref{def:localNash}) is a local \minimax.
\end{restatable}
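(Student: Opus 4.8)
The plan is to verify Definition~\ref{def:localminimax} directly by exhibiting suitable choices of $\delta_0$ and the function $h$, using the local Nash condition (Definition~\ref{def:localNash}) as the sole hypothesis. Let $\delta>0$ be the radius guaranteed by the local Nash assumption, so that $f(\x^\star,\y)\le f(\x^\star,\y^\star)\le f(\x,\y^\star)$ holds for all $(\x,\y)$ with $\norm{\x-\x^\star}\le\delta$ and $\norm{\y-\y^\star}\le\delta$. I would simply set $\delta_0\defeq\delta$ and take $h$ to be any nonnegative function with $h(\delta)\to 0$ as $\delta\to 0$; the cleanest choice is $h\equiv 0$, though $h(\delta)=\delta$ works equally well. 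The goal is then to check that both inequalities of~\eqref{eq:deflocalminimax} follow for every $\delta\in(0,\delta_0]$.

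The two inequalities are handled separately. The left inequality of~\eqref{eq:deflocalminimax}, namely $f(\x^\star,\y)\le f(\x^\star,\y^\star)$, is verbatim the left inequality of the local Nash condition, and hence holds for all $\y$ with $\norm{\y-\y^\star}\le\delta$. For the right inequality, the key observation is that $\y^\star$ is always feasible for the inner maximization, since $\norm{\y^\star-\y^\star}=0\le h(\delta)$; therefore $\max_{\y':\norm{\y'-\y^\star}\le h(\delta)}f(\x,\y')\ge f(\x,\y^\star)$. Chaining this with the right inequality of the local Nash condition, $f(\x^\star,\y^\star)\le f(\x,\y^\star)$, gives $f(\x^\star,\y^\star)\le\max_{\y':\norm{\y'-\y^\star}\le h(\delta)}f(\x,\y')$ for all $\x$ with $\norm{\x-\x^\star}\le\delta$. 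Since both inequalities of~\eqref{eq:deflocalminimax} hold for every $\delta\in(0,\delta_0]$ and $h(\delta)\to 0$, the point $(\x^\star,\y^\star)$ is a local \minimax.

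I do not expect a genuine obstacle in this argument. The only conceptual content is recognizing that the right-hand side of the local minimax inequality is a \emph{relaxation} of the right-hand side of the local Nash inequality: enlarging (or not shrinking below the singleton $\{\y^\star\}$) the feasible region of the inner maximization can only increase its value, so $f(\x,\y^\star)$ is always a lower bound for $\max_{\y':\norm{\y'-\y^\star}\le h(\delta)}f(\x,\y')$. The left-hand sides of the two definitions coincide exactly, so no additional work is needed there, and the limiting requirement $h(\delta)\to 0$ is trivially met by the constant or identity choice of $h$. This is precisely why local minimaxity is a strictly weaker notion than local Nash equilibrium, which is consistent with the existence advantages claimed for local minimax points elsewhere in the paper.
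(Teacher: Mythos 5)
Your proof is correct and follows essentially the same route as the paper's: choose $h\equiv 0$ so the inner maximization over $\{\y':\norm{\y'-\y^\star}\le h(\delta)\}$ includes $\y^\star$, and chain the local Nash inequalities to obtain~\eqref{eq:deflocalminimax}. The paper's proof is just a one-line version of the same argument.
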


\subsection{Properties and existence}\label{sec:prop}

% We consider local minimax as a more suitable notion of local optimality than local Nash equilibrium for minimax optimization. First, local minimaxity is a strictly relaxed notion of local Nash equilibrium, and it alleviates the non-existence issue for local Nash equilibria.

\begin{figure}[t]
    \centering
    \includegraphics[width=0.45\textwidth, trim=0 0 0 0, clip]{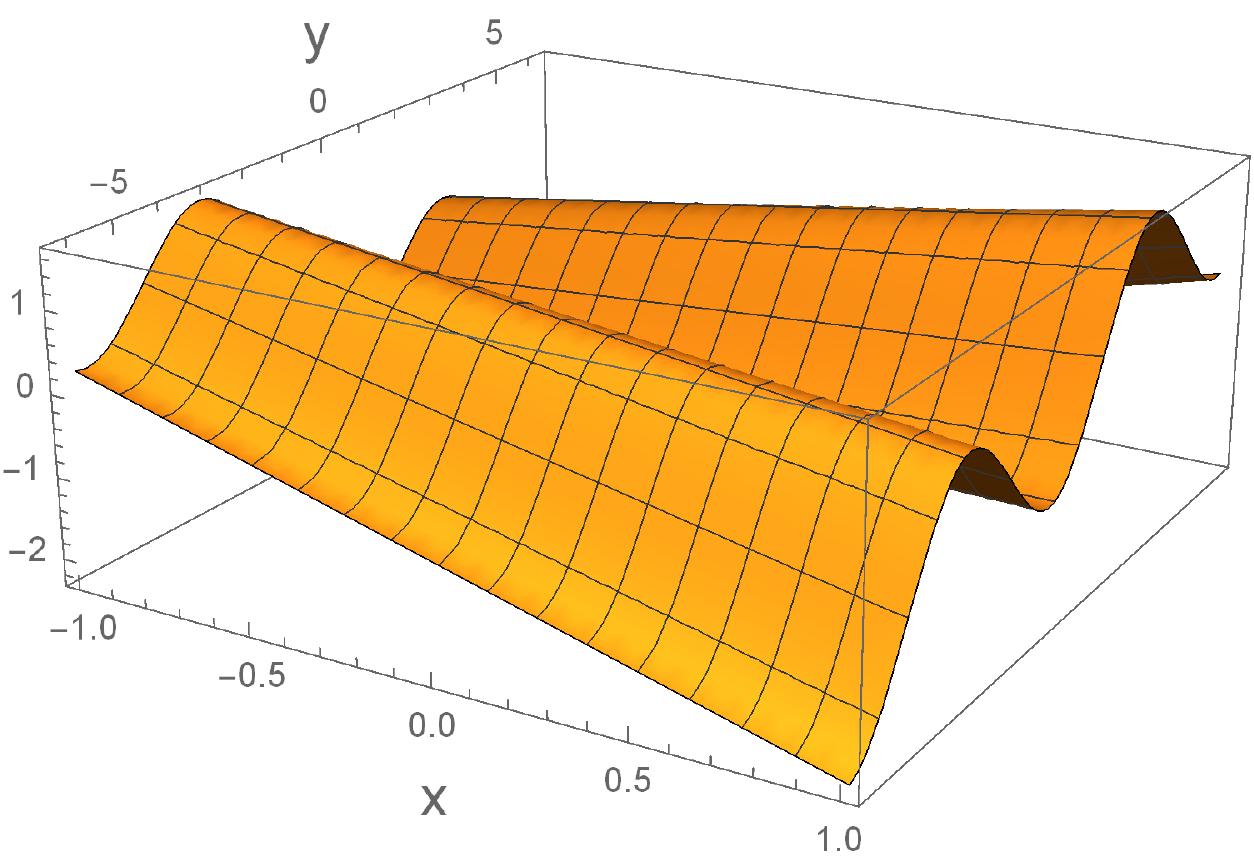}
    \begin{tikzpicture}
  \begin{scope}[blend group = soft light]
    \fill[red!30!white]   ( 90:1.2) circle (2);
    \fill[green!30!white] (210:1.2) circle (2);
    \fill[blue!30!white]  (330:1.2) circle (2);
  \end{scope}
  \node at ( 90:2)    {$\gamma$-GDA};
  \node at ( 210:2)   {$\begin{array}{c} \text{Local}\\ \text{Minimax}\\ (\infty\text{-GDA}) \end{array}$};
  \node at ( 330:2)   {$\begin{array}{c} \text{Local}\\ \text{Maximin} \end{array}$};
  \node  {$\begin{array}{c} \text{Local}\\ \text{Nash} \end{array}$};
\end{tikzpicture}
    \caption{\textbf{Left:} $f(x, y) = 0.2xy -cos(y)$, the global minimax points $(0, -\pi)$ and $(0, \pi)$ are not stationary. \textbf{Right:} The relations among local Nash equilibria, local minimax points, local maximin points and linearly stable points of $\gamma$-GDA, and $\infty$-GDA (up to degenerate points).}
    % See proof of Proposition \ref{prop:surr}
    \label{fig:surr}
    % \vspace{-2ex}
\end{figure}

Local minimax points also enjoy simple first-order and second-order characterizations. Notably, the second-order conditions naturally reflect the order of the sequential game (who plays first).

\begin{restatable}[First-order Necessary Condition]{proposition}{PROPfirstminimax}
\label{prop:first_nece}
Assuming that $f$ is continuously differentiable, then any local \minimax $(\x, \y)$ satisfies $\grad_{\x} f(\x, \y) = 0$ and $\grad_{\y} f(\x, \y) = 0$.
\end{restatable}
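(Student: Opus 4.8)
The plan is to establish the two stationarity conditions separately, since they come from the two halves of the local-minimax inequality~\eqref{eq:deflocalminimax}. Throughout I write $(\x^\star, \y^\star)$ for the local minimax point, fix the associated $\delta_0$ and $h$ from Definition~\ref{def:localminimax}, and assume (as is implicit in an unconstrained first-order condition stated as an equality to $\zero$) that the point is interior to $\cX\times\cY$.

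The condition $\grad_\y f(\x^\star,\y^\star)=\zero$ is immediate. Setting $\x=\x^\star$ in the left inequality of~\eqref{eq:deflocalminimax} gives $f(\x^\star,\y)\le f(\x^\star,\y^\star)$ for every $\y$ with $\norm{\y-\y^\star}\le\delta_0$, so $\y^\star$ is an interior local maximizer of the differentiable map $\y\mapsto f(\x^\star,\y)$; the classical first-order condition then yields $\grad_\y f(\x^\star,\y^\star)=\zero$. This is exactly the first clause of Lemma~\ref{lem:eqdef}.

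The substantive step is $\grad_\x f(\x^\star,\y^\star)=\zero$, which I would prove by contradiction using the right inequality. Suppose $\g\defeq\grad_\x f(\x^\star,\y^\star)\neq\zero$ and probe the descent direction by taking $\x=\x^\star-t\g$ for small $t>0$. For any $\y'$ with $\norm{\y'-\y^\star}\le h(\delta)$, split
\[
f(\x^\star-t\g,\y')-f(\x^\star,\y^\star)=\underbrace{\big(f(\x^\star,\y')-f(\x^\star,\y^\star)\big)}_{\text{(a)}}+\underbrace{\int_0^t \iprod{\grad_\x f(\x^\star-s\g,\y')}{-\g}\,\dd s}_{\text{(b)}}.
\]
Term (a) is $\le 0$ once $h(\delta)$ is smaller than the local-max radius from the previous step. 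For term (b), continuity of $\grad_\x f$ makes $\grad_\x f(\x^\star-s\g,\y')$ close to $\g$ when both $s$ and $\norm{\y'-\y^\star}$ are small, so the integrand is at most $-\tfrac12\norm{\g}^2$ and (b) $\le -\tfrac12\norm{\g}^2 t$. Hence every admissible $\y'$ satisfies $f(\x^\star-t\g,\y')\le f(\x^\star,\y^\star)-\tfrac12\norm{\g}^2 t$, and taking the maximum over $\y'$ gives $\max_{\norm{\y'-\y^\star}\le h(\delta)} f(\x^\star-t\g,\y') < f(\x^\star,\y^\star)$, which contradicts the right inequality of~\eqref{eq:deflocalminimax}.

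The main obstacle---and the only place the peculiar structure of Definition~\ref{def:localminimax} enters---is coordinating the two scales so that the perturbed point is admissible while the bounds on (a) and (b) hold simultaneously. Concretely, I would first choose a continuity radius $\rho$ (via uniform continuity of $\grad_\x f$ on a compact neighborhood, with tolerance $\eta=\tfrac12\norm{\g}$) together with the local-max radius from the $\grad_\y$ step; then pick $\delta\le\min(\delta_0,\rho)$ small enough that $h(\delta)$ falls below both, which is possible precisely because $h(\delta)\to0$; and finally set $t=\delta/\norm{\g}$ so that $\norm{\x-\x^\star}=\delta$ keeps $\x$ admissible while $t\le\rho/\norm{\g}$ keeps the whole segment $\{\x^\star-s\g:s\in[0,t]\}$ inside the continuity neighborhood. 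The essential subtlety is that the bound on (b) must hold uniformly over all $\y'$ in the $h(\delta)$-ball, so that it survives passing to the maximum; this is what makes the limiting behavior $h(\delta)\to 0$ the crucial hypothesis rather than a mere technicality.
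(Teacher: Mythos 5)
Your proposal is correct and follows essentially the same route as the paper's proof: both split $f(\x^\star+\dx,\y')-f(\x^\star,\y^\star)$ into a $\y$-perturbation term, killed by local maximality of $\y^\star$ once $h(\delta)$ is small, and an $\x$-perturbation term controlled by continuity of $\grad_\x f$ together with $h(\delta)\to 0$, then probe the direction $-\grad_\x f(\x^\star,\y^\star)$. The paper phrases this as a direct first-order expansion at the argmax $\dy^\star(\dx)$ while you argue by contradiction with an integral remainder and make the uniformity over the $h(\delta)$-ball explicit, but these are presentational differences only.
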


\begin{restatable}[Second-order Necessary Condition]{proposition}{PROPsecondminimax}
\label{prop:second_nece}
Assuming that $f$ is twice differentiable, then $(\x, \y)$ is a local \minimax implies that
$\phess_{\y\y} f(\x, \y) \preceq \zero$. Furthermore, if $\phess_{\y\y} f(\x, \y) \prec \zero$, then
\begin{equation}\label{eq:necessary_minimax}
[\phess_{\x\x} f - \phess_{\x\y} f (\phess_{\y\y} f)^{-1} \phess_{\y\x} f](\x, \y) \succeq \zero.
\end{equation}
\end{restatable}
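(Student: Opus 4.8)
The plan is to reduce everything to the equivalent characterization in Lemma~\ref{lem:eqdef} together with the first-order condition of Proposition~\ref{prop:first_nece}. By Lemma~\ref{lem:eqdef}, if $(\x, \y)$ is a local minimax then $\y$ is a local maximum of $f(\x, \cdot)$, and $\x$ is a local minimum of $g_\epsilon(\cdot) \defeq \max_{\y':\norm{\y'-\y}\le\epsilon} f(\cdot, \y')$ for all sufficiently small $\epsilon$. The first claim, $\phess_{\y\y} f(\x, \y) \preceq \zero$, is then immediate: it is the standard second-order necessary condition for $\y$ to be an unconstrained local maximum of the twice-differentiable function $f(\x, \cdot)$.

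For the second claim I would assume $\phess_{\y\y} f(\x,\y) \prec \zero$ and exploit local strong concavity in $\y$. By continuity $\phess_{\y\y} f \prec \zero$ on a neighborhood of $(\x,\y)$, so $f(\x', \cdot)$ is strongly concave there, uniformly for $\x'$ near $\x$. Combined with $\grad_\y f(\x,\y)=0$ from Proposition~\ref{prop:first_nece}, the implicit function theorem applied to $\grad_\y f(\x', \y')=0$ yields a $C^1$ best-response map $r$ with $r(\x)=\y$ and $\grad_\y f(\x', r(\x'))=0$ for $\x'$ near $\x$. I would then set $\phi(\x') \defeq f(\x', r(\x'))$ as the reduced objective.

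The key step is to argue that $g_\epsilon = \phi$ in a neighborhood of $\x$ for a fixed small $\epsilon$. Choosing $\epsilon$ small enough that the ball $\{\y':\norm{\y'-\y}\le\epsilon\}$ lies inside the strong-concavity region, and $\x'$ close enough that $r(\x')$ lies strictly in the interior of this ball (possible by continuity of $r$), strong concavity forces the constrained maximizer defining $g_\epsilon(\x')$ to be the unique interior critical point $r(\x')$; hence $g_\epsilon(\x') = f(\x', r(\x')) = \phi(\x')$. Since $\x$ is a local minimum of $g_\epsilon$, it is a local minimum of the $C^2$ function $\phi$, so $\hess \phi(\x) \succeq \zero$.

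It then remains to compute $\hess\phi(\x)$. Differentiating $\phi(\x') = f(\x', r(\x'))$ and using $\grad_\y f(\x', r(\x'))=0$ gives $\grad\phi(\x') = \grad_\x f(\x', r(\x'))$; differentiating once more and substituting $\partial r/\partial\x = -(\phess_{\y\y}f)^{-1}\phess_{\y\x}f$ (obtained by implicitly differentiating $\grad_\y f(\x', r(\x'))=0$) yields exactly $\hess\phi(\x) = [\phess_{\x\x} f - \phess_{\x\y} f(\phess_{\y\y}f)^{-1}\phess_{\y\x}f](\x,\y)$, so $\hess\phi(\x)\succeq\zero$ is precisely~\eqref{eq:necessary_minimax}. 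I expect the main obstacle to be the coincidence $g_\epsilon = \phi$: rigorously showing that the boundary of the $\epsilon$-ball plays no role and that the constrained max is attained at the smooth interior best response. This is exactly where $\phess_{\y\y}f\prec\zero$ is essential, since it simultaneously supplies the invertibility needed for the Schur complement and rules out maximizers escaping to the boundary.
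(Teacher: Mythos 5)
Your proof is correct, but it takes a recognizably different route from the paper's. The first claim ($\phess_{\y\y} f \preceq \zero$) is handled identically. For the Schur-complement part, the paper works directly from Definition~\ref{def:localminimax}: it writes the second-order Taylor expansion of $f$ at $(\x,\y)$, enlarges the $\y$-radius from $h(\delta)$ to $\max\bigl(h(\delta),\,2\norm{\B^{-1}\C\trans}\delta\bigr)$ (with $\B=\phess_{\y\y}f$, $\C=\phess_{\x\y}f$) so that the ball is guaranteed to contain the approximate maximizer $-\B^{-1}\C\trans\dx$, and then reads off that the max of the quadratic over that ball is exactly the Schur-complement form $\tfrac12\dx\trans(\A-\C\B^{-1}\C\trans)\dx$ up to $o(\norm{\dx}^2)$ --- no implicit function theorem and no best-response map as an actual function. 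You instead route through Lemma~\ref{lem:eqdef} and the implicit function theorem, building a $C^1$ best response $r$ and identifying $g_\epsilon$ with the smooth reduced function $f(\cdot,r(\cdot))$ near $\x$, so that the Schur complement emerges as the Hessian of the local value function. Your version is conceptually cleaner (it makes the envelope/Stackelberg interpretation explicit), and your use of Lemma~\ref{lem:eqdef} elegantly replaces the paper's radius-enlargement trick, since the $\epsilon$-ball there is fixed rather than shrinking with $\delta$, so containing the best response for nearby $\x'$ is automatic; you also correctly identify and resolve the one real obstacle (ruling out boundary maximizers via strong concavity). The only caveat is regularity: the implicit function theorem and the claim that $\phess_{\y\y}f\prec\zero$ persists on a neighborhood require $f$ to be $C^2$ near $(\x,\y)$, i.e.\ continuous second derivatives, whereas the paper's Peano-remainder Taylor argument uses only twice differentiability at the point itself, which is what the proposition literally assumes. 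This is a minor strengthening of hypotheses, not a gap in the logic.
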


\begin{restatable}[Second-order Sufficient Condition]{proposition}{PROPstrictminimax}
\label{prop:first_suff}
Assume that $f$ is twice differentiable. Any stationary point $(\x,\y)$ satisfying $\phess_{\y\y} f(\x, \y) \prec \zero$ and
\begin{equation}\label{eq:strict_minimax}
[\phess_{\x\x} f - \phess_{\x\y} f (\phess_{\y\y} f)^{-1} \phess_{\y\x} f](\x, \y) \succ \zero
\end{equation}
is a local \minimax. We call stationary points satisfying~\eqref{eq:strict_minimax} \emph{strict local \minimaxs}.
\end{restatable}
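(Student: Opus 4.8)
The plan is to reduce to the equivalent characterization of local minimaxity in Lemma~\ref{lem:eqdef}, so that it suffices to establish two facts: (i) $\y^\star$ is a local maximum of $f(\x^\star,\cdot)$, and (ii) there is an $\epsilon_0>0$ such that $\x^\star$ is a local minimum of $g_\epsilon(\x)\defeq\max_{\norm{\y-\y^\star}\le\epsilon}f(\x,\y)$ for every $\epsilon\in(0,\epsilon_0]$. After translating so that $(\x^\star,\y^\star)=(\zero,\zero)$ and $f(\zero,\zero)=0$, I would write the second-order Taylor expansion guaranteed by twice-differentiability, using stationarity to kill the linear term:
\begin{equation*}
f(\u,\v)=\tfrac12\big(\u\trans\A\u+2\u\trans\B\v+\v\trans\C\v\big)+R(\u,\v),\qquad R(\u,\v)=o(\norm{\u}^2+\norm{\v}^2),
\end{equation*}
where $\A\defeq\phess_{\x\x}f$, $\B\defeq\phess_{\x\y}f$, $\C\defeq\phess_{\y\y}f$ at the point (so $\phess_{\y\x}f=\B\trans$), with $\C\prec\zero$ and Schur complement $\A-\B\C^{-1}\B\trans\succ\zero$.

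For (i), since $\C\preceq-\mu\I$ for some $\mu>0$, the expansion gives $f(\zero,\v)\le-\tfrac{\mu}{2}\norm{\v}^2+o(\norm{\v}^2)<0=f(\zero,\zero)$ for all small $\v\neq\zero$, so $\y^\star$ is a strict local maximum of $f(\x^\star,\cdot)$; the same bound shows that for all sufficiently small $\epsilon$ the inner maximization at $\x=\zero$ is attained only at $\v=\zero$, i.e.\ $g_\epsilon(\zero)=0$. This fixes $\epsilon_0$.

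For (ii), the key idea is to lower-bound $g_\epsilon(\u)$ by evaluating $f$ at the quadratic best response $\v^\star(\u)\defeq-\C^{-1}\B\trans\u$, the exact maximizer of the quadratic part in $\v$. A direct computation collapses the cross term and the $\v$-quadratic into the Schur complement, giving
\begin{equation*}
f(\u,\v^\star(\u))=\tfrac12\,\u\trans\big(\A-\B\C^{-1}\B\trans\big)\u+R(\u,\v^\star(\u))\ge\tfrac{\lambda}{4}\norm{\u}^2
\end{equation*}
for all small $\u$, where $\lambda\defeq\lambda_{\min}(\A-\B\C^{-1}\B\trans)>0$; here I use $\norm{\v^\star(\u)}=O(\norm{\u})$ so that $R(\u,\v^\star(\u))=o(\norm{\u}^2)$. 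Since $\norm{\v^\star(\u)}\le\beta\norm{\u}$ with $\beta\defeq\norm{\C^{-1}\B\trans}$, the response $\v^\star(\u)$ is feasible for $g_\epsilon$ as soon as $\norm{\u}\le\epsilon/\beta$; restricting $\u$ to the ball of radius $\min(\epsilon/\beta,r_0)$ (where $r_0$ makes the remainder bound valid) yields $g_\epsilon(\u)\ge\tfrac{\lambda}{4}\norm{\u}^2>0=g_\epsilon(\zero)$, so $\x^\star$ is a strict local minimum of $g_\epsilon$. Combining (i) and (ii) with Lemma~\ref{lem:eqdef} finishes the proof.

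The part requiring the most care is (ii): because twice-differentiability only supplies a pointwise Taylor expansion at $(\x^\star,\y^\star)$, not uniform second-order control in a neighborhood, I must avoid any argument---such as invoking the implicit function theorem to build a smooth best-response map---that needs continuity of the second derivatives. The device that sidesteps this is to use the explicit \emph{linear} surrogate response $\v^\star(\u)$, which is feasible and already good enough, and to couple the radius of the $\x$-neighborhood to $\epsilon$ so that $\v^\star(\u)$ always lies in the $\epsilon$-ball. I expect the only genuinely delicate bookkeeping to be verifying that the single pointwise remainder estimate $R(\u,\v^\star(\u))=o(\norm{\u}^2)$ suffices simultaneously for feasibility and for the strict lower bound.
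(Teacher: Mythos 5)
Your proposal is correct and follows essentially the same route as the paper: both arguments plug the explicit linear surrogate response $-(\phess_{\y\y}f)^{-1}\phess_{\y\x}f\,\dx$ into the second-order Taylor expansion so that the cross and $\y$-quadratic terms collapse into the Schur complement, and both couple the $\y$-radius linearly to the $\x$-radius (your $\norm{\u}\le\epsilon/\beta$ is the mirror image of the paper's choice $h(\delta)=\norm{\B^{-1}\C\trans}\delta$ in Definition~\ref{def:localminimax}). Routing through Lemma~\ref{lem:eqdef} rather than verifying Definition~\ref{def:localminimax} directly is a cosmetic difference, and your explicit handling of the remainder term is careful and sound.
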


We note that if $\phess_{\y\y} f(\x, \y)$ is non-degenerate, then the second-order necessary condition (Proposition \ref{prop:second_nece}) becomes $\phess_{\y\y} f(\x, \y) \prec \zero$ and $[\phess_{\x\x} f - \phess_{\x\y} f (\phess_{\y\y} f)^{-1} \phess_{\y\x} f](\x, \y) \succeq \zero$, which is identical to the sufficient condition in Eq.~\eqref{eq:strict_minimax} up to an equals sign. %Thus the second-order characterization is nearly tight.

Comparing Eq.~\eqref{eq:strict_minimax} to the second-order sufficient condition for local Nash equilibrium in Eq.~\eqref{eq:strict_Nash}, we see 
that, instead of requiring $\phess_{\x\x} f(\x, \y)$ to be positive definite, local minimaxity requires the Shur complement to be positive definite. Contrary to local Nash equilibria, this characterization of local minimaxity not only takes into account the interaction term $\phess_{\x\y}f$ between $\x$ and $\y$, but also reflects the difference between the first player and the second player.
 % the natural order in the sequential game. That is, $\x$-player and $\y$-player are not symmetric, the order who plays first is essential.

For existence, we would like to first highlight an interesting fact: in contrast to the well-known fact in nonconvex optimization that global minima are always local minima (thus local minima always exist), global minimax points can be neither local minimax nor even stationary points.

\begin{restatable}{proposition}{PROPnosurrogate}\label{prop:surr}
The global minimax point can be neither local minimax nor a stationary point. 
% thus non-local minimax point.
\end{restatable}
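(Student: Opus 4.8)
The plan is to prove this existence statement by exhibiting an explicit counterexample rather than arguing abstractly: a smooth function whose global minimax point violates first-order stationarity, so that Proposition \ref{prop:first_nece} immediately rules out local minimaxity. A natural candidate (consistent with the left panel of Figure \ref{fig:surr}) is
\[
f(x,y) = 0.2\, x y - \cos y
\]
on a compact domain such as $\cX = [-1,1]$, $\cY = [-\pi,\pi]$, so that a global minimax is guaranteed to exist by Proposition \ref{prop:minimaxexist}. The governing intuition is that the penalty $-\cos y$ forces the max-player's best response toward $y = \pm\pi$, where $\grad_x f = 0.2\, y$ is bounded away from zero; the min-player would locally prefer to move, but the global coupling term $0.2\, xy$ prevents any decrease of $\phi(x)=\max_y f(x,y)$ below its value at $x=0$.

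First I would nominate $(0, \pi)$ and $(0, -\pi)$ as the candidate global minimax points and verify Definition \ref{def:globalminimax} directly. The left inequality $f(0,y)\le f(0,\pm\pi)$ is immediate because $f(0,\cdot) = -\cos(\cdot)$ attains its maximum value $1$ exactly at $y=\pm\pi$. The right inequality amounts to showing that $x^\star = 0$ is a global minimizer of $\phi(x) = \max_{y\in\cY} f(x,y)$, i.e.\ that $\phi(x)\ge 1 = \phi(0)$ for every $x\in\cX$.

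The crux—and the one step requiring a genuine (if short) argument—is this lower bound on $\phi$. Rather than computing $\phi$ in closed form, I would evaluate $f$ at a single well-chosen test point: taking $y=\pi$ when $x\ge 0$ and $y=-\pi$ when $x<0$ gives $0.2\,xy = 0.2\,|x|\,\pi \ge 0$ and $-\cos(\pm\pi)=1$, hence $f\bigl(x,\pm\pi\bigr) = 0.2\,|x|\,\pi + 1 \ge 1$. Therefore $\phi(x)\ge 0.2\,|x|\,\pi + 1 \ge 1$, with equality precisely at $x=0$, so $x^\star=0$ is the global minimizer of $\phi$. Combined with the previous step, this establishes that $(0,\pm\pi)$ are bona fide global minimax points.

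Finally I would compute $\grad_x f(0,\pm\pi) = 0.2\,(\pm\pi) \neq 0$, so neither point is stationary. Since Proposition \ref{prop:first_nece} shows that every local minimax is stationary, its contrapositive yields that $(0,\pm\pi)$ cannot be local minimax. Hence the same point is a global minimax yet is neither local minimax nor stationary, which proves the proposition. The only item demanding care is to fix the domain explicitly so that Proposition \ref{prop:minimaxexist} applies and the ``global'' claims are unambiguous; every other assertion reduces to the one-line verifications above.
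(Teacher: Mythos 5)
Your proposal is correct and takes essentially the same route as the paper: the same counterexample $f(x,y)=0.2xy-\cos y$, the same candidate points $(0,\pm\pi)$, and the same final appeal to the contrapositive of Proposition \ref{prop:first_nece} after noting $\grad_x f(0,\pm\pi)=\pm 0.2\pi\neq 0$. The only difference is your choice of domain $\cY=[-\pi,\pi]$ instead of the paper's $[-2\pi,2\pi]$, which makes the verification of global minimaxity a one-line test-point bound rather than an analysis of the two best-response branches, at the mild cost of placing $y^\star=\pm\pi$ on the boundary of $\cY$ --- harmless here, since the non-vanishing gradient is in the interior variable $x$, but worth noting since the paper's local-minimax machinery is stated for interior points.
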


See Figure \ref{fig:surr} for an illustration and Appendix \ref{sec:localminimax} for the proof. The proposition is a natural consequence of the definitions where global minimax points are obtained as a minimum of a \emph{global} maximum function while local minimax points are the minimum of a \emph{local} maximum function. 
This also illustrates that minimax optimization is a challenging task, and worthy of independent study, beyond nonconvex optimization.

Therefore, although global minimax points always exist as in Proposition \ref{prop:minimaxexist}, it is not necessary for local minimax points to always exist. Unfortunately, similar to local Nash equilibria, local minimax points may not exist in general.
\begin{restatable}{lemma}{LEMlocalMMnoexist}
There exists a twice-differentiable function $f$ and a compact domain, where local minimax points do not exist.
\end{restatable}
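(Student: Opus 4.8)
The plan is to exhibit a single twice-differentiable $f$ on a compact box $\cX \times \cY \subset \R \times \R$ and verify directly that no point is a local minimax, using Lemma \ref{lem:eqdef} together with the first- and second-order necessary conditions (Proposition \ref{prop:first_nece} and Proposition \ref{prop:second_nece}). The candidate I would use is the tilted double-well
\[
f(x,y) = -\tfrac{1}{4}(y^2 - 1)^2 + x y, \qquad (x,y) \in [-a, a] \times [-2, 2],
\]
with $a$ chosen slightly larger than the saddle-node value $2/(3\sqrt{3})$. The point of the construction is that $\max_\y f(x,\cdot)$ is governed by two competing interior maxima whose dominance swaps at $x=0$; this makes the envelope $\phi(x) = \max_\y f(x,y)$ a V-shaped function whose unique minimizer $x=0$ is a nonsmooth kink, which is exactly the configuration needed to defeat the local-minimum-of-$g_\epsilon$ requirement.

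First I would dispose of the interior. Since $\grad_x f = y$ and $\grad_y f = -y^3 + y + x$, the only interior stationary point is $(0,0)$, where $\phess_{yy} f = 1 \succ 0$; hence $(0,0)$ violates the second-order necessary condition of Proposition \ref{prop:second_nece}, and every other interior point is non-stationary and thus excluded by Proposition \ref{prop:first_nece}. In particular the two ridges of maxima $y_\pm(x)$ (the outer roots of $x = y^3 - y$) are not stationary, because $\grad_x f = y_\pm(x) \neq 0$ there. This already rules out all interior local minimax points, regardless of the finer branch structure.

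Next I would handle the boundary via Lemma \ref{lem:eqdef}, under which a point is local minimax only if $y^\star$ is a local maximum of $f(x^\star,\cdot)$ and $x^\star$ is a local minimum of $g_\epsilon(x) = \max_{\norm{y - y^\star}\le \epsilon} f(x,y)$ for all small $\epsilon$. On the $\cY$-boundary $y = \pm 2$ the function is strictly monotone in $y$ pointing into the interior, so $y^\star = \pm 2$ is never a local maximum of $f(x^\star,\cdot)$ and no such candidate survives. On the $\cX$-boundary $x = \pm a$, the choice $a > 2/(3\sqrt{3})$ ensures that $f(\pm a,\cdot)$ has only a single nondegenerate interior maximum—the subdominant one has already been destroyed in a saddle-node bifurcation—so the unique candidate is $(a, y_+(a))$ and its mirror image. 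For this candidate the window captures the true maximum, so $g_\epsilon$ agrees with $\phi$ near $x=a$; since $\phi' = y_+ > 0$ there, $x=a$ is a local maximum of $\phi$, and $g_\epsilon$ strictly decreases as $x$ moves inward, so $x=a$ is not a local minimum of $g_\epsilon$. Finally I would check the kink: at $(0,\pm 1)$ the window tracks only one branch, giving $g_\epsilon(x) = v_\pm(x)$ with $v_\pm'(0) = \pm 1 \neq 0$, so $x=0$ is not a local minimum of either $g_\epsilon$.

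The hard part is making the case analysis genuinely exhaustive, and this is where the geometry must be pinned down: one must confirm that the only local maxima of $f(x^\star,\cdot)$ over the compact interval $[-2,2]$ are the interior ridge points, and—most delicately—that pushing $a$ past the bifurcation eliminates the secondary-branch corner candidate $(a, y_-(a))$, which would otherwise be a genuine boundary local minimax because $\phi$ restricted to the subdominant branch is strictly decreasing and attains a boundary minimum at $x=a$. Note that since $g_\epsilon$ is continuous on the compact set $\cX$ it always attains a minimum, so non-existence cannot come from $g_\epsilon$ lacking a minimizer; it must come from those minimizers either failing the $y$-local-maximum condition or drifting with $\epsilon$, which is precisely what the swapping-maxima mechanism delivers. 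The remaining ingredients—continuity of $y_\pm$, the envelope identities $v_\pm' = y_\pm$, and the saddle-node location—are routine.
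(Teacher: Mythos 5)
Your construction is correct, but it is a genuinely different example from the paper's, and the two are instructive to contrast. The paper takes $f(x,y) = y^2 - 2xy$ on $[-1,1]\times[-1,1]$: strict convexity in $y$ forces every local maximum of $f(x,\cdot)$ onto the boundary $y=\pm 1$, after which $g_\epsilon(x)$ is an affine function of $x$ whose slope has the wrong sign at every admissible candidate, so the whole argument collapses to two one-line boundary cases. Your tilted double-well $f(x,y)=-\tfrac14(y^2-1)^2+xy$ instead realizes non-existence through an interior mechanism: the envelope $\phi$ has a nonsmooth kink at $x=0$ where the two competing maxima swap dominance, so the global minimax point is not even stationary, and the first- and second-order necessary conditions wipe out the entire interior. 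The price is that you must then separately exclude the boundary, and your key observation -- that $a$ must exceed the saddle-node value $2/(3\sqrt3)$ so that the subdominant branch dies before reaching $x=\pm a$, since otherwise $(a, y_-(a))$ really would be a boundary local minimax point -- is exactly the delicate step, and you handle it correctly (for $|x|>2/(3\sqrt3)$ the cubic $y^3-y=x$ has a unique root, which is a nondegenerate maximum, and the envelope derivative $y_+(x)>1$ makes $x=a$ a local maximum of $g_\epsilon$ rather than a minimum; the $y=\pm2$ faces fail the local-maximum-in-$y$ test since $\partial_y f(x,\pm 2)=x\mp 6$ points inward). What your approach buys is a more illuminating picture of \emph{why} local minimax points can fail to exist -- it is essentially the same phenomenon as Proposition \ref{prop:surr}, pushed to a domain where no boundary candidate survives either -- whereas the paper's example is shorter but leans entirely on boundary effects of a convex-in-$y$ function. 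Both are complete proofs of the lemma.
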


% \begin{proof}
% Consider a two dimensional function $f(x, y) = y^2 - 2xy$ on $[-1, 1] \times [-1, 1]$. Suppose $(x^\star, y^\star)$ is a local minimax points, then at least $y^\star$ is a local maximum of $f(x^\star, \cdot)$, which restricts the possible local minimax points within set $[-1, 1)\times\{1\}$ or $(-1, 1]\times\{-1\}$. It is easy to check no points in either set is local minimax.
% \end{proof}

Nevertheless, global minimax points can be guaranteed to be local minimax under some further regularity. For instance, this is true when $f$ is strongly-concave in $\y$, or more generally when $f$ satisfies the following properties that have been established in several machine learning problems~\citep{ge2017no,boumal2016non}. There, local minimax points are guaranteed to exist.

% for any fixed $\x$, if $f(\x, \cdot)$ satisfies that all local maxima are global maxima with strongly concave neighborhood, which is a property established to hold in several popular machine learning problems~\citep{ge2017no,boumal2016non}.

\begin{restatable}{theorem}{THMgoodsurr}
Assume that $f$ is twice differentiable, and for any fixed $\x$, the function $f(\x, \cdot)$ 
is strongly concave in the neighborhood of local maxima and  satisfies the assumption that all local maxima are global maxima.
Then the global minimax point of $f(\cdot,\cdot)$ is also a local minimax point.
\end{restatable}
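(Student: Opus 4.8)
The plan is to verify the two conditions in the equivalent characterization of local minimaxity given by Lemma~\ref{lem:eqdef}. Let $(\x^\star, \y^\star)$ be a global minimax point; by Remark~\ref{rm:global}, $\y^\star$ is a global maximizer of $f(\x^\star, \cdot)$ and $\x^\star$ is a global minimizer of $\phi(\x) \defeq \max_\y f(\x, \y)$. The first condition of Lemma~\ref{lem:eqdef}---that $\y^\star$ is a \emph{local} maximum of $f(\x^\star, \cdot)$---is immediate, since a global maximum is in particular a local maximum. The bulk of the work is the second condition: producing an $\epsilon_0 > 0$ such that $\x^\star$ is a local minimum of $g_\epsilon(\x) \defeq \max_{\norm{\y - \y^\star}\le \epsilon} f(\x, \y)$ for every $\epsilon \in (0, \epsilon_0]$.

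First I would exploit strong concavity to build a smooth local best response. By assumption $f(\x^\star, \cdot)$ is strongly concave in a neighborhood of its local maximum $\y^\star$, so $\phess_{\y\y} f(\x^\star, \y^\star) \prec \zero$; in particular this Hessian block is invertible. Since $\grad_\y f(\x^\star, \y^\star) = 0$ and $f$ is twice differentiable, the implicit function theorem yields radii $\delta_1, r_0 > 0$ and a continuous map $r$ from $\{\x : \norm{\x - \x^\star} \le \delta_1\}$ into $\{\y : \norm{\y - \y^\star} \le r_0\}$ with $r(\x^\star) = \y^\star$ and $\grad_\y f(\x, r(\x)) = 0$. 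Shrinking $\delta_1, r_0$ if needed, continuity of the Hessian guarantees $\phess_{\y\y} f(\x, \y) \prec \zero$ throughout this neighborhood, so $f(\x,\cdot)$ is strictly concave on the convex ball $\{\y:\norm{\y-\y^\star}\le r_0\}$ and each $r(\x)$ is a strict local maximizer of $f(\x, \cdot)$.

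The key step is then to identify $g_\epsilon$ with $\phi$ near $\x^\star$. Because all local maxima are assumed to be global maxima, $r(\x)$ is in fact a global maximizer of $f(\x, \cdot)$, so $f(\x, r(\x)) = \phi(\x)$ for $\norm{\x - \x^\star} \le \delta_1$. Now fix any $\epsilon \in (0, r_0]$. By continuity of $r$ with $r(\x^\star) = \y^\star$, there is $\delta_2 \le \delta_1$ so that $\norm{r(\x) - \y^\star} < \epsilon$ whenever $\norm{\x - \x^\star}\le \delta_2$; meanwhile $f(\x, \cdot)$ is strictly concave on the convex ball $\{\y: \norm{\y-\y^\star}\le \epsilon\}$, so it has a unique maximizer there, which must be the interior critical point $r(\x)$. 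Hence $g_\epsilon(\x) = f(\x, r(\x)) = \phi(\x)$ for all $\x$ with $\norm{\x - \x^\star} \le \delta_2$. Finally, since $\x^\star$ globally minimizes $\phi$, we get $g_\epsilon(\x) = \phi(\x) \ge \phi(\x^\star) = g_\epsilon(\x^\star)$ on this neighborhood, so $\x^\star$ is a local minimum of $g_\epsilon$. Taking $\epsilon_0 = r_0$ completes the verification, and Lemma~\ref{lem:eqdef} gives the conclusion.

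I expect the main obstacle to be the chain of identifications $g_\epsilon(\x) = f(\x, r(\x)) = \phi(\x)$ in a full neighborhood of $\x^\star$. The delicate points are ensuring that the restricted maximizer over the $\epsilon$-ball coincides with the implicit-function best response $r(\x)$ (which needs both the interiority $\norm{r(\x)-\y^\star}<\epsilon$ and uniqueness of the maximizer from strict concavity), and invoking the hypothesis that ``all local maxima are global maxima'' to replace the local value $f(\x, r(\x))$ by the global value $\phi(\x)$. Note that the neighborhood radius $\delta_2$ is allowed to depend on $\epsilon$, which is consistent with the ``for all $\epsilon \in (0,\epsilon_0]$'' quantifier in Lemma~\ref{lem:eqdef}, so no uniformity in $\epsilon$ is required.
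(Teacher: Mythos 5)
Your proof is correct, and the central mechanism is the same as the paper's: strong concavity near $\y^\star$ makes $\phess_{\y\y} f(\x^\star,\y^\star) \prec \zero$, which yields a well-defined, continuously varying branch of local maximizers $r(\x)$ near $\y^\star$; the hypothesis that all local maxima are global then identifies $f(\x, r(\x))$ with $\phi(\x)$, and global minimality of $\x^\star$ for $\phi$ finishes the argument. Where you differ is in the verification target and the tool. The paper checks Definition~\ref{def:localminimax} directly: it Taylor-expands $f$ to second order, extracts the quantitative estimate $\dy^\star(\dx) = -\B^{-1}\C\trans\dx + o(\norm{\dx})$, and uses it to exhibit an explicit linear radius function $h(\delta) = (\norm{\B^{-1}\C\trans}+1)\delta$ for which the defining inequality holds. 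You instead route through the equivalent characterization in Lemma~\ref{lem:eqdef}, which lets you get away with the implicit function theorem and mere continuity of $r$: for each fixed $\epsilon$ you only need $r(\x)$ to land inside the $\epsilon$-ball for $\x$ sufficiently close to $\x^\star$ (with the neighborhood allowed to depend on $\epsilon$, as the lemma permits), plus uniqueness of the constrained maximizer from strict concavity to conclude $g_\epsilon(\x) = \phi(\x)$. Your version buys a cleaner argument that avoids any rate estimate on the best response; the paper's version buys an explicit $h$, which makes the conclusion self-contained against the raw definition and exhibits the scale at which the inner maximization must be taken. Both arguments quietly rely on the same regularity (continuity of the Hessian near $(\x^\star,\y^\star)$, slightly beyond bare twice differentiability), so neither is more demanding than the other on that front.
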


%!TEX root = main.tex

\subsection{Relation to the limit points of GDA}
\label{sec:gda}

In this section, we consider the asymptotic behavior of Gradient Descent Ascent (GDA), and its relation to local minimax points. As shown in the pseudo-code in Algorithm \ref{algo:GDA}, GDA simultaneously performs gradient descent on $\x$ and gradient ascent on $\y$. We consider the general form where the step size for $\x$ can be different from the step size for $\y$ by a ratio $\gamma$, and denoted this algorithm by $\gamma$-GDA. When the step size $\eta$ is small, this is essentially equivalent to the algorithm that alternates between one step of gradient descent and $\gamma$ steps of gradient ascent. 
% There, $\gamma$ indicates how many gradient ascent steps are performed for one gradient descent step.

\begin{algorithm}[t]
\caption{Gradient Descent Ascent ($\gamma$-GDA)}\label{algo:GDA}
\begin{algorithmic}
\renewcommand{\algorithmicrequire}{\textbf{Input: }}
\renewcommand{\algorithmicensure}{\textbf{Output: }}
\REQUIRE $(\x_0, \y_0)$, step size $\eta$, ratio $\gamma$.
\FOR{$t = 0, 1, \ldots, $}
\STATE $\x_{t+1} \leftarrow \x_t - (\eta/\gamma) \pgrad_\x f(\x_t, \y_t)$.
\STATE $\y_{t+1} \leftarrow \y_t + \eta \pgrad_\y f(\x_t, \y_t)$.
\ENDFOR
% \STATE \textbf{return} $(\x_T, \y_T)$
\end{algorithmic}
\end{algorithm}

To study the limiting behavior, we primarily focus on linearly stable points of $\gamma$-GDA, since with random initialization, $\gamma$-GDA will almost surely escape strict linearly unstable points.

% \cnote{Comment on variants (1) alternating GDA not simultaneously, (2) run x for one step then y for 1/$\gamma$ steps.}

\begin{theorem}[\cite{daskalakis2018limit}]
For any $\gamma>1$, assuming the function $f$ is $\ell$-gradient Lipschitz, and the step size $\eta \le 1/\ell$, then the set of initial points $\x_0$ so that $\gamma$-GDA converges to a strict linear unstable point is of Lebesgue measure zero.
% With measure 1 over initialization, if $\gamma$-GDA converges, it will converge to the linear stable points.
\end{theorem}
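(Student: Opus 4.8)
The plan is to treat $\gamma$-GDA as a discrete-time dynamical system $\z_{t+1} = \w(\z_t)$ with $\z = (\x,\y)$ and update map $\w(\x,\y) = (\x - (\eta/\gamma)\grad_\x f(\x,\y),\ \y + \eta\grad_\y f(\x,\y))$, and to apply the center-stable manifold theorem. The high-level idea is standard for such escape results: near any strict linearly unstable fixed point, the set of points whose forward orbit converges to it is confined to a lower-dimensional (hence Lebesgue-measure-zero) manifold, and this local statement can be pulled back along the dynamics and unioned over all such fixed points, exhibiting the entire set of ``bad'' initializations $\z_0 = (\x_0,\y_0)$ as a countable union of measure-zero sets.

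First I would show that $\w$ is a $C^1$ diffeomorphism onto its image with Lipschitz inverse; this is exactly where the hypotheses $\gamma > 1$ and $\eta \le 1/\ell$ enter. Write $\w = \mathrm{id} + \eta\, G$ with $G(\z) = \mLambda \grad f(\z)$ and $\mLambda = \diag(-\tfrac1\gamma \I_{d_1}, \I_{d_2})$. Because $\gamma > 1$ we have $\norm{\mLambda} = 1$, so $G$ inherits the $\ell$-Lipschitz property of $\grad f$, i.e. $\eta G$ is $\eta\ell$-Lipschitz with $\eta\ell \le 1$. An identity-plus-contraction argument then gives injectivity together with a Lipschitz inverse, while the Jacobian $\J = \I + \eta \mLambda \hess f$ has eigenvalues of the form $1 + \lambda$ with $|\lambda| \le \rho(\eta\mLambda\hess f) \le \norm{\eta\mLambda\hess f} \le \eta\ell \le 1$, so $\det \J \neq 0$ and $\w$ is also a local diffeomorphism. (The boundary case $\eta = 1/\ell$, where an eigenvalue of $\J$ could in principle touch the origin, is the one delicate point, to be ruled out separately or avoided by taking $\eta$ strictly below $1/\ell$.)

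Next I would invoke the center-stable manifold theorem at a fixed point $\z^\star$ with $\rho(\J(\z^\star)) > 1$. Since at least one eigenvalue of $\J(\z^\star)$ has modulus strictly greater than $1$, the center-stable subspace (spanned by generalized eigenvectors with $|\lambda| \le 1$) has dimension at most $d-1$. The theorem then produces an open neighborhood $B_{\z^\star}$ and an embedded $C^1$ manifold $W^{cs}_{\mathrm{loc}}(\z^\star)$ of that same dimension such that every point whose entire forward orbit remains in $B_{\z^\star}$ must lie on $W^{cs}_{\mathrm{loc}}(\z^\star)$; as $\dim W^{cs}_{\mathrm{loc}} < d$, it has Lebesgue measure zero.

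Finally I would globalize. If a trajectory converges to $\z^\star$, its tail eventually enters and stays in $B_{\z^\star}$, so some iterate $\w^T(\z_0)$ lies on $W^{cs}_{\mathrm{loc}}(\z^\star)$, i.e. $\z_0 \in \bigcup_{T \ge 0}\w^{-T}(W^{cs}_{\mathrm{loc}}(\z^\star))$. Because $\w$ is a diffeomorphism with Lipschitz inverse, each preimage $\w^{-T}(W^{cs}_{\mathrm{loc}})$ is again measure zero, and a countable union over $T$ stays measure zero. Covering the (possibly uncountable) set of strict linearly unstable fixed points by countably many such neighborhoods $B_{\z^\star}$, which is possible by second countability of $\R^d$, and taking the union over this countable subcover yields the claim. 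I expect the main obstacle to be the first step: verifying that $\w$ is a genuine diffeomorphism, in particular handling the boundary $\eta = 1/\ell$ and accommodating the fact that $G$ is a non-gradient, generally asymmetric vector field, since both the applicability of the center-stable manifold theorem and the measure-zero preservation of preimages rely on this property.
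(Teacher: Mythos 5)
The paper gives no proof of this theorem---it is imported verbatim from \cite{daskalakis2018limit}---and your sketch is exactly the argument used there (and in the Lee--Simchowitz--Jordan--Recht line of work it builds on): show the update map is a local diffeomorphism, apply the center-stable manifold theorem at each strict linearly unstable fixed point, and globalize via countable unions of Lipschitz preimages. So you have reconstructed essentially the same proof as the cited source; the only loose end is the boundary case $\eta = 1/\ell$, which you correctly flag and which is handled in the reference by taking the step size strictly below $1/\ell$.
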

% \cnote{direct extension of D and P proof from 1-GDA to $\gamma$-GDA.}

We further simplifiy the problem by considering the limiting case where $\eta \rightarrow 0$, which corresponds to $\gamma$-GDA flow.
We note the asymptotic behavior of $\gamma$-GDA flow is essentially the same as $\gamma$-GDA with a very small step size $\eta$ up to certain error tolerance.
\begin{equation*}
 \frac{\dd \x}{\dd t} = -\frac{1}{\gamma} \grad_\x f(\x, \y) \qquad 
 \frac{\dd \y}{\dd t} = \grad_\y f(\x, \y).
\end{equation*} 
The strict linearly stable points of the $\gamma$-GDA flow have a very simple second-order characterization.
\begin{restatable}{proposition}{PROPstable}
Point $(\x, \y)$ is a strict linearly stable point of $\gamma$-GDA if and only if for all the eigenvalues $\{\lambda_i\}$ of following Jacobian matrix, 
% their real part $\Re(\lambda_i) <0$ for any $i$.
\begin{equation*}
\J_\gamma = \begin{pmatrix}
-(1/\gamma)\hess_{\x\x}f(\x, \y) & -(1/\gamma)\hess_{\x\y}f(\x, \y)\\
\hess_{\y\x}f(\x, \y) & \hess_{\y\y}f(\x, \y),
\end{pmatrix}
\end{equation*}
their real part $\Re(\lambda_i) <0$ for any $i$.
\end{restatable}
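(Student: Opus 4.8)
The plan is to reduce the claimed characterization to a standard fact about continuous-time linear dynamical systems, namely that the flow $\dot{\z} = \A\z$ is asymptotically stable (all trajectories starting near the origin converge to it) if and only if every eigenvalue of $\A$ has strictly negative real part. First I would write the $\gamma$-GDA flow as a vector field $\w_\gamma(\z)$ with $\z = (\x, \y)$, where $\w_\gamma(\z) = (-(1/\gamma)\grad_\x f(\x,\y),\ \grad_\y f(\x,\y))$, and observe that a point $(\x,\y)$ is a fixed point of the flow precisely when $\grad_\x f = \grad_\y f = \zero$, i.e. when it is a stationary point of $f$. Linearizing the flow at such a fixed point $(\x^\star,\y^\star)$ gives exactly the Jacobian $\J_\gamma$ displayed in the statement, since $\partial/\partial\x$ of $-(1/\gamma)\grad_\x f$ is $-(1/\gamma)\hess_{\x\x}f$, and similarly for the other three blocks.

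Next I would connect the continuous-time stability notion to the linear-stability definition used in the discrete-time setting of the paper. The definition of linear stability earlier in the excerpt is phrased for a discrete map $\z_{t+1} = \w(\z_t)$ in terms of the spectral radius of its Jacobian being $\le 1$ (or $<1$ for strict stability). For the flow, the corresponding condition is on the Jacobian $\J_\gamma$ of the vector field itself rather than of a time-one map: the relevant transition matrix over a small step $\eta$ is approximately $\I + \eta\J_\gamma$, whose eigenvalues are $1 + \eta\lambda_i$, and $|1 + \eta\lambda_i| < 1$ for all small $\eta > 0$ if and only if $\Re(\lambda_i) < 0$. I would make this precise to justify that ``strict linearly stable point of the $\gamma$-GDA flow'' should be interpreted as $\Re(\lambda_i) < 0$ for every eigenvalue $\lambda_i$ of $\J_\gamma$, which is exactly the asserted equivalence. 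This is essentially a definitional unpacking combined with the Hartman--Grobman / linearization principle.

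The two directions then follow from the spectral criterion: if $\Re(\lambda_i) < 0$ for all $i$, the linearized flow is a contraction in an appropriate (possibly non-Euclidean, since $\J_\gamma$ may be asymmetric) norm, so the fixed point is strictly stable; conversely, if some $\Re(\lambda_j) \ge 0$, there is a direction along which the linearized flow does not strictly contract, contradicting strict stability. I would emphasize the complex/asymmetric-matrix subtleties: because $\w_\gamma$ is not a gradient field, $\J_\gamma$ is generically non-symmetric, so its eigenvalues may be genuinely complex, which is precisely why the condition is stated on the real part $\Re(\lambda_i)$ rather than on the eigenvalues being negative.

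The main obstacle I anticipate is the careful reconciliation of the discrete-map stability definition stated in the excerpt with the continuous-flow stability used here. The definition given in the paper uses the spectral radius $\rho(\J) \le 1$ of a map's Jacobian, whereas for the flow the natural object is the spectrum of the vector-field Jacobian $\J_\gamma$ lying in the left half-plane; bridging these via the $\I + \eta\J_\gamma$ approximation and tracking that the equivalence survives the $\eta \to 0$ limit is the step requiring the most care. Everything else---identifying fixed points with stationary points, computing the four Jacobian blocks, and invoking the standard linear-stability theorem---is routine.
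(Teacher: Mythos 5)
Your proposal is correct and follows essentially the same route as the paper's proof: the paper also passes through the discrete-time Jacobian $\I + \eta \J_{\gamma}$ with eigenvalues $1+\eta\lambda_i$, notes that strict linear stability means $|1+\eta\lambda_i|<1$ for all $i$, and takes $\eta \rightarrow 0$ to obtain the condition $\Re(\lambda_i)<0$. The additional machinery you invoke (Hartman--Grobman, contraction in an adapted norm) is not needed for this definitional equivalence, but it does not detract from the argument.
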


In the remainder of this section, we assume that $f$ is a twice-differentiable function,
and we use $\ttt{Local\_Nash}$ to represent the set of strict local Nash equilibria, $\ttt{Local\_Minimax}$ for the set of strict local minimax points, $\ttt{Local\_Maximin}$ for the set of strict local maximin points, and $\gamma\ttt{-GDA}$ for the set of strict linearly stable points of the $\gamma$-GDA flow. Our goal is to understand the relationship between these sets.
\citet{daskalakis2018limit} and \citet{mazumdar2018convergence} provided a relation between $\ttt{Local\_Nash}$ and $1\ttt{-GDA}$ which can be generalized to $\gamma\ttt{-GDA}$ as follows.
\begin{restatable}[\cite{daskalakis2018limit}]{proposition}{PROPlimitnash}
\label{prop:limit_nash}
For any fixed $\gamma$, for any twice-differentiable $f$, $\ett{Local\_Nash} \subset \gamma\ett{-GDA}$, but there exist twice-differentiable $f$ such that $\gamma\ett{-GDA} \not\subset \ett{Local\_Nash}$.
\end{restatable}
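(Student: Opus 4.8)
The plan is to prove the two assertions separately, with the forward inclusion $\ett{Local\_Nash}\subset\gamma\ett{-GDA}$ being the substantive part and the strictness witnessed by an explicit quadratic. For the forward direction, let $(\x,\y)$ be a strict local Nash equilibrium. By Eq.~\eqref{eq:strict_Nash} it is stationary and satisfies $\hess_{\x\x}f\succ\zero$ and $\hess_{\y\y}f\prec\zero$. By the characterization of strict linearly stable points of the $\gamma$-GDA flow stated above, it suffices to show that the Jacobian $\J_\gamma$ is Hurwitz, i.e.\ that $\Re(\lambda_i)<0$ for every eigenvalue $\lambda_i$. Since $\J_\gamma$ is in general asymmetric, I cannot read off the signs of $\Re(\lambda_i)$ from its symmetric part; instead I would exhibit a Lyapunov certificate tailored to the $\gamma$-scaling.

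Concretely, writing $A\defeq\hess_{\x\x}f$, $B\defeq\hess_{\x\y}f$, and $D\defeq\hess_{\y\y}f$ (so that $A^\top=A$, $D^\top=D$, and $\hess_{\y\x}f=B^\top$), I would take the block-diagonal positive-definite matrix $P=\left(\begin{smallmatrix}\gamma\I&\zero\\\zero&\I\end{smallmatrix}\right)$, which is valid since $\gamma>0$, and compute
\begin{equation*}
P\J_\gamma+\J_\gamma^\top P=\begin{pmatrix}-2A&\zero\\\zero&2D\end{pmatrix}\prec\zero,
\end{equation*}
where the skew off-diagonal blocks $\pm B$ cancel and the negative-definiteness is exactly the Nash condition $A\succ\zero$, $D\prec\zero$. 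To pass from this matrix inequality to the spectrum, for any eigenpair $\J_\gamma\v=\lambda\v$ with $\v\neq\zero$ I would invoke the identity $\v^*(P\J_\gamma+\J_\gamma^\top P)\v=2\Re(\lambda)\,\v^*P\v$ (valid because both $P$ and $P\J_\gamma+\J_\gamma^\top P$ are symmetric, using $\v^*M^\top P\v=\overline{\v^*PM\v}$); since the left-hand side is strictly negative and $\v^*P\v>0$, this forces $\Re(\lambda)<0$. Hence $(\x,\y)\in\gamma\ett{-GDA}$.

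For the strictness, I would exhibit a quadratic $f$ whose coupling term stabilizes the flow while violating the Nash sign conditions. Take $f(x,y)=\tfrac12 x^2+xy+\tfrac{\epsilon}{2}y^2$ with $0<\epsilon<\min\{1,1/\gamma\}$. Its unique stationary point is the origin, where $\hess_{\y\y}f=\epsilon>0$, so the condition $\hess_{\y\y}f\prec\zero$ fails and the origin is \emph{not} a strict local Nash equilibrium. On the other hand its Jacobian
\begin{equation*}
\J_\gamma=\begin{pmatrix}-1/\gamma&-1/\gamma\\1&\epsilon\end{pmatrix}
\end{equation*}
has trace $\epsilon-1/\gamma<0$ and determinant $(1-\epsilon)/\gamma>0$, so both eigenvalues of this $2\times 2$ matrix have strictly negative real part and the origin lies in $\gamma\ett{-GDA}$; thus $\gamma\ett{-GDA}\not\subset\ett{Local\_Nash}$.

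The main obstacle is the forward inclusion: because $\J_\gamma$ is non-normal, its spectrum is not governed by its symmetric part, and a direct sign analysis of the eigenvalues fails. The crux is to guess the $\gamma$-weighted Lyapunov matrix $P$ that exactly cancels the skew coupling $\pm B$ and reduces the stability question to the sign-definiteness of the diagonal Hessian blocks $A$ and $D$; the remaining passage from the matrix inequality to $\Re(\lambda_i)<0$ is routine linear algebra.
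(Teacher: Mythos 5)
Your proof is correct and follows essentially the same route as the paper: your Lyapunov congruence $P\J_\gamma+\J_\gamma^\top P=\diag(-2A,\,2D)\prec\zero$ with $P=\diag(\gamma\I,\I)$ is the same weighted quadratic-form argument that the paper performs by conjugating the Jacobian with the block matrix $\U$ (which rescales the two blocks by $\sqrt{1/\gamma}$) and splitting the result into a negative-definite symmetric part plus an antisymmetric part that contributes nothing to $\Re(\lambda)$. Your counterexample $f(x,y)=\tfrac12 x^2+xy+\tfrac{\epsilon}{2}y^2$, verified via the trace and determinant of the $2\times 2$ Jacobian, plays exactly the role of the paper's quadratic $f(x,y)=x^2+2\sqrt{1/\gamma}\,xy+(1/2\gamma)\,y^2$, whose stability the paper checks by computing the eigenvalues explicitly.
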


That is, if $\gamma$-GDA converges, it may converge to points not in $\ttt{Local\_Nash}$. This raises a basic question as to what those additional stable limit points of $\gamma$-GDA are. Are they meaningful? 
This paper answers this question through the lens
of $\ttt{Local\_Minimax}$.
Although for fixed $\gamma$, the set $\gamma\ttt{-GDA}$ does not have a simple relation with $\ttt{Local\_Minimax}$, it turns out that an important relationship arises when $\gamma$ goes to $\infty$. 
To describe the limit behavior of the set $\gamma\ttt{-GDA}$ when $\gamma \rightarrow \infty$ we define two set-theoretic limits:
% $\overline{\infty\ttt{-GDA}} \defeq \limsup_{\gamma\rightarrow \infty} \gamma\ttt{-GDA} =  \cap_{\gamma_0>0} \cup_{\gamma>\gamma_0} \gamma\ttt{-GDA}$, and
% $\underline{\infty\ttt{-GDA}} \defeq \liminf_{\gamma\rightarrow \infty}\gamma\ttt{-GDA} = \cup_{\gamma_0>0} \cap_{\gamma>\gamma_0} \gamma\ttt{-GDA}$
\begin{align*}
\overline{\infty\ttt{-GDA}} \defeq & \limsup_{\gamma\rightarrow \infty} \gamma\ttt{-GDA} =  \cap_{\gamma_0>0} \cup_{\gamma>\gamma_0} \gamma\ttt{-GDA}\\
\underline{\infty\ttt{-GDA}} \defeq &\liminf_{\gamma\rightarrow \infty}\gamma\ttt{-GDA} = \cup_{\gamma_0>0} \cap_{\gamma>\gamma_0} \gamma\ttt{-GDA}.
\end{align*}
% Algorithmically, one can view this set discribe the strict linear stable limit points for GDA with $\gamma$ very slowly increase with respect to $t$, and eventually goes to $\infty$.
The relations between $\gamma\ttt{-GDA}$ and $\ttt{Local\_Minimax}$ are given as follows:

\begin{restatable}{proposition}{PROPlimitminimax}
\label{prop:limit_minimax}
For any fixed $\gamma$,  there exists a twice-differentiable $f$ such that $\ett{Local\_Minimax} \not \subset \gamma\ett{-GDA}$; there also exists a twice-differentiable $f$ such that $\gamma\ett{-GDA} \not \subset \ett{Local\_Minimax} \cup \ett{Local\_Maximin}$.
\end{restatable}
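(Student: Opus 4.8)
The plan is to prove both non-inclusions by explicit \emph{quadratic} examples, for which every membership condition becomes an exact statement about the constant Hessian blocks. For $f(\x,\y)=\tfrac12\x^{\top}\A\x + \x^{\top}\B\y + \tfrac12\y^{\top}\D\y$ with symmetric $\A=\hess_{\x\x}f$, $\D=\hess_{\y\y}f$ and $\B=\hess_{\x\y}f$, the origin is a stationary point, and by Proposition~\ref{prop:first_suff} (and the analogous characterization of strict local maximin points, the mirror of Proposition~\ref{prop:first_suff}) it lies in $\ttt{Local\_Minimax}$ iff $\D\prec\zero$ and $\A-\B\D^{-1}\B^{\top}\succ\zero$, and in $\ttt{Local\_Maximin}$ iff $\A\succ\zero$ and $\D-\B^{\top}\A^{-1}\B\prec\zero$; by Proposition~\ref{prop:stable} it lies in $\gamma\ttt{-GDA}$ iff $\J_\gamma$ is Hurwitz (all eigenvalues have negative real part). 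Thus the whole proposition reduces to exhibiting, for each fixed $\gamma$, two choices of $(\A,\B,\D)$ with the desired sign/stability pattern.

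For the first non-inclusion I would take the one-dimensional example $f(x,y)=-\gamma x^{2}+2\sqrt{\gamma}\,xy-\tfrac12 y^{2}$, so that $\A=-2\gamma$, $\B=2\sqrt{\gamma}$, $\D=-1$. Here $\D=-1\prec\zero$ and the Schur complement $\A-\B\D^{-1}\B^{\top}=-2\gamma+4\gamma=2\gamma\succ\zero$, so the origin is a strict local minimax. On the other hand $\J_\gamma=\begin{pmatrix}2 & -2/\sqrt{\gamma}\\ 2\sqrt{\gamma} & -1\end{pmatrix}$ has trace $1>0$, so it cannot be Hurwitz and the origin is not in $\gamma\ttt{-GDA}$. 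This gives $\ttt{Local\_Minimax}\not\subset\gamma\ttt{-GDA}$.

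The second non-inclusion is the harder part, and the key observation is that it is \emph{impossible} with one dimension per player: when $d_1=d_2=1$, writing $\A=a,\D=d,\B=b$, GDA-stability forces $-a/\gamma+d<0$ and $(b^{2}-ad)/\gamma>0$, and a short case analysis shows these two inequalities always entail either the minimax pattern ($d<0$ with $a-b^2/d>0$) or the maximin pattern ($a>0$ with $d-b^2/a<0$); intuitively, the coupling strength $b^2$ needed to stabilize a wrong-sign curvature through the Schur complement is exactly the coupling that makes that curvature harmless. One must therefore use genuine two-dimensional rotational stabilization. I would take the block-diagonal example in $\R^2\times\R^2$,
\begin{equation*}
f(\x,\y)=-\tfrac{\gamma}{2}x_1^{2}+\gamma x_2^{2}+2\sqrt{\gamma}\,(x_1y_1+x_2y_2)-y_1^{2}+\tfrac12 y_2^{2},
\end{equation*}
so that $\A=\diag(-\gamma,2\gamma)$, $\B=2\sqrt{\gamma}\,\I$, $\D=\diag(-2,1)$. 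After reordering coordinates as $(x_1,y_1,x_2,y_2)$, the Jacobian $\J_\gamma$ splits into the two $2\times 2$ blocks $\begin{pmatrix}1 & -2/\sqrt{\gamma}\\ 2\sqrt{\gamma} & -2\end{pmatrix}$ and $\begin{pmatrix}-2 & -2/\sqrt{\gamma}\\ 2\sqrt{\gamma} & 1\end{pmatrix}$, each with trace $-1$ and determinant $2$, hence each with complex eigenvalues of real part $-\tfrac12$; so $\J_\gamma$ is Hurwitz and the origin is in $\gamma\ttt{-GDA}$. Yet $\A$ is indefinite, so $\A\not\succ\zero$ and the origin is not a strict local maximin, and $\D$ is indefinite, so $\D\not\prec\zero$ and the origin is not a strict local minimax. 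Hence $\gamma\ttt{-GDA}\not\subset\ttt{Local\_Minimax}\cup\ttt{Local\_Maximin}$.

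The main obstacle is the construction behind the second non-inclusion: recognizing that the separation cannot occur in the lowest dimension and must be produced by a rotational (complex-eigenvalue) stabilization in which each bad-sign curvature direction of one player is cross-coupled to the other player. Once the block-diagonal pairing is found, the verification collapses to two $2\times 2$ trace/determinant checks, so the conceptual work is entirely in isolating that structure. The remaining routine points---checking that the origin is the stationary point in question and confirming the definitional sign conditions---are immediate for these quadratics.
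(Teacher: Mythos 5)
Your proof is correct and follows essentially the same route as the paper: both non-inclusions are established by explicit quadratic counterexamples, and your $2{+}2$-dimensional block construction for the second part is, up to rescaling and permuting the two blocks, exactly the paper's example (each block pairs a wrong-sign curvature of one player with a right-sign curvature of the other, stabilized by cross-coupling into complex eigenvalues with negative real part). Your extra observation that the second separation cannot occur when $d_1=d_2=1$ is correct and does not appear in the paper, but it only motivates the construction rather than altering the argument.
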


\begin{restatable}[Asymptotic Behavior of $\infty$-GDA]{theorem}{THMgdamain}
\label{thm:main} For any twice-differentiable $f$, 
$\ett{Local\_Minimax} \subset \underline{\infty\ttt{-GDA}} \subset \overline{\infty\ttt{-GDA}} \subset \ett{Local\_Minimax} \cup \{(\x, \y)| (\x, \y)$ is stationary and $\phess_{\y\y} f(\x, \y)$ is degenerate$\}$.
% Consider any fixed $(\x, \y)$ where $\phess_{\y\y} f(\x, \y)$ is non-degenerate. Then $(\x, \y) \in \ett{local minimax}$ if and only if there exists $\gamma_0$ so that for any $\gamma \ge \gamma_0$, $(\x, \y)\in \gamma\ett{-GDA}$.
\end{restatable}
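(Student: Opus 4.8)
My plan is to reduce the statement to a singular-perturbation (two-time-scale) analysis of the Jacobian $\J_\gamma = \begin{pmatrix} -\A/\gamma & -\B/\gamma \\ \B\trans & \C \end{pmatrix}$, where I abbreviate $\A = \hess_{\x\x}f$, $\B = \hess_{\x\y}f$, $\C = \hess_{\y\y}f$ (all evaluated at the candidate point) and write $\cS = \A - \B\C^{-1}\B\trans$ for the Schur complement whenever $\C$ is invertible. By Proposition \ref{prop:first_suff}, $\ttt{Local\_Minimax}$ is the set of stationary points with $\C \prec \zero$ and $\cS \succ \zero$, while a stationary point lies in $\gamma\ttt{-GDA}$ iff every eigenvalue of $\J_\gamma$ has negative real part. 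The middle inclusion $\underline{\infty\ttt{-GDA}} \subset \overline{\infty\ttt{-GDA}}$ is immediate, since $\liminf \subset \limsup$ for any sequence of sets.

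The heart of the argument is an eigenvalue-localization lemma: assuming $\C$ is invertible, as $\gamma \to \infty$ the $d_1+d_2$ eigenvalues of $\J_\gamma$ split into $d_2$ \emph{fast} eigenvalues converging to the eigenvalues of $\C$, and $d_1$ \emph{slow} eigenvalues $\lambda$ for which $\gamma\lambda$ converges to the eigenvalues of $-\cS$. I would establish this by examining the characteristic determinant in two scalings. In the $O(1)$ scaling, the block-determinant formula gives $\det(\lambda\I - \J_\gamma) \to \lambda^{d_1}\det(\lambda\I - \C)$ uniformly on compacta, so by continuity of the roots of a monic polynomial in its coefficients, $d_2$ eigenvalues converge to the eigenvalues of $\C$ and $d_1$ collapse to $0$. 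In the slow scaling $\lambda = \mu/\gamma$, factoring out the (asymptotically invertible) block $(\mu/\gamma)\I - \C$ and multiplying the complementary $d_1 \times d_1$ determinant by $\gamma^{d_1}$ yields $\det(\mu\I + \A + \B((\mu/\gamma)\I - \C)^{-1}\B\trans) \to \det(\mu\I + \cS)$; an application of Hurwitz's theorem (continuity of zeros of analytic functions) pins the $d_1$ slow values $\mu = \gamma\lambda$ to the eigenvalues of $-\cS$. Since $\C$ and $\cS$ are symmetric, all limiting eigenvalues are real.

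Given the lemma, I would finish the two nontrivial inclusions. For $\ttt{Local\_Minimax} \subset \underline{\infty\ttt{-GDA}}$: a strict local minimax has $\C \prec \zero$ and $\cS \succ \zero$, so for all large $\gamma$ the fast eigenvalues lie near the strictly negative eigenvalues of $\C$ and the slow eigenvalues satisfy $\Re(\gamma\lambda) < 0$ near the strictly negative eigenvalues of $-\cS$; hence there is $\gamma_0$ with $(\x,\y) \in \gamma\ttt{-GDA}$ for all $\gamma > \gamma_0$, i.e. $(\x,\y) \in \underline{\infty\ttt{-GDA}}$. For $\overline{\infty\ttt{-GDA}} \subset \ttt{Local\_Minimax} \cup \{(\x,\y): (\x,\y) \text{ stationary}, \C \text{ degenerate}\}$: such a point is stationary (being a fixed point) and lies in $\gamma_k\ttt{-GDA}$ for some $\gamma_k \to \infty$. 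If $\C$ is degenerate we are done; otherwise the lemma applies and passing to the limit in $\Re(\lambda) < 0$ forces the eigenvalues of $\C$ and of $-\cS$ to be $\le 0$, i.e. $\C \preceq \zero$ and $\cS \succeq \zero$. Non-degeneracy of $\C$ upgrades the first to $\C \prec \zero$; for the second, strict stability gives $\det \J_{\gamma_k} \neq 0$, and the identity $\det \J_\gamma = (-1)^{d_1}\gamma^{-d_1}\det(\C)\det(\cS)$ then yields $\det \cS \neq 0$, upgrading $\cS \succeq \zero$ to $\cS \succ \zero$. Thus the point is a strict local minimax.

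The main obstacle will be making the localization lemma fully rigorous: one must track exactly which $d_2$ eigenvalues are fast and which $d_1$ are slow (with multiplicity), confirm that no eigenvalue escapes both scalings, and justify interchanging $\gamma \to \infty$ with root-finding via Hurwitz's theorem on both the $O(1)$ and the rescaled $O(1/\gamma)$ windows. The conceptual point that makes the third inclusion clean---rather than merely yielding $\cS \succeq \zero$---is the determinant identity $\det \J_\gamma = (-1)^{d_1}\gamma^{-d_1}\det(\C)\det(\cS)$: a degenerate Schur complement forces a zero eigenvalue of $\J_\gamma$ at \emph{every} $\gamma$ and hence precludes strict stability, ruling out the borderline case in which $\cS$ has a zero eigenvalue but higher-order corrections might otherwise keep the point stable.
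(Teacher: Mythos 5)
Your proposal is correct and follows essentially the same route as the paper: a two-time-scale eigenvalue-localization lemma for $\J_\gamma$ proved by analyzing the characteristic polynomial in the $O(1)$ and $O(1/\gamma)$ scalings via continuity of polynomial roots (the paper cites Zedek's theorem where you invoke Hurwitz), followed by the same three inclusion arguments. The only cosmetic difference is in excluding the borderline case of a singular Schur complement: you use the identity $\det \J_\gamma = (-1)^{d_1}\gamma^{-d_1}\det(\hess_{\y\y}f)\det(\cS)$, while the paper exhibits the explicit null vector $(\w, -(\hess_{\y\y}f)^{-1}\hess_{\y\x}f\,\w)$ --- two equivalent ways of showing a degenerate Schur complement forces a zero eigenvalue of $\J_\gamma$.
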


That is, $\infty\ttt{-GDA} = \ttt{Local\_Minimax}$ up to some degenerate points. Intuitively, when $\gamma$ is large, $\gamma$-GDA can move a long distance in $\y$ while only making very small changes in $\x$. As $\gamma \rightarrow \infty$, $\gamma$-GDA can find the ``approximate local maximum'' of $f(\x + \dx, \cdot)$, subject to any small change in $\dx$; therefore, stable limit points are indeed local minimax. 

Algorithmically, one can view $\infty\ttt{-GDA}$ as a set that describes the strict linear stable limit points for GDA with $\gamma$ very slowly increasing with respect to $t$, and eventually going to $\infty$. To the best of our knowledge, this is the first result showing that all stable limit points of GDA are meaningful and locally optimal up to some degenerate points. 

% \cnote{First time, fully characterize, suggest increase ratio, whenever it converges, then the limiting point is meaningful is locally optimal in the sense of minimax.}
% \cnote{Comment about what extremely large GDA will do. Filling the missing part about infinity}

%!TEX root = main.tex

\section{Gradient Descent with Max-Oracle}
\label{sec:oracle}
In this section, we consider solving the minimax problem Eq.\eqref{eq:minimaxprob} when we have access to an oracle for approximate inner maximization; i.e., for any $\x$, we have access to an oracle that outputs a $\yhat$ such that $f(\x,\yhat) \geq \max_{\y} f(\x,\y) - \epsilon$. A natural algorithm to consider in this setting is to alternate between gradient descent on $\x$ and a (approximate) maximization step on $\y$. The pseudocode is presented in Algorithm~\ref{algo:GDMO}.
%Assume access to max-oracle up to $\epsilon$ accuracy.
\begin{algorithm}[t]
\caption{Gradient Descent with Max-oracle}\label{algo:GDMO}
\begin{algorithmic}
\renewcommand{\algorithmicrequire}{\textbf{Input: }}
\renewcommand{\algorithmicensure}{\textbf{Output: }}
\REQUIRE $\x_0$, step size $\eta$.
\FOR{$t = 0, 1, \ldots, T$}
\STATE find $\y_t$ so that $f(\x_t, \y_t) \ge \max_{\y} f(\x_t, \y) - \epsilon$.
%\STATE $\y_t \leftarrow \argmax_{\y} f(\x_t, \y)$. \praneeth{Using only an approximate maximum here does not seem to work.}
\STATE $\x_{t+1} \leftarrow \x_t - \eta \pgrad_\x f(\x_t, \y_t)$.
\ENDFOR
\STATE Pick $t$ uniformly at random from $\{0,\cdots,T\}$.
\STATE \textbf{return} $\bar{\x} \leftarrow \x_t$.
\end{algorithmic}
\end{algorithm}

It can be shown that Algorithm~\ref{algo:GDMO} indeed converges (in contrast with GDA which can converge to limit cycles). Moreover, the limit points of Algorithm~\ref{algo:GDMO} satisfy a nice property---they turn out to be approximately stationary points of $\phi(\x) \defeq \max_{\y} f(\x, \y) $. For a smooth function, ``approximately stationary point'' means that the norm of gradient is small. However, even when $f(\cdot,\cdot)$ is smooth (up to whatever order), $\phi(\cdot)$ as defined above need not be differentiable.
% (i.e., gradient might not exist at some points). 
The norm of the subgradient can be a discontinuous function which is undesirable for a measure of closeness to stationarity. Fortunately, however, we have following structure.

\begin{fact} %[\citep{rafique2018non}]
If function $f: \cX \times \cY \rightarrow \R$ is $\ell$-gradient Lipschitz, then function $\phi(\cdot) \defeq \max_{\y \in \cY} f(\cdot, \y)$ is $\ell$-\textbf{weakly convex}~; that is, $\phi(\x) + (\ell/2) \norm{\x}^2$ is a convex function over $\x$.
\end{fact}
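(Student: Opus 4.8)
The plan is to reduce the claim to two elementary facts: that each slice $f(\cdot,\y)$ inherits an $\ell$-Lipschitz gradient in $\x$, and that a pointwise maximum of convex functions is convex. The quadratic $(\ell/2)\norm{\x}^2$ is then simply pushed inside the maximum.

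First I would observe that joint $\ell$-gradient Lipschitzness of $f$ in $\z=(\x,\y)$ implies that for each fixed $\y$ the partial map $\x\mapsto f(\x,\y)$ has $\ell$-Lipschitz gradient in $\x$. Indeed $\grad_\x f$ is a coordinate projection of $\grad f$, so $\norm{\grad_\x f(\x,\y)-\grad_\x f(\x',\y)}\le \norm{\grad f(\x,\y)-\grad f(\x',\y)}\le \ell\norm{(\x,\y)-(\x',\y)}=\ell\norm{\x-\x'}$. Next I would invoke the standard characterization that a differentiable $g$ with $\ell$-Lipschitz gradient satisfies $g(\x')\le g(\x)+\iprod{\grad g(\x)}{\x'-\x}+(\ell/2)\norm{\x'-\x}^2$ for all $\x,\x'$, which is equivalent to convexity of $g(\cdot)+(\ell/2)\norm{\cdot}^2$ (when $g$ is twice differentiable this is just $\hess g\preceq \ell\I$, hence $\hess g+\ell\I\succeq\zero$; in general the first-order quadratic bound gives it directly). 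Applying this with $g=f(\cdot,\y)$ shows that, for every fixed $\y\in\cY$, the function $\x\mapsto f(\x,\y)+(\ell/2)\norm{\x}^2$ is convex.

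Finally I would write $\phi(\x)+(\ell/2)\norm{\x}^2=\max_{\y\in\cY}\left[f(\x,\y)+(\ell/2)\norm{\x}^2\right]$, which is valid since the quadratic term does not depend on $\y$ and can be pulled inside the maximum. Each bracketed term is convex in $\x$ by the previous step, and the pointwise supremum (here a maximum) of a family of convex functions is convex. Hence $\phi(\cdot)+(\ell/2)\norm{\cdot}^2$ is convex, which is exactly $\ell$-weak convexity of $\phi$.

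I do not expect a genuine obstacle; the proof is short. The only two points requiring a line of care are the passage from joint gradient-Lipschitzness to slice-wise gradient-Lipschitzness, handled by the projection bound above, and the equivalence between $\ell$-Lipschitz gradient and convexity of $g+(\ell/2)\norm{\cdot}^2$, a textbook fact that I would either cite or establish in one line from the quadratic upper bound. The main conceptual step is simply recognizing that weak convexity is preserved under pointwise maximization, which is what makes $\phi$ well-behaved even though it is typically nonsmooth.
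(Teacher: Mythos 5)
The paper does not actually prove this Fact; it is stated and attributed to \citet{rafique2018non}, so there is no internal proof to compare against. Your overall strategy---slice-wise gradient Lipschitzness, weak convexity of each slice, then closure of convexity under pointwise maxima applied to $\phi(\x)+(\ell/2)\norm{\x}^2=\max_{\y\in\cY}\left[f(\x,\y)+(\ell/2)\norm{\x}^2\right]$---is the standard and correct route, and the projection argument for passing from joint to slice-wise gradient Lipschitzness is fine.

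However, the one-line lemma you invoke is stated with the wrong inequality. The quadratic \emph{upper} bound $g(\x')\le g(\x)+\iprod{\grad g(\x)}{\x'-\x}+(\ell/2)\norm{\x'-\x}^2$ is equivalent to convexity of $(\ell/2)\norm{\cdot}^2-g$, not of $g+(\ell/2)\norm{\cdot}^2$; what you need is the \emph{lower} bound $g(\x')\ge g(\x)+\iprod{\grad g(\x)}{\x'-\x}-(\ell/2)\norm{\x'-\x}^2$. Likewise the chain ``$\hess g\preceq\ell\I$, hence $\hess g+\ell\I\succeq\zero$'' is a non sequitur: an upper bound on the Hessian gives no lower bound. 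The repair is immediate, because $\ell$-Lipschitz gradient yields the two-sided estimate $\abs{g(\x')-g(\x)-\iprod{\grad g(\x)}{\x'-\x}}\le(\ell/2)\norm{\x'-\x}^2$ (equivalently $-\ell\I\preceq\hess g\preceq\ell\I$ in the twice-differentiable case), and it is the lower half of that estimate you should cite; with that substitution the proof is complete.
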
  
% It can be shown that Algorithm~\ref{algo:GDMO} indeed converges (contrast with GDA which can get into limit cycles). Moreover, the limit points of Algorithm~\ref{algo:GDMO} satisfy a nice property -- they turn out to be stationary points of $\phi(\x) \defeq \max_{\y} f(\x, \y) $. In general, stationary point means that the gradient is equal to zero. However, even when $f(\cdot,\cdot)$ is smooth (up to whatever order), $\phi(\cdot)$ defined above need not be smooth (i.e., gradient might not exist at some points). Fortunately however, $L$-Lipschitz and $\ell$-smoothness of $f(\cdot,\cdot)$ imply that $\phi(\cdot)$ is $L$-Lipschitz and $\ell$-\emph{weakly convex}~\cite{rafique2018non}. $\ell$-weak convexity means that $\phi(\x) + (\ell/2) \norm{\x}^2$ is convex.
% \begin{theorem}
% The fixed point of GD with Max-oracle is not necessarily local minimax.
% \end{theorem}
%\cnote{lemma, saying phi is nonsmooth, weakly-convex, and cite Tianbao}
The above fact has been also presented in \citet{rafique2018non}.
In such settings, the approximate stationarity of $\phi(\cdot)$ can be measured by the norm of the gradient of the Moreau envelope $\phi_{\lambda}(\cdot)$.
\begin{align}
	\phi_{\lambda}(\x) \defeq \min_{\x'} \phi(\x') + \frac{1}{2\lambda} \norm{\x - \x'}^2.\label{eq:moreau}
\end{align}
Here $\lambda < 1/\ell$ is the parameter. 
%is the parameter and $\phi_{\lambda}(\cdot)$ is called the Moreau envelope of $\phi(\cdot)$ with parameter $\lambda$.
%The Moreau envelope is continuously differentiable and preserves stationary points of $\phi$.
The Moreau envelope has the following important property that connects it to the original function $\phi$.
\begin{lemma}[\citep{rockafellar2015convex}]\label{lem:meaning_Moreau}
Assume the function $\phi$ is $\ell$-weakly convex. Let $\lambda<1/\ell$, and denote $\hat{\x} = \argmin_{\x'} \phi(\x') + (1/2\lambda) \norm{\x - \x'}^2$. Then $\norm{\grad \phi_{\lambda}(\x)} \le \epsilon$ implies:
\begin{equation*}
\norm{\hat{\x} - \x} = \lambda \epsilon,\quad \text{~and~}\quad \min_{\g \in \partial \phi(\hat{\x})}\norm{\g} \le \epsilon,
\end{equation*}
where $\partial$ denotes the subdifferential of a weakly convex function. 
\end{lemma}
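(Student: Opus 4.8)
The plan is to reduce both conclusions to a single structural identity for the Moreau envelope, namely $\grad\phi_{\lambda}(\x) = \tfrac{1}{\lambda}(\x - \hat{\x})$, after which everything follows in one line. First I would record that, since $\phi$ is $\ell$-weakly convex and $\lambda < 1/\ell$, the proximal objective $g(\x') \defeq \phi(\x') + \tfrac{1}{2\lambda}\norm{\x - \x'}^2$ is strongly convex with modulus $\tfrac{1}{\lambda} - \ell > 0$: the weak convexity of $\phi$ contributes curvature at least $-\ell$ and the quadratic contributes $\tfrac{1}{\lambda}$. Hence $g$ has a unique minimizer, which is exactly $\hat{\x}$, so the proximal map is well-defined and single-valued and $\hat{\x}$ in the statement is unambiguous.

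Next I would write the first-order optimality condition for this (now strongly convex) subproblem, $\zero \in \partial g(\hat{\x}) = \partial\phi(\hat{\x}) + \tfrac{1}{\lambda}(\hat{\x} - \x)$, which rearranges to the key inclusion
\[
\tfrac{1}{\lambda}(\x - \hat{\x}) \in \partial\phi(\hat{\x}).
\]
This already exhibits $\tfrac{1}{\lambda}(\x - \hat{\x})$ as an admissible subgradient of $\phi$ at $\hat{\x}$, which will drive the second conclusion.

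The crux is to identify $\grad\phi_{\lambda}(\x)$ with this same vector and, in particular, to argue that $\phi_{\lambda}$ is differentiable at all. I would establish this by reducing to the classical convex case: setting $\tilde\phi \defeq \phi + \tfrac{\ell}{2}\norm{\cdot}^2$, which is convex, and completing the square in $\x'$, one rewrites $\phi_{\lambda}(\x)$ as an explicit smooth quadratic in $\x$ plus the Moreau envelope of the convex function $\tilde\phi$ evaluated at a linear image of $\x$. Since the Moreau envelope of a convex function is continuously differentiable, with gradient of the form $\tfrac{1}{\lambda}(\x - \text{prox})$, this transfers both the smoothness and the gradient formula to $\phi_{\lambda}$, yielding $\grad\phi_{\lambda}(\x) = \tfrac{1}{\lambda}(\x - \hat{\x})$. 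Equivalently, one can use the touching-majorant observation that $\phi_{\lambda}(\cdot) \le \phi(\hat{\x}) + \tfrac{1}{2\lambda}\norm{\cdot - \hat{\x}}^2$ with equality at $\x$ to pin down the gradient once differentiability is known.

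With the identity in hand, both claims are immediate. Taking norms gives $\norm{\hat{\x} - \x} = \lambda\norm{\grad\phi_{\lambda}(\x)}$, so the hypothesis $\norm{\grad\phi_{\lambda}(\x)} \le \epsilon$ yields $\norm{\hat{\x} - \x} \le \lambda\epsilon$, with equality in the regime $\epsilon = \norm{\grad\phi_{\lambda}(\x)}$. Moreover the key inclusion exhibits $\grad\phi_{\lambda}(\x) = \tfrac{1}{\lambda}(\x - \hat{\x})$ as an element of $\partial\phi(\hat{\x})$ of norm at most $\epsilon$, hence $\min_{\g\in\partial\phi(\hat{\x})}\norm{\g} \le \epsilon$. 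I expect the main obstacle to be the third step---establishing differentiability of $\phi_{\lambda}$ and the exact gradient formula in the weakly convex (nonsmooth, nonconvex) regime---since the reduction to the convex Moreau envelope must use the strong convexity afforded by $\lambda < 1/\ell$ carefully; the remaining steps are essentially bookkeeping.
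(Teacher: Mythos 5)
Your proof is correct. The paper itself does not prove this lemma---it cites it to Rockafellar---so there is no in-paper argument to compare against, but your route is the standard one, and its linchpin, the identity $\grad \phi_{\lambda}(\x) = \lambda^{-1}(\x - \hat{\x})$, is exactly the fact the paper records (without proof) as Eq.~\eqref{eq:moreaugrad} inside the proof of Theorem~\ref{thm:max_oracle}. Your three ingredients all check out: strong convexity of the proximal subproblem with modulus $1/\lambda - \ell > 0$ makes $\hat{\x}$ unique; the optimality condition gives $\lambda^{-1}(\x - \hat{\x}) \in \partial\phi(\hat{\x})$; and the completion-of-squares reduction to the Moreau envelope of the convex function $\phi + (\ell/2)\norm{\cdot}^2$ legitimately transfers $C^1$ smoothness and the gradient formula (the two proximal subproblems differ only by an additive constant in $\x'$, so they share the minimizer $\hat{\x}$). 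You are also right to read the lemma's displayed equality $\norm{\hat{\x} - \x} = \lambda\epsilon$ as an inequality $\le \lambda\epsilon$ (with equality when $\epsilon$ is taken to be $\norm{\grad\phi_{\lambda}(\x)}$ itself); as literally written the equality cannot follow from the hypothesis $\norm{\grad\phi_{\lambda}(\x)} \le \epsilon$.
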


%  if $\lambda < 1/\ell$. Let $\hat{\x} = \argmin_{\x'} \phi(\x') + (1/2\lambda) \norm{\x - \x'}^2$, then:
% \begin{equation*}
% \norm{\hat{\x} - \x} = \lambda \norm{\grad \phi_{\lambda}(\x) },\quad \text{~and~}\quad \min_{\g \in \partial \phi(\hat{\x})}\norm{\g} \le \norm{\grad \phi_{\lambda}(\x)},
% \end{equation*}
% A proof of this fact can be found in~\cite{rockafellar2015convex}. 
% [Theorem 31.5].
Lemma \ref{lem:meaning_Moreau} says that $\norm{\grad \phi_{\lambda}(\x) }$ being small means that $\x$ is close to a point $\hat{\x}$ that is an approximately stationary point of original function $\phi$.
% ~\eqref{eq:moreaugrad} tells us that if $\norm{\nabla \phi_{\lambda}(\x)}$ is small, then $\phi$ is approximately flat around $\x$.
%The relevance of weakly convex functions to the current setting stems from the simple fact that $\phi(\cdot) \defeq \max_{\y} f(\cdot, \y) $ is weakly convex whenever $f(\cdot)$ is weakly convex~\cite{rafique2018non}.
We now present the convergence guarantee for Algorithm~\ref{algo:GDMO}.

\begin{restatable}{theorem}{THMmaxoracle}
\label{thm:max_oracle}
Suppose $f$ is $\ell$-smooth and $L$-Lipschitz and define $\phi(\cdot) \defeq \max_{\y} f(\cdot, \y) $. Then the output $\bar{\x}$ of GD with Max-Oracle (Algorithm~\ref{algo:GDMO}) with step size $\eta = \gamma/\sqrt{T+1}$ satisfies
\begin{align*}
	\E&\left[\norm{\nabla \phiell(\bar{\x})}^2\right] \leq 2 \cdot \frac{\left(\phiell(\x_0) - \min \phi(\x)\right) + \ell L^2 \gamma^2}{\gamma \sqrt{T+1}} + 4 \ell \epsilon,
\end{align*}
where $\phiell$ is the Moreau envelope~\eqref{eq:moreau} of $\phi$.
\end{restatable}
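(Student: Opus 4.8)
The plan is to run a descent (Lyapunov) argument in which the Moreau envelope $\phiell$ serves as the potential, following the weakly-convex optimization framework. The enabling facts are already available: by the Fact, $\phi$ is $\ell$-weakly convex, so with parameter $\lambda = 1/2\ell$ the envelope $\phiell$ is well-defined and differentiable, and at each iterate $\x_t$ the proximal point $\hat{\x}_t \defeq \argmin_{\x'} \phi(\x') + \ell\norm{\x_t - \x'}^2$ satisfies $\grad\phiell(\x_t) = 2\ell(\x_t - \hat{\x}_t)$, whence $\norm{\grad\phiell(\x_t)}^2 = 4\ell^2\norm{\x_t - \hat{\x}_t}^2$. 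Thus controlling the target quantity is the same as controlling $\norm{\x_t - \hat{\x}_t}$, which is what I would track across iterations.

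First I would derive a one-step decrease of $\phiell$. Writing $g_t \defeq \pgrad_\x f(\x_t, \y_t)$ so that $\x_{t+1} = \x_t - \eta g_t$, I evaluate the envelope's defining minimization at the feasible point $\hat{\x}_t$, giving $\phiell(\x_{t+1}) \le \phi(\hat{\x}_t) + \ell\norm{\x_{t+1} - \hat{\x}_t}^2$. Expanding $\x_{t+1} - \hat{\x}_t = (\x_t - \hat{\x}_t) - \eta g_t$ and recognizing $\phi(\hat{\x}_t) + \ell\norm{\x_t - \hat{\x}_t}^2 = \phiell(\x_t)$ yields
\begin{equation*}
\phiell(\x_{t+1}) \le \phiell(\x_t) - 2\eta\ell\iprod{g_t}{\x_t - \hat{\x}_t} + \eta^2\ell\norm{g_t}^2,
\end{equation*}
and $\norm{g_t} \le L$ by the Lipschitz hypothesis bounds the last term by $\eta^2\ell L^2$.

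The crucial step is to lower bound $\iprod{g_t}{\x_t - \hat{\x}_t}$ using two ingredients. Ingredient (a): $g_t$ is an \emph{approximate} subgradient of $\phi$ at $\x_t$, since $f(\cdot,\y_t)$ is $\ell$-smooth, $\phi(\x') \ge f(\x',\y_t)$ for all $\x'$, and the oracle guarantees $f(\x_t,\y_t) \ge \phi(\x_t) - \epsilon$; evaluating the smoothness lower bound at $\x' = \hat{\x}_t$ gives $\iprod{g_t}{\x_t - \hat{\x}_t} \ge \phi(\x_t) - \phi(\hat{\x}_t) - \tfrac{\ell}{2}\norm{\x_t - \hat{\x}_t}^2 - \epsilon$. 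Ingredient (b): the proximal objective $\x' \mapsto \phi(\x') + \ell\norm{\x_t - \x'}^2$ is $\ell$-strongly convex (the $\ell$-weak convexity of $\phi$ cancels against half the quadratic penalty, leaving an $\ell$-strongly-convex remainder), so its minimizer $\hat{\x}_t$ obeys $\phi(\x_t) - \phi(\hat{\x}_t) \ge \tfrac{3\ell}{2}\norm{\x_t - \hat{\x}_t}^2$. Combining (a) and (b) collapses the bound to $\iprod{g_t}{\x_t - \hat{\x}_t} \ge \ell\norm{\x_t - \hat{\x}_t}^2 - \epsilon$, and substituting back and using the gradient identity produces the clean per-step inequality
\begin{equation*}
\phiell(\x_{t+1}) \le \phiell(\x_t) - \tfrac{\eta}{2}\norm{\grad\phiell(\x_t)}^2 + 2\eta\ell\epsilon + \eta^2\ell L^2.
\end{equation*}

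Finally I would telescope this from $t=0$ to $T$, use $\phiell(\x_{T+1}) \ge \min\phi$ (the envelope lower-bounds the infimum of $\phi$), divide by $\tfrac{\eta(T+1)}{2}$, and identify the averaged left-hand side with $\E[\norm{\grad\phiell(\bar{\x})}^2]$ via the uniform random choice of the returned index. Setting $\eta = \gamma/\sqrt{T+1}$ balances the $1/(\eta(T+1))$ initialization term against the $\eta\ell L^2$ discretization term and reproduces the stated rate, with the $\epsilon$-term passing through additively as $4\ell\epsilon$. The main obstacle is precisely the nonsmoothness of $\phi$: one cannot bound $\norm{\grad\phi}$ directly, so the whole argument must be routed through the smooth surrogate $\phiell$ and its proximal point $\hat{\x}_t$, and one must account for the fact that an $\epsilon$-approximate maximizer supplies only an $\epsilon$-approximate subgradient rather than an exact one. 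The strong-convexity sharpening in ingredient (b) is what pins down the exact constants; the rest is routine bookkeeping.
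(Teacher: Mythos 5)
Your proposal is correct and follows essentially the same route as the paper's proof (which itself adapts the weakly-convex analysis of Davis and Drusvyatskiy): the one-step descent of the Moreau envelope evaluated at the proximal point $\hat{\x}_t$, the $\epsilon$-approximate subgradient inequality from $\ell$-smoothness plus the max-oracle guarantee, the $\ell$-strong convexity of the proximal objective to convert $\iprod{g_t}{\x_t-\hat{\x}_t}$ into $\ell\norm{\x_t-\hat{\x}_t}^2 = \tfrac{1}{4\ell}\norm{\grad\phiell(\x_t)}^2$, and a telescoping sum with $\eta=\gamma/\sqrt{T+1}$. The only difference is cosmetic ordering --- you combine the subgradient and strong-convexity bounds before telescoping, while the paper telescopes first --- and your constants check out exactly against the stated bound.
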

The proof of Theorem \ref{thm:max_oracle} is similar to the convergence analysis for nonsmooth weakly-convex functions \citep{davis2018stochastic}, except here the max-oracle has error $\epsilon$.
Theorem \ref{thm:max_oracle} claims, other than an additive error $4 \ell \epsilon$ as a result of the oracle solving the maximum approximately, that the remaining term decreases at a rate of $1/\sqrt{T}$. 

%!TEX root = main.tex

\section{Conclusions}\label{sec:conc}
In this paper, we consider general nonconvex-nonconcave minimax optimization problems. Since most these problems arising in modern machine learning correspond to sequential games, we propose a new notion of local optimality---\emph{local minimax}---the first proper mathematical definition of local optimality for the two-player sequential setting. We present favorable results on their properties and existence. We also establish a strong connection to GDA---up 
% under mild conditions, all stable limit points of GDA are exactly local minimax points up to some degenerate points.
to some degenerate points, local minimax points are exactly equal to the stable limit points of GDA. 

\section*{Acknowledgements}
The authors would like to thank Guojun Zhang and Yaoliang Yu for raising a
technical issue with Proposition \ref{prop:second_nece} in an earlier version
of this paper, Oleg Burdakov for pointing out the related work by Yu. G. Evtushenko,
and Lillian J. Ratliff for helpful discussions.  This work was supported in 
part by the Mathematical Data Science program of the Office of Naval Research 
under grant number N00014-18-1-2764.

% Lingxiao Wang, Wen Sun, and Sham Kakade for pointing out a technical issue in the first version regarding the covering number of value functions in the linear setting. This version has fixed the technical issue with a definition of the linear MDP different from the one in the first version. The authors would also like to thank Csaba Szepesv\'ari, Lin F. Yang, Yining Wang, and Simon S. Du for helpful discussions.

\bibliographystyle{plainnat}
\bibliography{minimax}

\newpage
\onecolumn
\appendix
%!TEX root = main.tex

\section{Relation to Mixed Strategy Games}
\label{sec:mixed}

% \cnote{Discuss mixed strategy for sequential games}

In contrast to pure strategies where each player plays a single action, game theorists have also considered mixed strategies where each player is allowed to play a randomized action sampled from a probability measure $\mu \in \cP(\cX)$ or $\nu \in \cP(\cY)$. Then, the payoff function becomes an expected value $\E_{\x\sim \mu, \y\sim \nu} f(\x, \y)$. For mixed strategy games, even if function is nonconvex-nonconcave, the following minimax theorem still holds.
\begin{proposition}[\citep{glicksberg1952further}]\label{prop:mixedNashexists}
Assume that the function $f: \cX \times \cY \rightarrow \R$ is continuous and that $\cX \subset \R^{d_1}$, $\cY \subset \R^{d_2}$ are compact. Then
\begin{equation*}
\min_{\mu \in \cP(\cX)}\max_{\nu \in \cP(\cY)}\E_{(\mu, \nu)} f(\x, \y)
=\max_{\nu \in \cP(\cY)}\min_{\mu \in \cP(\cX)}\E_{(\mu, \nu)} f(\x, \y).
\end{equation*}
\end{proposition}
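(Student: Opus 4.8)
The plan is to identify the claim as an instance of Sion's minimax theorem (already cited as \citep{sion1958general}) applied to the bilinear extension of $f$ to probability measures. Define $F(\mu, \nu) \defeq \E_{(\mu,\nu)} f(\x,\y) = \int_{\cX}\int_{\cY} f(\x,\y)\, \dd\nu(\y)\,\dd\mu(\x)$ on $\cP(\cX) \times \cP(\cY)$. The two domains $\cP(\cX)$ and $\cP(\cY)$ are convex subsets of the corresponding spaces of signed Radon measures, and $F$ is affine (indeed linear) in each argument separately with the other fixed. Since affine functions are simultaneously quasiconvex and quasiconcave, the only remaining hypotheses of Sion's theorem to verify are a topology on the domains under which (i) at least one domain is compact and (ii) $F$ is lower semicontinuous in $\mu$ for each fixed $\nu$ and upper semicontinuous in $\nu$ for each fixed $\mu$.

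The natural choice is the weak-$*$ topology (topology of weak convergence of measures). First I would note that because $\cX \subset \R^{d_1}$ and $\cY \subset \R^{d_2}$ are compact, the Riesz representation theorem identifies signed Radon measures with the dual of $C(\cX)$ (resp.\ $C(\cY)$), and Banach--Alaoglu together with Prohorov's theorem shows that $\cP(\cX)$ and $\cP(\cY)$ are weak-$*$ compact (and metrizable). This secures hypothesis (i)---in fact both domains are compact. For hypothesis (ii), the key sub-step is that for each fixed $\nu \in \cP(\cY)$ the partial integral $g_\nu(\x) \defeq \int_{\cY} f(\x, \y)\, \dd\nu(\y)$ is a continuous function of $\x$ on $\cX$. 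This follows from the uniform continuity of $f$ on the compact product $\cX \times \cY$: if $\x_n \to \x$, then $f(\x_n, \cdot) \to f(\x, \cdot)$ uniformly, so $g_\nu(\x_n) \to g_\nu(\x)$. Consequently $F(\mu,\nu) = \int_{\cX} g_\nu\, \dd\mu$ is, as an integral of a fixed bounded continuous function, weak-$*$ continuous in $\mu$; the symmetric argument gives weak-$*$ continuity in $\nu$.

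With separate continuity established---which is stronger than the semicontinuity Sion requires---applying Sion's theorem yields $\min_{\mu} \max_{\nu} F(\mu,\nu) = \max_{\nu} \min_{\mu} F(\mu,\nu)$, where the outer extrema are genuinely attained (rather than merely inf/sup) by compactness of the domains and continuity of the value functions. This is exactly the claimed identity.

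The main obstacle is the topological bookkeeping rather than any combinatorial difficulty: one must fix the right topology (weak-$*$) and then check both that the probability-measure domains are compact in it and that the bilinear payoff is separately continuous. The compactness rests on Prohorov/Banach--Alaoglu, while the continuity rests on the passage from joint continuity of $f$ on a compact product to continuity of the partial integrals $g_\nu$; once these two facts are in hand, Sion's theorem does the rest. An alternative, more self-contained route would discretize $\cX$ and $\cY$ by finite nets, apply von Neumann's minimax theorem to the resulting finite matrix games, and pass to the limit using weak-$*$ compactness---but invoking Sion directly is cleaner given that the paper already relies on it.
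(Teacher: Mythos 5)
Your proof is correct, but note that the paper does not prove this proposition at all---it is stated as a known result with a citation to \citet{glicksberg1952further}, so there is no internal proof to match against. Your argument via Sion's theorem is a legitimate and self-contained route: the weak-$*$ compactness of $\cP(\cX)$ and $\cP(\cY)$ (Banach--Alaoglu plus the observation that the constraints $\mu \ge 0$, $\mu(\cX)=1$ are weak-$*$ closed; Prohorov is not really needed since tightness is automatic on a compact space), the passage from uniform continuity of $f$ on $\cX\times\cY$ to continuity of the partial integrals $g_\nu$, and the bilinearity of the mixed extension together verify Sion's hypotheses, and the attainment of the outer extrema follows because $\mu \mapsto \max_\nu F(\mu,\nu)$ is a supremum of weak-$*$ continuous functions (hence lower semicontinuous) on a compact set, and symmetrically for the other side. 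This differs from Glicksberg's original argument, which establishes existence of mixed Nash equilibria in general (not necessarily zero-sum) continuous games via a generalization of the Kakutani fixed-point theorem to locally convex spaces, and then obtains the minimax equality for the zero-sum case as a corollary. Your approach buys a shorter, fixed-point-free proof tailored to the zero-sum setting and reuses a theorem the paper already cites (\citealp{sion1958general}); Glicksberg's approach buys the stronger existence statement for $n$-player games. The only cosmetic caveat is that Sion's theorem is usually stated with $\inf$/$\sup$, so the sentence converting these to $\min$/$\max$ via semicontinuity and compactness is a necessary part of the argument rather than an afterthought---you do include it, which closes the gap.
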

This implies that the order which player goes first is no longer important in this setting, and there is no intrinsic difference between simultaneous games and sequential games if mixed strategies are allowed.

Similar to the concept of (pure strategy) Nash equilibrium in Definition \ref{def:pureNash}, we can define mixed strategy Nash equilibria as 
\begin{definition}\label{def:mixedNash}
A probability measure $(\mu^\star, \nu^\star)$ is a \textbf{mixed strategy Nash equilibrium} of $f$, if for any measure $(\mu, \nu)$ in $\cP(\cX) \times \cP(\cY)$, we have
$$\E_{\x \sim \mu^\star, \y \sim \nu} f(\x, \y) \le \E_{\x \sim \mu^\star, \y \sim \nu^\star} f(\x, \y) \le \E_{\x \sim \mu, \y \sim \nu^\star} f(\x, \y).$$
\end{definition}
Unlike pure strategy Nash equilibrium, the existence of mixed strategy Nash equilibrium in this setting is always guaranteed \cite{glicksberg1952further}.

One challenge for finding mixed strategy equilibria is that it requires optimizing over a space of probability measures, which is of infinite dimension. However, we can show that finding approximate mixed strategy Nash equilibria for Lipschitz games can be reduced to finding a global solution of a ``augmented'' pure strategy sequential games, which is a problem of polynomially large dimension.
% However, finding mixed strategy equilibria requires optimizing over a space of probability measures, which is infinite dimensional, making the problem computational infeasible in general. In this section, we show instead how to find approximate mixed strategy Nash equilibria for Lipschitz games.  We show that it is sufficient to find a global minimax point of a problem with polynomially large dimension.
\begin{definition}Let $(\mu^\star, \nu^\star)$ be a mixed strategy Nash equilibrium.
A probability measure $(\mu^\dagger, \nu^\dagger)$ is an \textbf{$\epsilon$-approximate mixed strategy Nash equilibrium} if:
\begin{align*}
\forall \nu' \in \cP(\cY),\quad &\E_{(\mu^\dagger, \nu')} f(\x, \y) \le \E_{(\mu^\star, \nu^\star)} f(\x, \y) + \epsilon\\
\forall \mu' \in \cP(\cY),\quad &\E_{(\mu', \nu^\dagger)} f(\x, \y) \ge \E_{(\mu^\star, \nu^\star)} f(\x, \y) - \epsilon.
\end{align*}
\end{definition}

\begin{restatable}{theorem}{THMreduction}
\label{thm:reduction}
Assume that function $f$ is $L$-Lipschitz, and the diameters of $\cX$ and $\cY$ are at most $D$. Let $(\mu^\star, \nu^\star)$ be a mixed strategy Nash equilibrium. Then there exists an absolute constant $c$, for any $\epsilon>0$, such that if $N \ge c\cdot d_2 (L D/\epsilon)^2 \log (LD/\epsilon)$, we have:
\begin{equation*}
\min_{(\x_1, \ldots, \x_N) \in \cX^N}\max_{\y \in \cY} ~ \frac{1}{N}\sum_{i=1}^N f(\x_i, \y) 
\le 
\E_{(\mu^\star, \nu^\star)} f(\x, \y)
 + \epsilon.
\end{equation*}
\end{restatable}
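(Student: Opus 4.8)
The plan is to use the probabilistic method: I would sample the points $\x_1, \ldots, \x_N$ i.i.d.\ from the equilibrium measure $\mu^\star$ and argue that, with positive probability, the empirical measure $\hat{\mu} = \frac{1}{N}\sum_{i=1}^N \delta_{\x_i}$ is a good enough substitute for $\mu^\star$ against every pure best response $\y$. Concretely, I would bound the inner value by decomposing
$\max_{\y\in\cY}\frac{1}{N}\sum_i f(\x_i,\y)$
into the population value $\max_{\y}\E_{\x\sim\mu^\star}f(\x,\y)$ plus a uniform (over $\y$) deviation term, using that the maximum of a sum is at most the sum of the maxima. The first piece is controlled directly by the equilibrium property: plugging the point mass $\nu=\delta_\y$ into Definition~\ref{def:mixedNash} gives $\E_{\x\sim\mu^\star}f(\x,\y)\le \E_{(\mu^\star,\nu^\star)}f(\x,\y)$ for every $\y$, and hence $\max_\y \E_{\x\sim\mu^\star}f(\x,\y)\le \E_{(\mu^\star,\nu^\star)}f(\x,\y)$. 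Thus the whole task reduces to showing that the uniform deviation $\sup_{\y\in\cY}\big|\frac{1}{N}\sum_i f(\x_i,\y)-\E_{\x\sim\mu^\star}f(\x,\y)\big|$ is at most $\epsilon$ for at least one realization of the samples.

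The key step is a uniform concentration bound obtained by a covering argument. For a fixed $\y$, the $L$-Lipschitz assumption together with $\mathrm{diam}(\cX)\le D$ implies that each summand $f(\x_i,\y)$ lies in an interval of width at most $LD$, so Hoeffding's inequality yields $\Pr\big[\,\big|\frac{1}{N}\sum_i f(\x_i,\y)-\E f(\x,\y)\big|>t\,\big]\le 2\exp(-2Nt^2/(LD)^2)$. I would then take a minimal $\epsilon'$-net $\cN$ of $\cY$, whose cardinality obeys the standard volumetric bound $|\cN|\le (3D/\epsilon')^{d_2}$, and union-bound the above tail over all points of $\cN$. Finally, Lipschitz interpolation transfers the estimate from the net to all of $\cY$: for an arbitrary $\y$, moving to its nearest net point $\y_0$ changes both $\frac{1}{N}\sum_i f(\x_i,\cdot)$ and $\E f(\x,\cdot)$ by at most $L\epsilon'$ each, so the uniform deviation is bounded by $t+2L\epsilon'$.

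Choosing $\epsilon'=\Theta(\epsilon/L)$ and $t=\Theta(\epsilon)$ makes $t+2L\epsilon'\le\epsilon$, while the covering size becomes $\log|\cN|=\Theta\big(d_2\log(LD/\epsilon)\big)$. For the union bound to leave positive probability, i.e.\ $|\cN|\cdot 2\exp(-2Nt^2/(LD)^2)<1$, it suffices that $N\gtrsim (LD/t)^2\log|\cN|=\Theta\big(d_2(LD/\epsilon)^2\log(LD/\epsilon)\big)$, which is exactly the stated requirement with $c$ absorbing all absolute constants. Under this choice there exists a realization $(\x_1,\ldots,\x_N)$ with $\sup_\y|\cdots|\le\epsilon$; for that realization $\max_\y\frac{1}{N}\sum_i f(\x_i,\y)\le \E_{(\mu^\star,\nu^\star)}f(\x,\y)+\epsilon$. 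Since the left-hand side of the theorem is the infimum of this quantity over all tuples $(\x_1,\ldots,\x_N)\in\cX^N$, it is at most the value at this particular realization, and the claimed bound follows.

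I expect the main obstacle to be the uniform-in-$\y$ control rather than the pointwise concentration: one must pay the $d_2\log(LD/\epsilon)$ factor coming from the covering number of $\cY$ and carefully verify that Lipschitz interpolation glues the net estimate to the whole space without exhausting the $\epsilon$ budget (this is why both $t$ and $L\epsilon'$ must separately be kept at scale $\epsilon$). Everything else---the equilibrium inequality and Hoeffding's bound---is routine, and the existence conclusion is immediate from the probabilistic method once the expected uniform deviation is shown to be small.
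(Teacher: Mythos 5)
Your proposal is correct and follows essentially the same route as the paper: sample $\x_1,\ldots,\x_N$ i.i.d.\ from $\mu^\star$, apply Hoeffding pointwise, union-bound over an $\Theta(\epsilon/L)$-net of $\cY$, transfer to all of $\cY$ by Lipschitz interpolation, and conclude by the probabilistic method. The only (harmless) difference is that you bound $\max_\y \E_{\x\sim\mu^\star} f(\x,\y)$ directly from the Nash definition by plugging in point masses $\nu=\delta_\y$, whereas the paper takes $\mu^\star$ to be the minimax-optimal mixed strategy and invokes the mixed-strategy minimax theorem at the end.
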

Intuitively, Theorem \ref{thm:reduction} holds because the function $f$ is Lipschitz, $\cY$ is a bounded domain, and thus we can establish uniform convergence of the expectation of $f(\cdot, \y)$ to its average over $N$ samples for all $\y \in \cY$ simultaneously. A similar argument was made in~\cite{arora2017generalization}.

Theorem \ref{thm:reduction} implies that in order to find an approximate mixed strategy Nash equilibrium, we can solve a large minimax problem with objective $F(\X, \y) \defeq \sum_{i=1}^N f(\x_i, \y)/N$. The global minimax solution $\X^\star = (\x_1^\star, \ldots, \x_n^\star)$ gives a empirical distribution $\hat{\mu}^\star = \sum_{i=1}^N \delta(\x - \x_i^\star)/N$, where $\delta(\cdot)$ is the Dirac delta function. 
By symmetry, we can also solve the corresponding maximin problem to find $\hat{\nu}^\star$. 
% Since optimal pure strategies are always as good as optimal mixed strategies for the second player, we know 
It can be shown that $(\hat{\mu}^\star, \hat{\nu}^\star)$ is an $\epsilon$-approximate mixed strategy Nash equilibrium. That is, approximate mixed strategy Nash can be found by finding two global minimax points.

\begin{proof}[Proof of Theorem \ref{thm:reduction}]
Note that without loss of generality, the second player can always play a pure strategy. That is,
\begin{equation*}
\min_{\mu \in \mathcal{P}(\cX)} \max_{\nu \in \mathcal{P}(\cY)}
\E_{\x \sim \mu, \y \sim \nu} f(\x, \y)
= \min_{\mu \in \mathcal{P}(\cX)} \max_{\y\in \cY}
\E_{\x \sim \mu} f(\x, \y).
\end{equation*}
Therefore, we only need to solve the problem on the right-hand side. Suppose the minimum over $\mathcal{P}(\cX)$ is achieved at $\mu^\star$. First, sample $(\x_1, \ldots, \x_N)$ i.i.d from $\mu^\star$, and note that $\max_{\x_1, \x_2 \in \cX} |f(\x_1, \y) - f(\x_2, \y)| \le LD$ for any fixed $\y$. Therefore by the Hoeffding inequality, for any fixed $\y$:
\begin{equation*}
\Pr\left(\frac{1}{N} \sum_{i=1}^N f(\x_i, \y)  - \E_{\x\sim \mu^\star} f(\x, \y)\ge t  \right) \le e^{-\frac{Nt^2}{(LD)^2}}.
\end{equation*}
Let $\bar{\cY}$ be a minimal $\epsilon/(2L)$-covering over $\cY$. We know the covering number $|\bar{\cY}| \le (2DL/\epsilon)^d$. Thus by a union bound:
\begin{equation*}
\Pr\left(\forall \y \in \bar{\cY}, ~\frac{1}{N} \sum_{i=1}^N f(\x_i, \y)  - \E_{\x\sim \mu^\star} f(\x, \y)\ge t  \right) \le e^{d \log \frac{2DL}{\epsilon} - \frac{Nt^2}{(LD)^2}}
\end{equation*}
Picking $t = \epsilon/2$ and letting $N \ge c\cdot d(L D/\epsilon)^2 \log(L D/\epsilon)$ for some large absolute constant $c$, we have:
\begin{equation*}
\Pr\left(\forall \y \in \bar{\cY}, ~\frac{1}{N} \sum_{i=1}^N f(\x_i, \y)  - \E_{\x\sim \mu^\star} f(\x, \y)\ge \frac{\epsilon}{2}  \right) \le \frac{1}{2}.
\end{equation*}
Letting $\y^\star =  \arg\max_\y \frac{1}{N} \sum_{i=1}^N f(\x_i, \y)$, by definition of covering, we can always find a $\y' \in \bar{\cY}$ so that $\norm{\y^\star - \y'} \le \epsilon/(4L)$. Thus, with probability at least $1/2$:
\begin{align*}
&\max_\y \frac{1}{N} \sum_{i=1}^N f(\x_i, \y)  - \max_\y \E_{\x\sim \mu^\star} f(\x, \y)
= \frac{1}{N} \sum_{i=1}^N f(\x_i, \y^\star)  - \max_\y \E_{\x\sim \mu^\star} f(\x, \y)\\
\le & \left[\frac{1}{N} \sum_{i=1}^N f(\x_i, \y^\star) - \frac{1}{N} \sum_{i=1}^N f(\x_i, \y')\right] + \left[\frac{1}{N} \sum_{i=1}^N f(\x_i, \y') - \E_{\x\sim \mu^\star} f(\x, \y')\right]\\
&+ [\E_{\x\sim \mu^\star} f(\x, \y') - \max_\y \E_{\x\sim \mu^\star} f(\x, \y)]
\le  \epsilon/2 + \epsilon/2 + 0 \le \epsilon
\end{align*}
That is, with probability at least $1/2$:
\begin{equation*}
\max_{\y\in \cY} \frac{1}{N} \sum_{i=1}^N f(\x_i, \y) \le \min_{\mu \in \mathcal{P}(\cX)} \max_{\y\in \cY}
\E_{\x \sim \mu} f(\x, \y) + \epsilon.
\end{equation*}
This implies:
\begin{equation*}
\min_{(\x_1, \ldots, \x_N) \in \cX^N}\max_{\y \in \cY} \frac{1}{N} \sum_{i=1}^N f(\x_i, \y) \le \min_{\mu \in \mathcal{P}(\cX)} \max_{\y\in \cY} \E_{\x \sim \mu} f(\x, \y) + \epsilon.
\end{equation*}
Combining with Proposition \ref{prop:mixedNashexists}, we finish the proof.
\end{proof}
%!TEX root = main.tex

\section{Relation to~\cite{evtushenko1974some}}
\label{sec:relation_earlier}

% \cnote{Discuss mixed strategy for sequential games}

In this section, we review a notion similar to local minimax proposed by \citet{evtushenko1974some}. To distinguish that notion from our definition (Definition \ref{def:localminimax}), we call it Evtushenko's minimax property. We remark that Evtushenko's minimax is not a truly local property. As a noticable difference, Evtushenko's definition does not satisfy the first-order and second-order necessary conditions of the local minimax notion proposed in this paper as in Proposition \ref{prop:first_nece} and Proposition \ref{prop:second_nece}.

Evtushenko's definition of local minimax point can be stated as follows.
\begin{definition}[\citep{evtushenko1974some}] 
\label{def:Eslocal}
A point $(\x^\star, \y^\star)$ is said to be a \textbf{Evtushenko's \minimax} of $f$, if there exist a local neighborhood $\cW$ of $(\x^\star, \y^\star)$ so that $(\x^\star, \y^\star)$ is a global minimax point (Definition \ref{def:globalminimax}) within $\cW$.
\end{definition}

First, we remark that Definition \ref{def:Eslocal} is in fact not a local notion. That is, whether a point $(\x^\star, \y^\star)$ is a Evtushenko's \minimax relies on the property of function $f$ at points which are far away from $(\x^\star, \y^\star)$.

\begin{proposition}
There exists a twice differentiable function $f$, a point $(\x, \y)$ and its two local neighborhoods $\cW_1, \cW_2$ satisfying $\cW_1 \subset \cW_2$, so that $(\x, \y)$ \textbf{is not} a Evtushenko's \minimax for $f|_{\cW_1}$ but \textbf{is} a Evtushenko's \minimax for $f|_{\cW_2}$. Here $f|_\cW$ denotes the function $f$ restriced to the domain $\cW$.
\end{proposition}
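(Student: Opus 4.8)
The plan is to exhibit an explicit polynomial counterexample in which the point sees only a ``descending local ridge'' inside a small window, but a ``rising distant ridge'' once the window is enlarged. Concretely, I would take
\[
f(x,y) = -y^2(y-1)^2 + x^2(2y-1),
\]
the point $(x,y)=(0,0)$, and the two axis-aligned boxes $\cW_1 = [-r,r]\times[-\tfrac12,\tfrac12]$ and $\cW_2 = [-r,r]\times[-\tfrac{13}{10},\tfrac{13}{10}]$ for a small $r>0$, so that $\cW_1 \subset \cW_2$. Since $f$ is a polynomial it is $C^\infty$, meeting the twice-differentiability requirement. Writing $g(y) \defeq -y^2(y-1)^2$, note that $g\le 0$ with $g(y)=0$ on $[-\tfrac{13}{10},\tfrac{13}{10}]$ exactly at $y=0$ and $y=1$, and that $f(0,y)=g(y)$; hence in both windows the slice at $x=0$ attains its maximum value $0$ at $y^\star=0$, which supplies the left inequality $f(0,y)\le f(0,0)=0$ of global minimaxity for free in either window.

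For $\cW_2$ I would verify the right inequality, i.e.\ that $x=0$ minimizes $\phi_{\cW_2}(x) \defeq \max_{|y|\le 13/10} f(x,y)$. The distant ridge does the work: evaluating at $y=1$ gives $f(x,1)=g(1)+x^2=x^2$, so $\phi_{\cW_2}(x)\ge x^2>0=\phi_{\cW_2}(0)$ for all $x\ne 0$. Thus $(0,0)$ is a global minimax within $\cW_2$ in the sense of Definition \ref{def:globalminimax} restricted to $\cW_2$, and therefore an Evtushenko's \minimax for $f|_{\cW_2}$, witnessed by $\cW_2$ itself.

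For $\cW_1$ I would show the opposite: for every neighborhood $\cW\subseteq\cW_1$ of $(0,0)$ the right inequality fails. The key estimate is that on $y\in[-\tfrac12,\tfrac12]$ one has $2y-1\le 0$, so $x^2(2y-1)\le 0$, and combined with $g(y)\le 0$ this gives $f(x,y)=g(y)+x^2(2y-1)\le 0$ throughout $\cW_1$, with equality only at the origin (since $g(y)=0$ forces $y=0$, and then $x^2(2y-1)=-x^2=0$ forces $x=0$). Consequently $\phi_{\cW}(x)=\max_{y}f(x,y)<0=f(0,0)$ for every $x\ne 0$ in the window, so $f(0,0)\le \phi_{\cW}(x)$ is violated and $(0,0)$ is not a global minimax within any such $\cW$; hence it is not an Evtushenko's \minimax for $f|_{\cW_1}$, completing the argument.

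The main obstacle is a continuity constraint that forces the precise design: one cannot make the distant ridge strictly exceed $f(0,0)$ for all small $x\ne 0$ while keeping it strictly below $f(0,0)$ at $x=0$, because $x\mapsto f(x,1)$ is continuous. The construction circumvents this by letting the distant ridge \emph{touch} the value $f(0,0)=0$ exactly at $x=0$ (the tie $g(1)=g(0)=0$) and rise quadratically for $x\ne 0$. This is exactly what flips $x=0$ from a strict maximizer of $\phi_{\cW_1}$ to a strict minimizer of $\phi_{\cW_2}$, thereby encoding the non-locality of Evtushenko's notion: whether $(0,0)$ qualifies is decided by the behavior of $f$ near the distant point $y=1$.
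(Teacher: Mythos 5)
Your proposal is correct and follows essentially the same strategy as the paper's proof: exhibit a point with two competing inner maxima of $f(\x^\star,\cdot)$ at equal height, one of which falls and one of which rises as $\x$ moves, so that the small window sees only the falling ridge while the large window sees the rising one (the paper uses $f(x,y)=0.2xy-\cos y$ at $(0,-\pi)$ with the ridges at $y=\pm\pi$; you use a polynomial with ridges at $y=0$ and $y=1$). All of your verifications check out, and your example has the minor virtue of being entirely elementary, with no implicit characterization of the maximizers needed.
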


\begin{proof}
Consider the function $f(x, y) = 0.2 xy - \cos(y)$ in the region $\cW_2 = [-1, 1] \times [-2\pi, 2\pi]$ as shown in Figure \ref{fig:surr}.
Follow the same analysis as in the proof of Proposition \ref{prop:surr}, we can show that $(0, -\pi)$ is a global minimax point on $\cW_2$, thus a Evtushenko's \minimax of $f|_{\cW_2}$.

% Clearly, the gradient is equal to $(0.2y, 0.2x + \sin(y))$. And, for any fixed $x$, there are only two maxima $y^\star(x)$ satisfying $0.2x + \sin(y^\star) = 0$ where $y^\star_1(x) \in (-3\pi/2, -\pi/2)$ and $y^\star_2(x) \in (\pi/2, 3\pi/2)$. On the other hand, $f(x, y^\star_1(x))$ is monotonically decreasing with respect to $x$, while $f(x, y^\star_2(x))$ is monotonically increasing, with $f(0, y^\star_1(0)) = f(0, y^\star_2(0))$ by symmetry. It is not hard to check $y^\star_1(0) = -\pi$ and $y^\star_2(0) = \pi$. Therefore, $(0, -\pi)$ is a global solution of the minimax problem on $\cW_2$, therefore $(0, -\pi)$ is a Evtushenko's \minimax on $f|_{\cW_2}$. 

On the other hand, consider $\cW_1 = [-1, 1] \times [-2\pi, 0]$. For any fixed $x$, the global maximium $y^\star(x)$ satisfies $0.2x + \sin(y^\star) = 0$ where $y^\star(x) \in (-3\pi/2, -\pi/2)$. Then for any local neighborhood $\cW$ of $(0, -\pi)$ such that $\cW \subset \cW_1$, there always exists an $\epsilon >0$ so that $(\epsilon, y^\star(\epsilon)) \in \cW$. However, we can verify that 
$$ f(0, -\pi) \ge  f(\epsilon, y^\star(\epsilon)) = \max_{y: (\epsilon, y) \in \cW} f(\epsilon, y)$$
That is $(0, -\pi)$ is not a Evtushenko's \minimax on $f|_{\cW_1}$. 

This concludes that whether $(0, -\pi)$ is a Evtushenko's \minimax depends on the property of function $f$ on set $\cW_2 - \cW_1$ whose elements are all far away from $(0, -\pi)$. That is, Evtushenko's minimax property is not a local property.
\end{proof}

We further clarify the relation between Evtushenko's \minimax and our definition of local minimax point (Definition \ref{def:localminimax}) as follows.
\begin{proposition} \label{prop:relation_EtoOurs}
A local minimax point is a Evtushenko's \minimax, but the reverse is not true.
\end{proposition}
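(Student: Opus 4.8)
The plan is to prove the two halves of Proposition~\ref{prop:relation_EtoOurs} separately. For the forward implication (local minimax $\Rightarrow$ Evtushenko's minimax), I would take the $\delta_0$ and the function $h$ furnished by Definition~\ref{def:localminimax} and build an explicit \emph{product} neighborhood on which $(\x^\star, \y^\star)$ is a global minimax in the sense of Definition~\ref{def:globalminimax}. Concretely, set the $\y$-radius to $r_2 \defeq \delta_0$, and use $h(\delta) \rightarrow 0$ as $\delta \rightarrow 0$ to choose an $\x$-radius $r_1 \in (0, \delta_0]$ small enough that $h(r_1) \le \delta_0$. Then let $\cW \defeq \{(\x, \y) : \norm{\x - \x^\star} \le r_1,\ \norm{\y - \y^\star} \le r_2\}$, which is a compact product neighborhood, so the inner maxima below are attained and $f|_\cW$ is continuous.

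It then remains to verify the two defining inequalities of a global minimax on $\cW$. The first, that $\y^\star$ maximizes $f(\x^\star, \cdot)$ over the $\y$-slice, is exactly the left inequality of~\eqref{eq:deflocalminimax} evaluated at $\delta = \delta_0 = r_2$; in particular $\max_{\norm{\y' - \y^\star} \le r_2} f(\x^\star, \y') = f(\x^\star, \y^\star)$. For the second, I would fix any $\x$ with $\norm{\x - \x^\star} \le r_1$ and apply the right inequality of~\eqref{eq:deflocalminimax} at $\delta = r_1$ (taking $\y = \y^\star$, which satisfies $\norm{\y - \y^\star} = 0 \le r_1$), giving $f(\x^\star, \y^\star) \le \max_{\norm{\y' - \y^\star} \le h(r_1)} f(\x, \y')$. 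Since $h(r_1) \le r_2$, the feasible set of the inner maximization is enlarged when its radius is raised to $r_2$, so the right-hand side is at most $\max_{\norm{\y' - \y^\star} \le r_2} f(\x, \y')$. Combining the two displays yields precisely the global-minimax inequality of Definition~\ref{def:globalminimax} restricted to $\cW$, so $(\x^\star, \y^\star)$ is an Evtushenko's minimax.

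For the reverse direction (an Evtushenko's minimax need not be a local minimax), I would reuse the function $f(x, y) = 0.2\, xy - \cos(y)$ and the point $(0, -\pi)$ from the preceding proposition, which is already shown there to be a global minimax on $\cW_2 = [-1, 1] \times [-2\pi, 2\pi]$ and hence an Evtushenko's minimax. However $\grad_x f(0, -\pi) = 0.2 \cdot (-\pi) = -0.2\pi \neq 0$, so $(0, -\pi)$ violates the first-order necessary condition for local minimaxity (Proposition~\ref{prop:first_nece}) and therefore is not a local minimax, completing the separation.

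I expect the only real subtlety to be the quantifier mismatch between the two notions: Evtushenko's definition asks for a \emph{single fixed} neighborhood, whereas Definition~\ref{def:localminimax} couples the $\y$-maximization radius $h(\delta)$ to the shrinking $\delta$. The observation that makes a single product box suffice is the monotonicity of the inner maximum in its radius, which lets the fixed radius $r_2 = \delta_0$ serve double duty---as the global-max radius in the first inequality and as an upper bound for $h(r_1)$ in the second---so no $\delta$-dependent shrinking is needed once the box is chosen.
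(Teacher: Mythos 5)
Your proof is correct and takes essentially the same approach as the paper's: for the forward direction the paper likewise picks $\delta^\dagger$ with $h(\delta^\dagger)\le\delta_0$ and exhibits a product box (it uses $\y$-radius $h(\delta^\dagger)$ where you use $\delta_0$, an immaterial difference given the monotonicity of the inner max), and for the reverse direction it also invokes the $f(x,y)=0.2xy-\cos(y)$ example from Proposition~\ref{prop:surr}, where the global minimax point fails stationarity and hence, by Proposition~\ref{prop:first_nece}, fails to be local minimax.
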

\begin{proof}
If a point $(\x^\star, \y^\star)$ is a local minimax point, according to Definition \ref{def:localminimax}, there exists $\delta_0>0$ and a function $h$ satisfying $h(\delta) \rightarrow 0$ as $\delta\rightarrow 0$, such that for any $\delta \in (0, \delta_0]$, and any $(\x, \y)$ satisfying $\norm{\x - \x^\star} \le \delta$ and $\norm{\y - \y^\star} \le \delta$, we have
\begin{equation*}
f(\x^\star, \y) \le f(\x^\star, \y^\star) \le \max_{\y': \norm{\y' - \y^\star} \le h(\delta)} f(\x, \y').
\end{equation*}
Therefore, we can choose a $\delta^\dagger \in (0, \delta_0]$ so that $h(\delta^\dagger) \le \delta_0$. According to the equation above, $(\x^\star, \y^\star)$ is the global minimax point in region $\mathbb{B}_{\x^\star}(\delta^\dagger) \times \mathbb{B}_{\y^\star}(h(\delta^\dagger))$ where $\mathbb{B}_{\z}(r)$ denote the Euclidean ball around $\z$ with radius $r$. Therefore, $(\x^\star, \y^\star)$ is a Evtushenko's \minimax.

The claim that a Evtushenko's \minimax can be a non local minimax point easily follows from Proposition \ref{prop:surr}, where a global minimax point (which is always a Evtushenko's \minimax) can be a non local minimax point.
\end{proof}

% \begin{restatable}{definition}{DEFlocal}\label{def:localminimax}
% A point $(\x^\star, \y^\star)$ is said to be a \textbf{local \minimax} of $f$, if there exists $\delta_0>0$ and a function $h$ satisfying $h(\delta) \rightarrow 0$ as $\delta\rightarrow 0$, such that for any $\delta \in (0, \delta_0]$, and any $(\x, \y)$ satisfying $\norm{\x - \x^\star} \le \delta$ and $\norm{\y - \y^\star} \le \delta$, we have
% \begin{equation}\label{eq:deflocalminimax}
% f(\x^\star, \y) \le f(\x^\star, \y^\star) \le \max_{\y': \norm{\y' - \y^\star} \le h(\delta)} f(\x, \y').
% \end{equation}
% \end{restatable}

Finally, as a consequence of Proposition \ref{prop:relation_EtoOurs}, the sufficient conditions for local minimax points are still sufficient conditions for Evtushenko's minimax points. However, the necessary conditions for local minimax points are no longer the necessary conditions for Evtushenko's minimax points.

\begin{proposition}
A Evtushenko's \minimax can be a non-stationary point, which does not satisfies Eq.\eqref{eq:necessary_minimax} even if $\phess_{\y\y} f(\x, \y) \prec \zero$.
\end{proposition}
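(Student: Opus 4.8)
The plan is to exhibit a single explicit example, obtained by perturbing the construction already used for Proposition~\ref{prop:surr} (and reused in the non-locality proposition above), namely $f(x,y) = 0.2xy - \cos y$, for which $(0,-\pi)$ is a non-stationary Evtushenko's \minimax. A direct computation shows that this bare example already has $\phess_{yy}f(0,-\pi) = \cos(-\pi) = -1 \prec \zero$, but its Schur complement $\phess_{xx}f - \phess_{xy}f(\phess_{yy}f)^{-1}\phess_{yx}f$ equals $0 + 0.04 > 0$, so it actually satisfies Eq.~\eqref{eq:necessary_minimax}. To additionally violate Eq.~\eqref{eq:necessary_minimax} I would add a concave-in-$x$ term and take
\[
 f(x,y) = 0.2\,xy - \cos y - c\,x^2,
\]
with a constant $c$ chosen in $(0.02,\ 0.2\pi)$, say $c = 0.1$, and consider the point $(0,-\pi)$.

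Next I would verify that $(0,-\pi)$ is an Evtushenko's \minimax by exhibiting a neighborhood $\cW = (-\delta,\delta)\times(-2\pi,2\pi)$, for sufficiently small $\delta>0$, within which $(0,-\pi)$ is a global \minimax in the sense of Definition~\ref{def:globalminimax}. The first inequality is immediate and independent of $c$: $f(0,y) = -\cos y$ is maximized over $(-2\pi,2\pi)$ at $y=\pm\pi$, so $f(0,y)\le f(0,-\pi)=1$. For the second inequality I would reuse the argmax analysis from the non-locality proof: the interior $y$-maximizers of $0.2xy-\cos y$ lie near $\pm\pi$, and the global maximizer jumps from near $-\pi$ (for $x<0$) to near $+\pi$ (for $x>0$), so that $\phi(x)\defeq \max_{y\in(-2\pi,2\pi)} f(x,y) = 1 + 0.2\pi|x| + O(x^2)$. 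The linear-in-$|x|$ term then forces a strict local minimum of $\phi$ at $x=0$, so $\phi(x)\ge 1 = f(0,-\pi)$ on $(-\delta,\delta)$ once $\delta$ is small; hence $(0,-\pi)$ is a global \minimax within $\cW$, i.e., an Evtushenko's \minimax (Definition~\ref{def:Eslocal}).

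Finally I would compute the first- and second-order quantities at $(0,-\pi)$ directly from $f$ (these are exact and independent of the kink argument): $\grad_x f(0,-\pi) = 0.2(-\pi) = -0.2\pi \neq 0$, so the point is non-stationary, violating Proposition~\ref{prop:first_nece}; $\phess_{yy}f(0,-\pi) = \cos(-\pi) = -1 \prec \zero$; and with $\phess_{xx}f = -2c$ and $\phess_{xy}f = \phess_{yx}f = 0.2$, the Schur complement equals $-2c + 0.04 < 0$ for $c>0.02$, violating Eq.~\eqref{eq:necessary_minimax} even though $\phess_{yy}f\prec\zero$. I expect the main obstacle to be the middle step---rigorously establishing the global-minimax-within-$\cW$ property, in particular controlling the exact value of $\phi(x)$ (not merely its leading expansion) to confirm that the kink-induced local minimum at $x=0$ survives the $-cx^2$ perturbation. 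This is essentially the computation already carried out for the non-locality proposition, with $-cx^2$ entering only as a lower-order quadratic correction that leaves the argmax in $y$ (and hence the location of the kink) unchanged.
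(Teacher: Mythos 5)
Your proposal is correct and follows essentially the same route as the paper, which uses $f(x,y) = -0.03x^2 + 0.2xy - \cos(y)$ on $[-1,1]\times[-2\pi,2\pi]$ and the point $(0,-\pi)$ --- i.e., exactly your construction with $c=0.03$ instead of $c=0.1$, the same kink argument inherited from Proposition~\ref{prop:surr} to certify the Evtushenko property, and the same Hessian computation giving a negative Schur complement $-2c+0.04<0$. The ``main obstacle'' you flag (verifying global minimaxity within the neighborhood despite the $-cx^2$ perturbation) is handled in the paper only by reference to the earlier argmax analysis, so your level of rigor matches the paper's.
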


\begin{proof}
Consider the function $f(x, y) = -0.03 x^2 + 0.2 xy - \cos(y)$ in the region $[-1, 1] \times [-2\pi, 2\pi]$ as shown in Figure \ref{fig:surr}.
Follow a similar analysis as in the proof of Proposition \ref{prop:surr}, we can show that $(0, -\pi)$ is a global minimax point, thus a Evtushenko's \minimax. However, we have gradient and Hessian at $(0, -\pi)$:
\begin{equation*}
\grad f = 
\begin{pmatrix}
-0.2\pi\\
0
\end{pmatrix}, \qquad 
\hess f = 
\begin{pmatrix}
-0.06 & 0.2\\
0.2 & -1
\end{pmatrix}
\end{equation*}
Therefore, $(0, -\pi)$ is not a stationary point, and despite $\phess_{\y\y} f(\x, \y) \prec \zero$, Eq.\eqref{eq:necessary_minimax} does not hold.
\end{proof}

% Proposition 1: Evtushenko's definition 
% Proposition 2: local minimax is a strict subset of Evtushenko's definition 
%!TEX root = main.tex

\section{Proofs for Section \ref{sec:minimax}}
\label{sec:localminimax}
In this section, we prove the propositions and theorems presented in Section \ref{sec:minimax}.

\DEFlocal*

\RMdef*
\begin{proof}
Let $\cD_{gen}$ be the set of local minimax points according to Definition \ref{def:localminimax}. Let $\cD_{mon}, \cD_{cts}$ be the sets of points
if we further restrict function $h$ in Definition \ref{def:localminimax} to be monotonic or continuous. We will prove that
$\cD_{gen} \subset \cD_{mon} \subset \cD_{cts} \subset \cD_{gen}$, so that they are all equivalent.

For simplicity of presentation, we denote $\mathcal{P}(\delta, \epsilon)$ as the property that $f(\x^\star, \y^\star) \le g_{\epsilon}(\x)$ for any $\x$ satisfying $\norm{\x - \x^\star} \le \delta$. By its definition, we know if $\mathcal{P}(\delta, \epsilon)$ holds, then for any $\delta' \le \delta$ and any $\epsilon' \ge \epsilon$, $\mathcal{P}(\delta', \epsilon')$ also holds. 

\paragraph{$\cD_{gen} \subset \cD_{mon}$:} If $(\x^\star, \y^\star) \in \cD_{gen}$, we know there exists $\delta_0>0$ and function $h$ with $h(\delta)\rightarrow 0$ as $\delta \rightarrow 0$ such that for any $\delta \in (0, \delta_0]$, that $\mathcal{P}(\delta, h(\delta))$ holds. We can construct a function $h'(\delta) = \sup_{\delta \in (0, \delta]} h(\delta)$ for any $\delta \in (0, \delta_0]$. We can show $h'$ is a monotonically increasing function, and $h'(\delta) \rightarrow 0$ as $\delta \rightarrow 0$. Since $h'(\delta) \ge h(\delta)$ for all $\delta \in (0, \delta_0]$, we know $\mathcal{P}(\delta, h'(\delta))$ also holds, that is, $(\x^\star, \y^\star) \in \cD_{mon}$.

\paragraph{$\cD_{mon} \subset \cD_{cts}$:} For any monotonically increasing function $h: (0, \delta_0] \rightarrow \R$ with $h(\delta)\rightarrow 0$ as $\delta \rightarrow 0$, by a standard argument in analysis, we can show there exists a continuous function $h': (0, \delta_0] \rightarrow \R$ so that $h'(\delta) \ge h(\delta)$ for all $\delta \in (0, \delta_0]$ and $h'(\delta) \rightarrow 0$ as $\delta \rightarrow 0$.
Then, we can use similar arguments as above to finish the proof of this claim.

\paragraph{$\cD_{cts} \subset \cD_{gen}$:} This is immediate by definitions.
\end{proof}

\LEMeqdef*
\begin{proof}
For simplicity of presentation, we denote $\mathcal{P}(\delta, \epsilon)$ as the property that $f(\x^\star, \y^\star) \le g_{\epsilon}(\x)$ for any $\x$ satisfying $\norm{\x - \x^\star} \le \delta$. By its definition, we know if $\mathcal{P}(\delta, \epsilon)$ holds, then for any $\delta' \le \delta$ and any $\epsilon' \ge \epsilon$, $\mathcal{P}(\delta', \epsilon')$ also holds. Also, since $f$ is continuous, if a sequence $\{\epsilon_i\}$ has a limit, then $\mathcal{P}(\delta, \epsilon_i)$ holds for all $i$ implies that $\mathcal{P}(\delta, \lim_{i \rightarrow \infty}\epsilon_i)$ holds.

\paragraph{The ``only if'' direction:} Supposing $(\x^\star, \y^\star)$ is a local minimax point of $f$, then there exists $\delta_0>0$ and a function $h$ satisfying the properties stated in Definition \ref{def:localminimax}. Let $\epsilon_0 = \delta_0$. Since $h(\delta) \rightarrow 0$ as $\delta \rightarrow 0$, we know that for any $\epsilon \in (0, \epsilon_0]$, there exists $\delta \in (0, \delta_0]$ such that $h(\delta) \le \epsilon$. Meanwhile, according to Definition \ref{def:localminimax}, $\mathcal{P}(\delta, h(\delta))$ holds, which implies that $\mathcal{P}(\delta, \epsilon)$ holds. Finally, since $\epsilon \le \epsilon_0 = \delta_0$, we have $g_\epsilon(\x^\star) = f(\x^\star, \y^\star)$; i.e.,  $\y^\star$ achieves the local maximum. Combining this with the fact that $\mathcal{P}(\delta, \epsilon)$ holds, we finish the proof of this direction.

\paragraph{The ``if'' direction:} Since $\y^\star$ is a local maximum of $f(\cdot, \x^\star)$, there exists $\delta_y >0$ such that $g_{\delta_y}(\x^\star) = f(\x^\star, \y^\star)$. Let $\tilde{\epsilon}_0 = \min\{\epsilon_0, \delta_y\}$. By assumption, there exists a function $q$ such that for any $\epsilon \in (0, \tilde{\epsilon}_0]$ we have $q(\epsilon)>0$ and $\mathcal{P}(q(\epsilon), \epsilon)$ holds. Now, define $\delta_0 = q(\tilde{\epsilon}_0)>0$, and define a function $h$ on $(0, \delta_0]$ as follows:
\begin{equation*}
h(\delta) = 
\inf \{\epsilon | \epsilon \in (0, \tilde{\epsilon}] \text{~and~} q(\epsilon) \ge \delta\}.
\end{equation*}
It is easy to verify that when $\delta \in (0, \delta_0]$, the set on the RHS is always non-empty as $\tilde{\epsilon}_0$ is always an element of the set, thus the function $h$ is well-defined on its domain. First, it is clear that $h$ is a monotonically increasing function. Second, we prove $h \rightarrow 0$ as $\delta \rightarrow 0$, this is because for any $\epsilon'\in(0, \tilde{\epsilon}]$, there is a $\delta' = q(\epsilon')$ so that for any $\delta \in (0, \delta']$ we have $h(\delta) \le h(\delta') \le \epsilon'$. Finally, we note for any $\delta \in (0, \delta_0]$, by definition of $h(\delta)$, there exists a sequence $\{\epsilon_i\}$ so that $\lim_{i \rightarrow \infty}\epsilon_i = h(\delta)$, and for any $i$, we have $\epsilon_i \in [h(\delta), \tilde{\epsilon}_0]$ and $q(\epsilon_i) \ge \delta$. By assumption, we know $\mathcal{P}(q(\epsilon_i), \epsilon_i)$ holds, thus $\mathcal{P}(\delta, \epsilon_i)$ holds, which eventually implies $\mathcal{P}(\delta, h(\delta))$ since $\lim_{i \rightarrow \infty}\epsilon_i = h(\delta)$. This finishes the proof.
\end{proof}

% define function $h$ on $(0, +\infty)$ as follows:
% \begin{equation*}
% h(\delta) = 
% \begin{cases}
% \inf \mathcal{D}_\delta &\text{~if~} \mathcal{D}_\delta \neq \emptyset\\
% \tilde{\epsilon} &\text{~otherwise}
% \end{cases},
% \text{~where~}\mathcal{D}_\delta \defeq \{\epsilon | \epsilon \in (0, \tilde{\epsilon}] \text{~and~} q(\epsilon) \ge \delta\}
% \end{equation*}

\PROPnashisminimax*

\begin{proof}
Let $h$ be the constant function $h(\delta) = 0$ for any $\delta$.
If $(\x^\star, \y^\star)$ is a local \psne, then by definition this implies the existence of $\delta_0$ such that for any $\delta \le \delta_0$, and any $(\x, \y)$ satisfying $\norm{\x - \x^\star} \le \delta$ and $\norm{\y - \y^\star} \le \delta$: 
\begin{equation*}
f_2(\x^\star, \y) \le f_2(\x^\star, \y^\star) \le f(\x, \y^\star) \le \max_{\y': \norm{\y' - \y^\star} \le h(\delta)} f_2(\x, \y').
\end{equation*}
\end{proof}

\section{Proofs for Section \ref{sec:prop}}
In this section, we present proofs of the propositions and theorems presented in Section \ref{sec:prop}.

\PROPfirstminimax*

\begin{proof}
Since $\y$ is the local maximum of $f(\x, \cdot)$, we have $\grad_{\y} f(\x, \y) = 0$. Denote local optima $\dy^\star (\dx) \defeq$ $\argmax_{\norm{\dy} \le h(\delta)} f(\x + \dx, \y+\dy)$. By definition we know that $\norm{\dy^\star(\dx)} \le h(\delta) \rightarrow 0$ as $\delta \rightarrow 0$. Thus
\begin{align*}
0 & \le f(\x + \dx, \y + \dy^\star(\dx))  - f(\x, \y) \\
&= f(\x + \dx, \y + \dy^\star(\dx)) - f(\x, \y + \dy^\star(\dx))
+ f(\x, \y + \dy^\star(\dx)) - f(\x, \y)\\
& \le f(\x + \dx, \y + \dy^\star(\dx)) - f(\x, \y + \dy^\star(\dx))\\
& = \grad_\x f(\x, \y + \dy^\star(\dx))\trans \dx + o(\norm{\dx}) \\
&= \grad_\x f(\x, \y)\trans \dx + o(\norm{\dx})
\end{align*} 
holds for any small $\dx$, which implies $\grad_\x f(\x, \y) = 0$.
\end{proof}

\PROPsecondminimax*

\begin{proof}
Denote $\A \defeq \phess_{\x\x} f(\x, \y)$, $\B \defeq \phess_{\y\y} f(\x, \y)$ and $\C \defeq \phess_{\x\y} f(\x, \y)$.
Since $\y$ is the local maximum of $f(\x, \cdot)$, we have $\B \preceq 0$. On the other hand,
\begin{equation*}
f(\x+\dx, \y+\dy) = f(\x, \y) + \frac{1}{2}\dx\trans \A \dx + \dx\trans \C \dy + 
\frac{1}{2}\dy\trans \B \dy + o(\norm{\dx}^2 + \norm{\dy}^2).
\end{equation*}
%Denote $\dy^\star (\dx) \defeq \argmax_{\norm{\dy} \le h(\delta)} f(\x + \dx, \y+\dy)$ and
Since $(\x, \y)$ is a local minimax point, by definition there exists a function $h$ such that Eq.~\eqref{eq:deflocalminimax} holds.
Denote $h'(\delta) = 2 \norm{\B^{-1}\C^\top} \delta$.
%We observe that $\max_{\norm{\dy} \le h(\delta)} f(\x+\delta_\x,\y+\delta_\y) \leq \max_{\norm{\delta_y}\leq \max\left(h(\delta),h'(\delta)\right)} f(\x+\delta_\x,\y+\delta_\y)$.
We note both $h(\delta)$ and $h'(\delta) \rightarrow 0$ as $\delta \rightarrow 0$. 
In case that $\B \prec 0$, we know $\B$ is invertible, and 
% For any $\dx$ satisfying $\C\trans \dx \in \emph{column\_span}(\B)$, 
it is not hard to verify that 
 % and since $\B \prec 0$, for $\delta$ small enough, we have that $f(\x+\delta_\x,\y+\delta_\y)$ is a concave function of $\delta_\y$ for $\norm{\delta_\y} \leq \max\left(h(\delta),h'(\delta)\right)$. Moreover 
$\argmax_{\norm{\dy} \le \max\left(h(\delta),h'(\delta)\right)} f(\x + \dx, \y+\dy)
= -\B^{-1}\C\trans \dx + o(\norm{\dx})$. Since $(\x,\y)$ is a local minimax point, we have
\begin{align*}
	0 & \le \max_{\norm{\dy} \le h(\delta)} f(\x+\delta_\x,\y+\delta_\y) - f(\x, \y) \leq \max_{\norm{\delta_y}\leq \max\left(h(\delta),h'(\delta)\right)} f(\x+\delta_\x,\y+\delta_\y) - f(\x, \y)\\
	& = \frac{1}{2} \dx\trans (\A - \C\B^{-1}\C\trans) \dx + o(\norm{\dx}^2).
\end{align*}
This equation holds for any $\dx$, which finishes the proof.
% Since $(\x,\y)$ is a local minimax point, we have
% \begin{equation*}
% 0 \le \max_{\norm{\dy} \le h(\delta)} f(\x + \dx, \y + \dy)  - f(\x, \y) 
% = \frac{1}{2} \dx\trans (\A - \C\B^{-1}\C\trans) \dx + o(\norm{\dx}^2)
% \end{equation*}
% for any small $\dx$, which implies $\A - \C\B^{-1}\C\trans \succeq 0$.
\end{proof}

\PROPstrictminimax*

\begin{proof}
Again denote $\A \defeq \phess_{\x\x} f(\x, \y)$, $\B \defeq \phess_{\y\y} f(\x, \y)$ and $\C \defeq \phess_{\x\y} f(\x, \y)$. Since $(\x, \y)$ is a stationary point, and $\B \prec 0$, it is clear that $\y$ is the local maximum of $f(\x, \cdot)$. On the other hand, pick 
$\dy^\dagger = \B^{-1}\C\trans \dx$ and letting $h(\delta) = \norm{\B^{-1}\C\trans} \delta$, 
we know that when $\norm{\dx} \le \delta$, we have $\norm{\dy^\dagger} \le h(\delta)$, thus
\begin{align*}
\max_{\norm{\dy} \le h(\delta)} f(\x + \dx, \y+\dy) - f(\x, \y)
\ge& f(\x + \dx, \y+\dy^\dagger) - f(\x, \y)\\
=&\frac{1}{2} \dx\trans (\A - \C\B^{-1}\C\trans) \dx + o(\norm{\dx}^2) >0,
\end{align*}
which finishes the proof.
\end{proof}

\PROPnosurrogate*

\begin{proof}
Consider the function $f(x, y) = 0.2 xy - \cos(y)$ in the region $[-1, 1] \times [-2\pi, 2\pi]$ as shown in Figure \ref{fig:surr}.
Clearly, the gradient is equal to $(0.2y, 0.2x + \sin(y))$. And, for any fixed $x$, there are only two maxima $y^\star(x)$ satisfying $0.2x + \sin(y^\star) = 0$ where $y^\star_1(x) \in (-3\pi/2, -\pi/2)$ and $y^\star_2(x) \in (\pi/2, 3\pi/2)$. On the other hand, $f(x, y^\star_1(x))$ is monotonically decreasing with respect to $x$, while $f(x, y^\star_2(x))$ is monotonically increasing, with $f(0, y^\star_1(0)) = f(0, y^\star_2(0))$ by symmetry. It is not hard to check $y^\star_1(0) = -\pi$ and $y^\star_2(0) = \pi$. Therefore, $(0, -\pi)$ and $(0, \pi)$ are two global solutions of the minimax problem. However, the gradients at both points are not $0$, thus they are not stationary points. By Proposition \ref{prop:first_nece} they are also not local minimax points.
\end{proof}

\LEMlocalMMnoexist*

\begin{proof}
Consider a two-dimensional function $f(x, y) = y^2 - 2xy$ on $[-1, 1] \times [-1, 1]$. Suppose $(x^\star, y^\star)$ is a local minimax point, then at least $y^\star$ is a local maximum of $f(x^\star, \cdot)$, which restricts the possible local minimax points to be within the set $[-1, 1)\times\{1\}$ or $(-1, 1]\times\{-1\}$. It is easy to check that no point in either set is local minimax.
\end{proof}

\THMgoodsurr*

\begin{proof}
Denote $\A \defeq \phess_{\x\x} f(\x, \y)$, $\B \defeq \phess_{\y\y} f(\x, \y)$, $\C \defeq \phess_{\x\y} f(\x, \y)$, $\g_\x \defeq \grad_{\x} f(\x, \y)$ and $\g_\y \defeq \grad_{\y} f(\x, \y)$. Let $(\x,\y)$ be a global minimax point. Since $\y$ is the global argmax of $f(\x, \cdot)$ and locally strongly concave, we know $\g_\y = 0$ and $\B \prec 0$. Let us now consider a second-order Taylor approximation of $f$ around $(\x,\y)$:
    \begin{align*}
        f(\x+\dx,\y+\dy) &= f(\x,\y) + \g_\x\trans\dx + \frac{1}{2}\dx\trans \A \dx + \dx\trans \C \dy + \frac{1}{2}\dy\trans \B \dy + o(\norm{\dx}^2 + \norm{\dy}^2).
        \end{align*}
    %   \iprod{\dx}{\nabla^2_{\x\x}f(\x^*,\y^*)\delta_x} \\ &+ 
    %   2\iprod{\delta_x}{\nabla^2_{\x\y}f(\x^*,\y^*)\delta_y} +
    %   \iprod{\delta_y}{\nabla^2_{\y\y}f(\x^*,\y^*)\delta_y} + \order{\norm{\delta_x}^3} + \order{\norm{\delta_y}^3}.
    % \end{align*}
    Since by hypothesis, $\B \prec 0$, we see that when $\norm{\delta_x}$ is sufficiently small, there is a unique $\dy^\star(\dx)$ so that $\y+\dy^\star(\dx)$
    is a local maximum of $f(\x+\dx, \cdot)$, where $\dy^\star (\dx) = -\B^{-1}\C\trans \dx + o(\norm{\dx})$. It is clear that $\norm{\dy^\star (\dx)} \le (\norm{\B^{-1}\C\trans} + 1) \norm{\dx}$ for sufficiently small $\norm{\dx}$. Let $h(\delta) = (\norm{\B^{-1}\C\trans} + 1) \delta$, we know for small enough $\delta$:
    \begin{equation*}
    f(\x+\dx, \y + \dy^\star(\dx)) = \max_{\norm{\dy} \le h(\delta)} f(\x + \dx, \y + \dy).
    \end{equation*}
    Finally, since by assumption for any $f(\x, \cdot)$ all local maxima are global maxima and $\x$ is the global min of $\max_{\y} f(\x, \y)$, we know:
    \begin{equation*}
    f(\x, \y) \le \max_{\y'} f(\x + \dx, \y') = f(\x+\dx, \y + \dy^\star(\dx)) = \max_{\norm{\dy} \le h(\delta)} f(\x + \dx, \y + \dy),
    \end{equation*}
    which finishes the proof.
\end{proof}
%!TEX root = main.tex

\section{Proofs for Section \ref{sec:gda}}
In this section, we provides proofs for the propositions and theorems presented in Section \ref{sec:gda}.

\PROPstable*

\begin{proof}
Considering the GDA dynamics with step size $\eta$, the Jacobian matrix of this dynamic system is $\I + \eta \J_{\gamma}$ whose eigenvalues are $\{1+\eta\lambda_i\}$. Therefore, $(\x, \y)$ is a strict linearly stable point if and only if $\rho(\I + \eta \J_{\gamma}) <1$, that is $|1 + \eta\lambda_i| <1$ for all $i$. When taking $\eta \rightarrow 0$, this is equivalent to $\Re(\lambda_i) <0 $ for all $i$.
\end{proof}

\PROPlimitnash*
\begin{proof}
\citet{daskalakis2018limit} showed the proposition holds for $1$-GDA. For completeness, here we show how a similar proof goes through for $\gamma$-GDA for general $\gamma$. Let $\epsilon = 1/\gamma$, and denote $\A \defeq \phess_{\x\x} f(\x, \y)$, $\B \defeq \phess_{\y\y} f(\x, \y)$ and $\C \defeq \phess_{\x\y} f(\x, \y)$.

To prove the statement $\ttt{Local\_Nash} \subset \gamma\ttt{-GDA}$, we note that by definition $(\x, \y)$ is a strict linear stable point of $1/\epsilon$-GDA if the real part of the eigenvalues of Jacobian matrix 
\begin{equation*}
J_{\epsilon} \defeq \begin{pmatrix}
-\epsilon \A& -\epsilon \C \\
\C\trans& \B
\end{pmatrix}
\end{equation*}
satisfy that $\Re(\lambda_i) <0$ for all $1\le i \le d_1 + d_2$.
We first note that:
\begin{equation*}
\tilde{J}_{\epsilon} \defeq
\begin{pmatrix}
\B &\sqrt{\epsilon}\C\trans\\
-\sqrt{\epsilon} \C &-\epsilon \A
\end{pmatrix}
= \U J_\epsilon \U^{-1}, 
\text{~where~} \U = 
\begin{pmatrix}
0& \sqrt{\epsilon} \I \\
\I& 0
\end{pmatrix}.
\end{equation*}
Thus, the eigenvalues of $\tilde{J}_\epsilon$ and $J_\epsilon$ are the same. We can also decompose:
\begin{equation*}
\tilde{J}_\epsilon = \P + \Q, \text{~where~}
\P \defeq \begin{pmatrix}
\B &\\
&-\epsilon \A
\end{pmatrix}, 
\Q \defeq \begin{pmatrix} 0 &\sqrt{\epsilon}\C\trans\\
-\sqrt{\epsilon} \C & 0
\end{pmatrix}
\end{equation*}
If $(\x, \y)$ is a strict local \psne, then $\A \succ 0, \B \prec 0$, $\P$ is a negative definite symmetric matrix, and $\Q$ is anti-symmetric matrix: $\Q = -\Q\trans$.
For any eigenvalue $\lambda$ if $\tilde{J}_\epsilon$, assume $\w$ is the associated eigenvector. That is, $\tilde{J}_\epsilon \w = \lambda \w$, also let $\w = \x + i\y$ where $\x$ and $\y$ are real vectors, and $\bw$ be the complex conjugate of vector $\w$. Then:
\begin{align*}
\Re(\lambda) &= [\bw\trans \tilde{J}_\epsilon \w + \w\trans \tilde{J}_\epsilon \bw]/2
= [(\x - i\y)\trans \tilde{J}_\epsilon (\x + i\y) + (\x + i\y)\trans \tilde{J}_\epsilon (\x - i\y)]/2\\
&= \x\trans \tilde{J}_\epsilon \x + \y\trans \tilde{J}_\epsilon \y
= \x\trans \P \x + \y\trans \P \y
+ \x\trans \Q \x + \y\trans \Q \y.
\end{align*}
Since $\P$ is negative definite, we have $\x\trans \P \x + \y\trans \P \y<0$.
Meanwhile, since $\Q$ is antisymmtric $\x\trans \Q \x = 
\x\trans \Q\trans \x = 0$ and $\y\trans \Q \y  = \y\trans \Q\trans \y = 0$. This proves $\Re(\lambda)<0$; that is, $(\x, \y)$ is a strict linear stable point of $1/\epsilon$-GDA.

To prove the statement $\gamma\ttt{-GDA} \not\subset \ttt{Local\_Nash}$, since $\epsilon$ is also fixed, we consider the function $f(x, y) = x^2 + 2\sqrt{\epsilon} xy + (\epsilon/2) y^2$. It is easy to see that $(0, 0)$ is a fixed point of $1/\epsilon$-GDA, with Hessian $A = 2, B = \epsilon, C = 2\sqrt{\epsilon}$. Thus the Jacobian matrix
\begin{equation*}
J_{\epsilon} \defeq \begin{pmatrix}
-2\epsilon& -2\epsilon^{3/2}  \\
2\epsilon^{1/2} & \epsilon
\end{pmatrix}
\end{equation*}
has two eigenvalues $\epsilon(-1 \pm i\sqrt{7})/2$. Therefore, $\Re(\lambda_1) = \Re(\lambda_2) <0 $, which implies $(0, 0)$ is a strict linear stable point. However $B = \epsilon >0$, thus it is not a strict local \psne.
\end{proof}

\PROPlimitminimax*

\begin{proof}
Let $\epsilon = 1/\gamma$, and denote $\A \defeq \phess_{\x\x} f(\x, \y)$, 
$\B \defeq \phess_{\y\y} f(\x, \y)$ and $\C \defeq \phess_{\x\y} f(\x, \y)$.

To prove the first statement, $\ttt{Local\_Minimax} \not \subset \gamma\ttt{-GDA}$, since $\epsilon$ is also fixed, we consider the function $f(x, y) = -x^2 + 2\sqrt{\epsilon} xy - (\epsilon/2) y^2$. It is easy to see $(0, 0)$ is a fixed point of $1/\epsilon$-GDA, and Hessian $A = -2, B = -\epsilon, C = 2\sqrt{\epsilon}$.
It is easy to verify that $B < 0$ and $A - CB^{-1}C = 2 >0$, thus $(0, 0)$ is a local \minimax. Consider, however, the Jacobian matrix of $1/\epsilon$-GDA:
\begin{equation*}
J_{\epsilon} \defeq \begin{pmatrix}
2\epsilon& -2\epsilon^{3/2}  \\
2\epsilon^{1/2} & -\epsilon
\end{pmatrix}.
\end{equation*}
We know the two eigenvalues are $\epsilon(1 \pm i\sqrt{7})/2$. Therefore, $\Re(\lambda_1) = \Re(\lambda_2) >0 $, which implies $(0, 0)$ is not a strict linear stable point.

To prove the second statement, $\gamma\ttt{-GDA}\not \subset \ttt{Local\_Minimax} \cup \ttt{Local\_Maximin}$,  since $\epsilon$ is also fixed, we consider the function $f(\x, \y) = x_1^2 + 2\sqrt{\epsilon} x_1y_1 + (\epsilon/2) y_1^2 - x_2^2/2 + 2\sqrt{\epsilon} x_2y_2 - \epsilon y_2^2$. It is easy to see $(\zero, \zero)$ is a fixed point of $1/\epsilon$-GDA, with Hessian $\A = \diag(2, -1), \B = \diag(\epsilon, -2\epsilon), \C = 2\sqrt{\epsilon}\cdot\diag(1, 1)$. Thus the Jacobian matrix
\begin{equation*}
J_{\epsilon} \defeq \begin{pmatrix}
-2\epsilon& 0 & -2\epsilon^{3/2} & 0\\
0 &\epsilon& 0 & -2\epsilon^{3/2}\\
2\epsilon^{1/2} & 0 &\epsilon & 0\\
0 &2\epsilon^{1/2}& 0 & -2\epsilon
\end{pmatrix}
\end{equation*}
has four eigenvalues $\epsilon(-1 \pm i\sqrt{7})/2$ (each with multiplicity of $2$). Therefore, $\Re(\lambda_i) <0 $ for $1\le i \le 4$, which implies $(\zero, \zero)$ is a strict linear stable point. However, $\B$ is not negative definite, thus $(\zero, \zero)$ is not a strict local \minimax; similarly, $\A$ is also not positive definite, thus $(\zero, \zero)$ is not a strict local maximin point.
\end{proof}

%!TEX root = main.tex

\THMgdamain*

\begin{proof}
For simplicity, denote $\A \defeq \phess_{\x\x} f(\x, \y)$, 
$\B \defeq \phess_{\y\y} f(\x, \y)$ and $\C \defeq \phess_{\x\y} f(\x, \y)$. Let $\epsilon = 1/\gamma$. Considering a sufficiently small $\epsilon$ (i.e., sufficiently large $\gamma$), we know the Jacobian $J$ of $1/\epsilon$-GDA at $(\x, \y)$ is
\begin{equation*}
J_{\epsilon} \defeq \begin{pmatrix}
-\epsilon \A& -\epsilon \C \\
\C\trans& \B
\end{pmatrix}.
\end{equation*}
According to Lemma \ref{lem:eigen_J}, for sufficiently small $\epsilon$, 
$J_\epsilon$ has $d_1 + d_2$ complex eigenvalues $\{\lambda_i\}_{i=1}^{d_1+d_2}$ with following form: 
\begin{align}
&|\lambda_{i} + \epsilon \mu_i| = o(\epsilon) &1 \le i \le d_1 \nn\\
&|\lambda_{i+d_1}- \nu_i| =  o(1), &1 \le i \le d_2, \label{eq:eigen_structure}
\end{align}
where $\{\mu_i\}_{i=1}^{d_1}$ and $\{\nu_i\}_{i=1}^{d_2}$ are the eigenvalues of matrices $\A - \C\B^{-1}\C\trans$ and $\B$ respectively.
Now we are ready to prove the three inclusion statement in Theorem \ref{thm:main} separately. 

First, we have that $\underline{\infty\ttt{-GDA}} \subset \overline{\infty\ttt{-GDA}}$ always hold by their definitions.

Second, for $\ett{Local\_Minimax} \subset \underline{\infty\ttt{-GDA}}$ statement, if $(\x, \y)$ is strict local \minimax, then by its definition:
\begin{equation*}
\B \prec 0, \quad \text{and} \quad \A - \C\B^{-1}\C\trans \succ 0.
\end{equation*}
By Eq.~\eqref{eq:eigen_structure}, we know there exists sufficiently small $\epsilon_0$ such that for any $\epsilon <\epsilon_0$, the real part $\Re(\lambda_i) <0$; i.e., $(\x, \y)$ is a strict linear stable point of $1/\epsilon-$GDA.

Finally, for $\overline{\infty\ttt{-GDA}} \subset \ett{Local\_Minimax} \cup \{(\x, \y)| (\x, \y)$ is stationary and $\B$ is degenerate$\}$ statement, if $(\x, \y)$ is a strict linear stable point of $1/\epsilon-$GDA for sufficiently small $\epsilon$, 
then for any $i$, the real part of eigenvalue of $J_\epsilon$: $\Re(\lambda_i) <0$. By Eq.\eqref{eq:eigen_structure}, if $\B$ is invertible, this implies:
\begin{equation*}
\B \prec 0, \quad \text{and} \quad \A - \C\B^{-1}\C\trans \succeq 0.
\end{equation*}
Suppose that the matrix $\A - \C\B^{-1}\C\trans$ has an eigenvalue $0$. This implies the existence of unit vector $\w$ so that $(\A - \C\B^{-1}\C\trans) \w = 0$. It is not hard to verify that then $J_\epsilon \cdot (\w, -\B^{-1}\C\trans \w)\trans = 0$. This implies $J_{\epsilon}$ has a zero eigenvalue, which contradicts the fact that $\Re(\lambda_i) <0$ for all $i$. Therefore, we can conclude $\A - \C\B^{-1}\C\trans \succ 0$, and $(\x, \y)$ is a strict local \minimax.
\end{proof}

\begin{lemma}\label{lem:eigen_J}
For any symmetric matrix $\A \in \R^{d_1\times d_1}$, $\B \in \R^{d_2\times d_2}$,
and any rectangular matrix $\C \in \R^{d_1\times d_2}$, assume $\B$ is nondegenerate. Then, matrix
\begin{equation*}
\begin{pmatrix}
-\epsilon \A& -\epsilon \C \\
\C\trans& \B
\end{pmatrix}
\end{equation*}
has $d_1 + d_2$ complex eigenvalues $\{\lambda_i\}_{i=1}^{d_1+d_2}$ with following form for sufficient small $\epsilon$: 
\begin{align*}
&|\lambda_{i} + \epsilon \mu_i| = o(\epsilon) &1 \le i \le d_1\\
&|\lambda_{i+d_1}- \nu_i| =  o(1), &1 \le i \le d_2,
\end{align*}
where $\{\mu_i\}_{i=1}^{d_1}$ and $\{\nu_i\}_{i=1}^{d_2}$ are the eigenvalues of matrices $\A - \C\B^{-1}\C\trans$ and $\B$ respectively.
\end{lemma}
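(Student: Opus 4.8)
The plan is to treat this as a singular (two-time-scale) perturbation and to read off the two groups of eigenvalues from the continuity of the roots of a polynomial under variation of its coefficients. Write $\mat{M}_\epsilon$ for the given block matrix and $p(\lambda,\epsilon) \defeq \det(\mat{M}_\epsilon - \lambda \I)$ for its characteristic polynomial; this has degree $d_1+d_2$ in $\lambda$ with coefficients that are polynomials in $\epsilon$. At $\epsilon=0$ the matrix degenerates to $\begin{pmatrix} 0 & 0 \\ \C\trans & \B\end{pmatrix}$, whose characteristic polynomial is $(-\lambda)^{d_1}\det(\B-\lambda\I)$. Hence $p(\cdot,0)$ has a root of multiplicity $d_1$ at the origin together with $d_2$ roots at the eigenvalues $\nu_i$ of $\B$; because $\B$ is nondegenerate, every $\nu_i$ is nonzero and therefore isolated from the cluster at $0$.

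First I would dispatch the $O(1)$ group. Since the coefficients of $p(\cdot,\epsilon)$ converge to those of $p(\cdot,0)$ as $\epsilon\to 0$, the standard fact that the roots of a polynomial depend continuously on its coefficients produces, for all small $\epsilon$, exactly $d_2$ roots of $p(\cdot,\epsilon)$ lying in small disjoint neighborhoods of the $\nu_i$ and converging to them. Labeling these $\lambda_{i+d_1}$ gives $|\lambda_{i+d_1}-\nu_i| = o(1)$.

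For the remaining $d_1$ roots, which must collapse to $0$, I would rescale $\lambda=\epsilon\xi$ to resolve the fast scale. Factoring a common $\epsilon$ out of each of the first $d_1$ rows yields $\det(\mat{M}_\epsilon-\epsilon\xi\I) = \epsilon^{d_1} q(\xi,\epsilon)$, where $q(\xi,\epsilon) \defeq \det\begin{pmatrix} -(\A+\xi\I) & -\C \\ \C\trans & \B-\epsilon\xi\I\end{pmatrix}$. Setting $\epsilon=0$ and taking the Schur complement of the (now nondegenerate) block $\B$ gives $q(\xi,0) = (-1)^{d_1}\det(\B)\,\det(\A-\C\B^{-1}\C\trans+\xi\I)$, a degree-$d_1$ polynomial in $\xi$ with nonzero leading coefficient $(-1)^{d_1}\det(\B)$, whose roots are exactly $\xi=-\mu_i$. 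Because this limit has strictly smaller $\xi$-degree than $q(\cdot,\epsilon)$ for $\epsilon\ne 0$, the degree-drop form of continuity of roots applies: $d_1$ roots $\xi_i(\epsilon)$ converge to the $-\mu_i$ while the extra $d_2$ roots escape to infinity (these last correspond, via $\xi=\lambda/\epsilon$, precisely to the $O(1)$ group already accounted for). Putting $\lambda_i=\epsilon\xi_i(\epsilon)$ then gives $|\lambda_i+\epsilon\mu_i| = \epsilon\,|\xi_i(\epsilon)+\mu_i| = o(\epsilon)$, as claimed.

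The main obstacle I anticipate is the bookkeeping of the continuity-of-roots argument across the degree drop: I must verify that exactly $d_1$ roots of $q(\cdot,\epsilon)$ stay bounded (hence converge to the $-\mu_i$) while the other $d_2$ escape, and that these two groups partition, with no double counting, the $O(\epsilon)$ and $O(1)$ eigenvalues of $\mat{M}_\epsilon$. This is where nondegeneracy of $\B$ does the essential work twice: it makes the Schur complement well-defined so the $\mu_i$ are correctly identified, and it forces $\nu_i\ne 0$ so that the two clusters are genuinely separated and the total count $d_1+d_2$ splits cleanly.
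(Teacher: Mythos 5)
Your proposal is correct and follows essentially the same route as the paper's proof: both identify the $O(1)$ eigenvalues from the $\epsilon=0$ limit of the characteristic polynomial via continuity of roots, then rescale $\lambda=\epsilon\theta$, factor out $\epsilon^{d_1}$, and use a Schur-complement factorization of the limiting determinant together with the degree-drop form of root continuity (the paper's Lemma~\ref{lem:stability_roots}) to locate the $d_1$ fast eigenvalues near $-\epsilon\mu_i$. The bookkeeping point you flag---that exactly $d_1$ roots stay bounded after rescaling, with no double counting against the $O(1)$ cluster---is handled in the paper the same way, by combining the root count from the unrescaled polynomial with the nondegeneracy of $\B$.
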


\begin{proof}
By definition of eigenvalues, $\{\lambda_i\}_{i=1}^{d_1+d_2}$ are the roots of characteristic polynomial:
\begin{equation*}
p_{\epsilon}(\lambda) \defeq \det
\begin{pmatrix}
\lambda\I+\epsilon \A& \epsilon \C \\
-\C\trans& \lambda\I - \B
\end{pmatrix}.
\end{equation*}
We can expand this polynomial as:
\begin{equation*}
p_{\epsilon}(\lambda) = p_0(\lambda) + \sum_{i=1}^{d_1+d_2} \epsilon^{i} p_i(\lambda), 
\qquad p_0(\lambda) =\lambda^{d_1} \cdot \det(\lambda \I - \B).
\end{equation*}
Here, $p_i$ are polynomials of order at most $d_1 + d_2$. It is clear that the roots of $p_0$ are zero (with multiplicity $d_1$) and $\{\nu_i\}_{i=1}^{d_2}$.
According to Lemma \ref{lem:stability_roots}, we know the roots of $p_\epsilon$ satisfy:
% \cnote{a bit more careful here}
\begin{align}
&|\lambda_{i}| = o(1) &1 \le i \le d_1 \label{eq:small_roots}\\
&|\lambda_{i+d_1}- \nu_i| =  o(1), &1 \le i \le d_2 \nn.
\end{align}
Since $\B$ is non-degenerate, we know when $\epsilon$ is small enough, $\lambda_{1} \ldots \lambda_{d_1}$ are very close to zero while $\lambda_{d_1+1} \ldots \lambda_{d_1+d_2}$ have modulus at least $\Omega(1)$. To provide the sign information of the first $d_1$ roots, we proceed to a lower-order characterization.

Reparametrizing $\lambda  =\epsilon \theta$, we have:
\begin{equation*}
p_{\epsilon}(\epsilon\theta) = \det
\begin{pmatrix}
\epsilon\theta\I+\epsilon \A& \epsilon \C \\
-\C\trans& \epsilon\theta\I - \B
\end{pmatrix}
= \epsilon^{d_1} \det
\begin{pmatrix}
\theta\I+ \A&  \C \\
-\C\trans& \epsilon\theta\I - \B
\end{pmatrix}.
\end{equation*}

Therefore, we know $q_{\epsilon}(\theta) \defeq p_{\epsilon}(\epsilon\theta)/\epsilon^{d_1}$
is still a polynomial, and has a polynomial expansion:
\begin{equation*}
q_{\epsilon}(\theta) = q_0(\theta) + \sum_{i=1}^{d_2} \epsilon^{i} q_i(\lambda), 
\qquad q_0(\theta) =\det
\begin{pmatrix}
\theta\I+ \A&  \C \\
-\C\trans& - \B
\end{pmatrix}
\end{equation*}
It is also clear that the polynomials $q_\epsilon$ and $p_\epsilon$ have the same roots up to $\epsilon$ scaling. Furthermore, we have following factorization:
\begin{equation*}
\begin{pmatrix}
\theta\I+ \A& \C \\
-\C\trans& -\B
\end{pmatrix}
=\begin{pmatrix}
\theta\I+ \A - \C  \B^{-1} \C\trans & \C \\
0 & -\B
\end{pmatrix}
\begin{pmatrix}
\I & 0 \\ \B^{-1}\C\trans & \I 
\end{pmatrix}.
\end{equation*}
Since $\B$ is non-degenerate, we have $\det(\B) \neq 0$, and 
\begin{equation*}
q_0(\theta) = (-1)^{d_2}\det(\B)\det(\theta\I+ \A - \C  \B^{-1} \C\trans)
\end{equation*}
$q_0$ is $d_1$-order polynomial having roots $\{\mu_i\}_{i=1}^{d_1}$, which are the eigenvalues of matrices $\A - \C\B^{-1}\C\trans$. According to Lemma \ref{lem:stability_roots}, we know $q_{\epsilon}$ has at least $d_1$ roots so that $|\theta_i + \mu_i| \le o(1)$. This implies that $d_1$ roots of $p_\epsilon$ are such that:
\begin{equation*}
|\lambda_{i} + \epsilon \mu_i| = o(\epsilon) \qquad 1 \le i \le d_1
\end{equation*}
By Eq.~\eqref{eq:small_roots}, we know $p_\epsilon$ has exactly $d_1$ roots which are of $o(1)$ scaling. This finishes the proof.
\end{proof}

% When $\lambda \I - \B$ is invertible, following holds:
% \begin{equation*}
% \begin{pmatrix}
% \lambda\I+\epsilon \A& \epsilon \C \\
% -\C\trans& \lambda\I - \B
% \end{pmatrix}
% =\begin{pmatrix}
% \lambda\I+\epsilon \A + \epsilon \C (\lambda\I - \B)^{-1} \C\trans & \epsilon \C \\
% 0 & \lambda\I - \B
% \end{pmatrix}
% \begin{pmatrix}
% \I & 0 \\ -(\lambda\I - \B)^{-1}\C\trans & \I 
% \end{pmatrix}
% \end{equation*}
% This implies the characteristic polynomial
% \begin{equation*}
% p_{\epsilon}(\lambda)
% = \det(\lambda \I - \B) \det(\lambda\I+\epsilon \A + \epsilon \C (\lambda\I - \B)^{-1} \C\trans)
% \end{equation*}
% After rescaling and normalization, we can define a polynomial $q_\epsilon$:
% \begin{equation*}
% q_{\epsilon}(\theta) \defeq p_{\epsilon}(\epsilon\theta)/\epsilon^{d_1}
% =\det(\epsilon\theta \I - \B) \det(\theta\I+ \A +  \C (\epsilon\theta\I - \B)^{-1} \C\trans)
% \end{equation*}
% It is clear that the roots of $q_\epsilon$ and $p_\epsilon$ are the same up to a scaling $\epsilon$. Since $\B$ is nondegenerate, we know for any fixed $\theta$, when $\epsilon$ is sufficient small, $(\epsilon\theta\I - \B)$ is always invertible. Therefore, 

\begin{lemma}[Continuity of roots of polynomials, \cite{zedek1965continuity}]
\label{lem:stability_roots}
Given a polynomial $p_n(z)\defeq \sum_{k=0}^n a_k z^k$, $a_n \neq 0$, an integer $m\ge n$ and a number $\epsilon>0$, there exists a number $\delta >0$ such that whenever the $m+1$ complex numbers $b_k, 0\le k \le m$, satisfy the inequalities
\begin{equation*}
|b_k - a_k| < \delta \text{~~for~~} 0\le k \le n, 
\text{~~and~~}
|b_k| <\delta \text{~~for~~} n+1\le k\le m,
\end{equation*}
then the roots $\beta_k, 1\le k\le m$ of the polynomial $q_m(z) \defeq \sum_{k=0}^m b_k z^k$ can be labeled in such a way as to satisfy, with respect to the zeros $\alpha_k, 1\le k\le n$ of $p_n(z)$, the inequalities
\begin{equation*}
|\beta_k - \alpha_k| <\epsilon \text{~~for~~} 1\le k\le n,
\text{~~and~~}
|\beta_k| >1/\epsilon \text{~~for~~} n+1\le k\le m.
\end{equation*}
% of order $d$ has $d$ complex roots $\{\lambda_i\}_{i=1}^d$.
% Then for any arbitrary polynomial $g(\lambda)$, and sufficient small $\epsilon$, the perturbed polynomial $p_\epsilon(\lambda) = p(\lambda) + \epsilon g(\lambda)$ at least has $d$ roots 
% $\{\lambda_i'\}_{i=1}^d$ satisfies:
% \begin{equation*}
% |\lambda_i' - \lambda_i| = o(1)
% \end{equation*}
\end{lemma}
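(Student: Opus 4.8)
The plan is to prove Lemma \ref{lem:stability_roots} via Rouch\'e's theorem (equivalently, the argument principle), treating its two conclusions separately: the $n$ roots of $q_m$ that remain within $\epsilon$ of the zeros of $p_n$, and the $m-n$ roots that escape to modulus larger than $1/\epsilon$. The crucial structural point throughout is the \emph{order} in which constants are chosen: given $\epsilon$, I first fix a large radius $R$, and only afterwards choose $\delta$ small enough, so that the spurious high-degree coefficients $b_{n+1},\dots,b_m$—even though they multiply the large quantities $z^{n+1},\dots,z^m$ on a circle of radius $R$—still produce a negligible perturbation. (An alternative route for the escaping roots is to pass to the reversed polynomial $w\mapsto w^m q_m(1/w)$, whose small low-order coefficients push $m-n$ of its roots toward $w=0$; I would keep the direct Rouch\'e argument as the primary one.)

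First I would localize the bounded roots. Let $\alpha$ be a distinct zero of $p_n$ of multiplicity $\mu$, and draw a circle $\Gamma_\alpha$ of radius less than $\epsilon$ centered at $\alpha$, small enough to enclose no other zero of $p_n$; then $|p_n|$ attains a positive minimum $c_\alpha>0$ on $\Gamma_\alpha$. Since all such circles lie in a fixed bounded region $|z|\le M_0$, for $z$ on any $\Gamma_\alpha$ the hypotheses give the uniform bound $|q_m(z)-p_n(z)|\le \delta\sum_{k=0}^{m}M_0^{k}$, using $|b_k-a_k|<\delta$ for $k\le n$ and $|b_k|<\delta$ for $k>n$. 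Choosing $\delta$ below $\min_\alpha c_\alpha$ divided by $\sum_{k=0}^m M_0^k$ forces $|q_m-p_n|<|p_n|$ on each $\Gamma_\alpha$, so Rouch\'e's theorem gives exactly $\mu$ zeros of $q_m$ inside $\Gamma_\alpha$. Summing over the distinct zeros of $p_n$ shows $q_m$ has exactly $n$ zeros within distance $\epsilon$ of $\{\alpha_k\}$, which can then be labeled $\beta_1,\dots,\beta_n$ with $|\beta_k-\alpha_k|<\epsilon$.

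Next I would handle the escaping roots. Fix $R>\max\bigl(1/\epsilon,\ \max_k|\alpha_k|+\epsilon\bigr)$, so the zeros found above lie strictly inside $|z|<R$. On $|z|=R$ one has $|p_n(z)|\ge |a_n|R^{n}-\sum_{k<n}|a_k|R^{k}\ge \tfrac12|a_n|R^{n}$ for $R$ large, whereas $|q_m(z)-p_n(z)|\le \delta(m+1)R^{m}$. Because $R$ is now \emph{fixed}, taking $\delta<|a_n|/\bigl(2(m+1)R^{m-n}\bigr)$ yields $|q_m-p_n|<|p_n|$ on $|z|=R$, so by Rouch\'e's theorem $q_m$ has exactly $n$ zeros in $|z|<R$—precisely the $\beta_1,\dots,\beta_n$ already located—and none on the circle. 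Hence every remaining zero of $q_m$ satisfies $|z|\ge R>1/\epsilon$; labeling these $\beta_{n+1},\dots$ (and, when $\deg q_m<m$, assigning the leftover labels to the ``roots at infinity'' with modulus declared larger than $1/\epsilon$) gives $|\beta_k|>1/\epsilon$ for $n+1\le k\le m$. Setting $\delta$ to the minimum of the two thresholds above completes the argument.

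The step I expect to be the main obstacle is the escaping-roots count: one must dominate a perturbation of size $\sim\delta R^{m}$ on $|z|=R$ by a baseline $|p_n|\sim|a_n|R^{n}$ with $m>n$, which succeeds only because $R$ is chosen before $\delta$; getting the quantifier order backwards breaks the estimate. A secondary subtlety is the degeneracy $\deg q_m<m$ (some leading $b_k$ vanish): then $q_m$ genuinely has fewer than $m$ finite zeros, and one must adopt the convention that the deficit is counted as roots at infinity, so that the total of $m$ labeled roots remains consistent with the stated inequalities.
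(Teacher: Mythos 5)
Your argument is correct. One remark on context: the paper does not prove this lemma at all --- it is imported verbatim from \citet{zedek1965continuity} as a known fact --- so there is no in-paper proof to compare against; what you have supplied is essentially the classical Rouch\'e/Hurwitz proof of continuity of roots, which is also the substance of Zedek's original argument (Zedek treats the escaping roots via the reversed polynomial $w\mapsto w^m q_m(1/w)$, the alternative you mention in passing, whereas you handle them directly on a large circle $|z|=R$; the two are equivalent). The genuinely delicate point --- that the perturbation $\sim\delta R^m$ on $|z|=R$ must be dominated by $|p_n|\sim|a_n|R^n$ with $m>n$, which works only because $R$ is fixed before $\delta$ --- is exactly right and is stated explicitly, as is the convention for roots at infinity when $\deg q_m<m$. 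Two trivial housekeeping items you may as well make explicit: take the circles $\Gamma_\alpha$ pairwise disjoint (radius less than half the minimal gap between distinct zeros of $p_n$, as well as less than $\epsilon$) so that the Rouch\'e counts add up to exactly $n$ without double counting, and normalize $R\ge 1$ so that $R^k\le R^m$ in the bound $|q_m-p_n|\le\delta(m+1)R^m$. Neither affects the validity of the proof. Note also that in the paper's actual applications (Lemma \ref{lem:eigen_J}) the perturbed polynomials are characteristic polynomials or explicit determinants with nonvanishing leading coefficient, so the roots-at-infinity degeneracy never arises there.
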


% \cnote{1. g has higher degree, 2. addition is $\epsilon$ + $\epsilon^2$ + ..., 3. the multiplicity of roots is meaningful.}
%!TEX root = main.tex

\section{Proofs for Section \ref{sec:oracle}}

In this section, we prove Theorem \ref{thm:max_oracle}.

\begin{proof}[Proof of Theorem \ref{thm:max_oracle}]
	The proof of this theorem mainly follows the proof of Theorem~2.1 from~\cite{davis2018stochastic}. The only difference is that $y_t$ in Algorithm~\ref{algo:GDMO} is only an approximate maximizer and not exact maximizer. However, the proof goes through fairly easily with an additional error term. 
	
	We first note an important equation for the gradient of Moreau envelope.
	\begin{align}
	\nabla \phi_{\lambda}(\x) = \lambda^{-1}\left(\x - \argmin_{\xtilde} \left(\phi(\xtilde) + \frac{1}{2\lambda} \norm{\x - \xtilde}^2\right)\right). \label{eq:moreaugrad}
	\end{align}
	We also observe that since $f(\cdot)$ is $\ell$-smooth and $y_t$ is an approximate maximizer for $\x_t$, we have that any $\x_{t}$ from Algorithm~\ref{algo:GDMO} and $\xtilde$ satisfy
	\begin{align}
		\phi(\xtilde) \geq f(\xtilde, \y_t) & \geq f(\x_t,\y_t) + \iprod{\nabla_{\x} f(\x_t,\y_t)}{\xtilde-\x_t} - \frac{\ell}{2} \norm{\xtilde - \x_t}^2 \nonumber \\
		& \geq \phi(\x_t) - \epsilon + \iprod{\nabla_{\x} f(\x_t,\y_t)}{\xtilde-\x_t} - \frac{\ell}{2} \norm{\xtilde - \x_t}^2.\label{eq:subgrad}
	\end{align}
	Let $\xhat_t \defeq \argmin_{\x} \phi(\x) + {\ell}{} \norm{\x - \x_t}^2$. We have:
	\begin{align*}
		\phiell\left(\x_{t+1}\right) &\leq \phi(\xhat_t) + {\ell}{} \norm{\x_{t+1} - \xhat_t}^2 \\
		&\leq \phi(\xhat_t) + {\ell}{} \norm{\x_{t} - \eta \nabla_{\x} f(\x_t,\y_t) - \xhat_t}^2 \\
		&\leq \phi(\xhat_t) + {\ell}{} \norm{\x_{t} - \xhat_t}^2 + {2 \ell\eta}{}\iprod{\nabla_{\x} f(\x_t,\y_t) }{\xhat_t - \x_{t}} + {\eta^2\ell}{} \norm{\nabla_{\x} f(\x_t,\y_t)}^2\\
		&\leq \phiell(\x_t) + {2 \eta\ell}{}\iprod{\nabla_{\x} f(\x_t,\y_t) }{\xhat_t - \x_{t}} + {\eta^2\ell}{} \norm{\nabla_{\x} f(\x_t,\y_t)}^2\\
		&\leq \phiell(\x_t) + {2 \eta\ell}{}\left(\phi(\xhat_t) - \phi(\x_t) + \epsilon + \frac{\ell}{2} \norm{\x_{t} - \xhat_t}^2\right) + {\eta^2\ell}{} L^2,
	\end{align*}
	where the last line follows from~\eqref{eq:subgrad}. Taking a telescopic sum over $t$, we obtain
	\begin{align*}
		\phiell(\x_{T}) \leq \phiell(\x_0) + 2 \eta \ell \sum_{t=0}^{T} \left(\phi(\xhat_t) - \phi(\x_t) + \epsilon + \frac{\ell}{2} \norm{\x_{t} - \xhat_t}^2\right) + {\eta^2 \ell L^2 T}{}
	\end{align*}
	Rearranging this, we obtain
	\begin{align}
		\frac{1}{T+1} \sum_{t=0}^{T} \left(\phi(\x_t) - \phi(\xhat_t) - \frac{\ell}{2} \norm{\x_{t} - \xhat_t}^2\right) \leq \epsilon + \frac{\phiell(\x_0) - \min_{\x} \phi(\x)}{2 \eta \ell T} + \frac{\eta L^2}{2}.\label{eq:avg-subgrad}
	\end{align}
	Since $\phi(\x) + \ell \norm{\x - \x_t}^2$ is ${\ell}$-strongly convex, we have
	\begin{align*}
		&\phi(\x_t) - \phi(\xhat_t) - \frac{\ell}{2} \norm{\x_{t} - \xhat_t}^2 \\ &\geq \phi(\x_t) + \ell \norm{\x_t - \x_t}^2 - \phi(\xhat_t) - \ell \norm{\xhat_t - \x_t}^2 + \frac{\ell}{2} \norm{\x_{t} - \xhat_t}^2 \\
		&= \left(\phi(\x_t) + \ell \norm{\x_t - \x_t}^2 - \min_{\x} \phi(\x) + \ell \norm{\x - \x_t}^2 \right) + \frac{\ell}{2} \norm{\x_{t} - \xhat_t}^2 \\
		&\geq {\ell}{} \norm{\x_{t} - \xhat_t}^2 = \frac{1}{4\ell} \norm{\nabla \phiell(\x_{t})}^2,
	\end{align*}
	where we used~\eqref{eq:moreaugrad} in the last step. Plugging this in~\eqref{eq:avg-subgrad} proves the result.
\end{proof}

\end{document}